\newtheorem{theorem}{Theorem}
\newtheorem{lemma}[theorem]{Lemma}
\theoremstyle{definition} 
\newtheorem{remark}{Remark}
\newtheorem{assumption}{Assumption}
\newtheorem{example}{Example}
\newtheorem{problem}{Problem}
\newcommand{\ud}{\mathrm d}
\newcommand{\cF}{\mathcal F}
\newcommand{\cL}{\mathcal L}
\newcommand{\cT}{\mathcal T}
\newcommand{\cU}{\mathcal U}
\newcommand{\cP}{\mathcal P}
\def\bx{\bm{x}} 
\def\bX{\bm{X}}  
\def\R{{\mathbb R}}
\def\P{{\mathsf P}}
\def\Q{{\mathsf Q}}
\def\E{{\mathsf E}}
\def\C{{\mathbb C}}
\renewcommand{\mid}{\,|\,}
\newcommand{\Law}{\mathcal{L}aw}
\newcommand{\KL}{\mathcal{D}_{\mathrm{KL}}}
\newcommand{\ind}{\mathds{1}}%% bbm conflicts with AISTATS type 3 font
\def\Appx{Appendix} 
\def\subT{0 \leq t \leq T}
\def\muT{\mu_{\mathrm{obj}}}
\def\muR{\mu_{\mathrm{ref}}}
\def\DT{\mathcal{D}_{\mathrm{obj}}}
\def\DR{\mathcal{D}_{\mathrm{ref}}}
\def\TGT{\mathrm{obj}}
\def\REF{\mathrm{ref}} 
\numberwithin{equation}{section}
\title{Soft-constrained Schr\"{o}dinger Bridge: a Stochastic Control Approach} 
\author[J.\ Garg, X.\ Zhang, Q.\ Zhou]{Jhanvi Garg$^{*}$, Xianyang Zhang$^{*}$, Quan Zhou$^{*, \dagger}$}
\thanks{$^{*}$Department of Statistics, Texas A\&M University, College Station, TX 77843, USA}
\thanks{$^{\dagger}$Corresponding author (email: quan@stat.tamu.edu)}
\begin{document}
\begin{abstract}
Schr\"{o}dinger bridge can be viewed as a continuous-time stochastic control problem where the goal is to find an optimally controlled diffusion process whose terminal distribution coincides with a pre-specified target distribution.  
We propose to generalize this problem by allowing the terminal distribution to differ from the target but penalizing the Kullback-Leibler divergence between the two distributions. 
We call this new control problem soft-constrained Schr\"{o}dinger bridge (SSB). 
The main contribution of this work is a theoretical derivation of the solution to SSB, which shows that the terminal distribution of the optimally controlled process is a geometric mixture of the target and some other distribution.  
This result is further extended to a time series setting. 
One application  is the development of robust generative diffusion models. We propose  a score matching-based algorithm for sampling from geometric mixtures and showcase its use via a numerical example for the MNIST data set.
\end{abstract}
\maketitle

\section{Introduction}\label{sec:intro}

\subsection{Schr\"{o}dinger Bridge and Its Applications}
Let $X = (X_t)_{\subT}$ be a diffusion process over the finite time interval $[0, T]$ with initial distribution $\mu_0$. 
Schr\"{o}dinger bridge seeks an optimal steering of $X$ towards a pre-specified terminal distribution $\mu_T$ such that the resulting controlled process is closest to $X$ in terms of Kullback-Leibler (KL) divergence~\citep{schrodinger1931umkehrung, schrodinger1932theorie}. 
Under certain regularity conditions, the optimally controlled process is another diffusion with the same diffusion coefficients as $X$ but an additional drift term. 
This result  has been obtained via different approaches and at varying levels of generality, and among the seminal works are~\cite{fortet1940resolution, beurling1960automorphism, jamison1975markov, follmer1988random, dai1991stochastic}. For comprehensive reviews detailing the historical development, we refer readers to~\citet{leonard2013survey} and~\citet{chen2021stochastic}. 

The recent generative modeling literature has seen a surge in the use of Schr\"{o}dinger bridge. 
In these applications, $\mu_0$ is typically some distribution that is easy to sample from, and $\mu_T$ is the unknown distribution of a given data set. 
By numerically approximating the solution to the Schr\"{o}dinger bridge problem, one can generate samples from $\mu_T$ (i.e., synthetic data points that resemble the original data set). 
One such algorithm is presented by~\citet{debortoli2023diffusion} and~\citet{vargas2021solving}, who proposed to calculate the Schr\"{o}dinger bridge by approximating the iterative proportional fitting procedure~\citep{deming1940least} (the method of~\citet{debortoli2023diffusion} estimates the drift using score matching and neural networks while that of~\citet{vargas2021solving} uses maximum likelihood and Gaussian processes). 
Concurrently, \citet{pmlr-v139-wang21l} developed a two-stage method where an auxiliary Schr\"{o}dinger bridge is run first to generate samples from a smoothed version of $\mu_T$, and the second Schr\"{o}dinger bridge transports these samples towards $\mu_T$.   
Both approaches generalize the denoising diffusion model methods of~\citet{ho2020denoising} and~\citet{song2021scorebased}. 
%More recently,~\citet{tsb} proposed a novel Schr\"{o}dinger bridge-based generative model for time series data. 
Some other recent developments in this area include~\citet{chen2021likelihood, song2022applying,  peluchetti2023diffusion, richter2023improved, winkler2023score, tsb, vargas2024transport}. 

Though not the focus of this work, Schr\"{o}dinger bridge sampling methods can also be used when  samples from $\mu_T$ are not available but $\mu_T$ is known up to a normalizing constant;
see, e.g.,~\citet{huang2021schrodinger, zhang2021path, vargas2022denoising}, and see~\citet{heng2024diffusion} for a recent review. 
For the connections between Schr\"{o}dinger bridge, optimal transport and variational inference, see, e.g.,~\citet{chen2016relation, chen2021optimal,tzen2019theoretical}.

\subsection{Overview of This Work} 
The main contribution of this paper is the theoretical development of a generalized Schr\"{o}dinger bridge problem, which we call soft-constrained Schr\"{o}dinger bridge (SSB). 
We take the stochastic control approach that was employed by~\citet{mikami1990variational, dai1991stochastic, pra1990markov} for studying the original Schr\"{o}dinger bridge problem (see Problem~\ref{problem0}). 
In SSB, the terminal distribution of the controlled process does not need to precisely match $\mu_T$ but needs to be close to $\mu_T$ in terms of KL divergence. 
Formally, SSB differs from the original problem in that we replace the hard constraint on the terminal distribution with an additional cost term, parameterized by $\beta$, in the objective function to be minimized (see Problem~\ref{problem1}). 
A larger $\beta$ forces the terminal distribution of the controlled process to be closer to $\mu_T$. 
We rigorously find the solution to SSB and the expression for the drift term of the optimally controlled process.  
We show that SSB generalizes Schr\"{o}dinger bridge in the sense that as $\beta \to \infty$, the solution to the former coincides with the solution to the latter. 
An important implication of our results is that the terminal distribution of the controlled process should be a geometric mixture of $\mu_T$ and some other distribution; when $\mu_0$ is a Dirac measure, the other distribution is $\Law(X_T)$ (i.e., the terminal distribution of the uncontrolled process).  
We further extend our results to a time series generalization of SSB, where we are interested in modifying the joint distribution of $(X_{t_1}, \dots, X_{t_N})$ for $0 < t_1 < \cdots < t_N = T$.  
 
SSB can be used as a theoretical foundation for developing more flexible and robust sampling methods. 
First, when the KL divergence between $\mu_T$ and $\Law(X_T)$ is infinite, Schr\"{o}dinger bridge does not admit a solution, while SSB always does. A toy example illustrating the  consequences of this result is given in Example~\ref{ex:cauchy}.
More importantly, $\beta$ acts as a regularization parameter preventing the algorithm from overfitting to  $\mu_T$, which is crucial for some generative modeling tasks such as fine-tuning  with limited data~\citep{moon2022fine}.  
In such applications, $\mu_T$ contains information from a small or noisy data set, and one wants to improve the sample quality by harnessing knowledge from a large high-quality reference data set. To achieve this, we can train the uncontrolled process $X$ in SSB using the reference data set and then tune the value of $\beta$. %so that it specifies a  default procedure for generating samples. 
We present a simple normal mixture example illustrating the effect of $\beta$ (see Example~\ref{ex:normal.mix}). 
For a more realistic example in generative modeling of images,  we use the MNIST data set and consider the task of generating new images of digit 8. We assume that the training data set only has 50 noisy images of digit 8, but we can use the data set of all the other digits as reference. 
As suggested by our theoretical findings, we can train a Schr\"{o}dinger bridge targeting a geometric mixture of the distributions of the two data sets. 
Such a Schr\"{o}dinger bridge cannot be learned by existing methods, and to address this, we propose a new score matching algorithm that utilizes importance sampling. 
We show that this approach yields high-quality images of digit 8 when $\beta$ is properly chosen.

The paper is structured as follows. In Section \ref{sec:relax.SB}, we present the stochastic control formulation of the SSB problem, and we derive its solution when $\mu_0$ is a Dirac measure. 
The solution to SSB for general initial conditions is obtained in Section \ref{sec:sol.RSB}, which involves solving a generalized Schr\"{o}dinger system. Section \ref{sec:tSB} extends the results  to the time series setting. In Section \ref{sec:sim}, we present a new algorithm for robust generative modeling and demonstrate its use via the MNIST data set. Proofs and auxiliary results are deferred to \Appx{}.

\subsection{Related Literature}
Our development of SSB builds upon the work of~\citet{dai1991stochastic}, which formulates Schr\"{o}dinger bridge as a stochastic control problem and derives the solution using the logarithmic transformation technique pioneered by Fleming~\citep{fleming1977exit, fleming2005logarithmic, fleming2012deterministic} and the result of~\citet{jamison1975markov}. 
The time series SSB problem is a generalization of the work of~\citet{tsb}, who extended the original Schr\"{o}dinger bridge problem to the time series setting but only considered the special case where $\mu_0$ is a Dirac measure. 
\citet{pavon1991free, blaquiere1992controllability} adopted an alternative stochastic control approach to studying Schr\"{o}dinger bridge, which was rooted in the same logarithmic transformation and also considered in~\citet{tzen2019theoretical, berner2022optimal}.  This approach can be applied to the SSB problem as well, but it requires the use of verification theorem.  
 
Motivated by robust network routing, a discrete version of the SSB problem was proposed and solved in~\cite{chen2019relaxed}, where $X$ is a discrete-time non-homogeneous Markov chain with finite state space.  The techniques used in this paper are very different, and to our knowledge, the continuous-time SSB problem has not been addressed in the literature.

\section{Problem Formulation}\label{sec:relax.SB}
Let $\mu_0, \mu_T$ be two probability distributions on $\R^d$ such that $\int x^2 \mu_0 (\ud x) < \infty$ and  $\mu_T \ll \lambda$, where $\lambda$ denotes the Lebesgue measure. Denote the density of $\mu_T$ by $f_T = \ud \mu_T / \ud \lambda$. 
Let $(\Omega, \mathcal{F}, \P)$ be a probability space, on which we define a standard $d$-dimensional Brownian motion $W = ( W_t )_{t \geq 0}$ and a random vector $\xi$ that is independent of $W$ and has distribution $\mu_0$. 
We will always use $X = (X_t)_{\subT}$ to denote a weak solution to the following stochastic differential equation (SDE) 
\begin{equation}\label{eq:udp} 
\begin{aligned}
X_0 = \xi, \text{ and } \ud X_t= b(X_t, t) \ud t + &\sigma \ud W_t
\end{aligned}
\end{equation} 
for  $t \in [0, T]$ ,where $b \colon \R^d \times [0, T] \rightarrow \R^d$ and $\sigma \in (0, \infty)$.   
Given a control $u = (u_t)_{\subT}$, define the controlled  diffusion process by $X_0^u = \xi$ and 
\begin{equation}\label{eq:CSDE}
\ud X^u_t=\left[ b (X^u_t, t )+ u_t \right] \ud t + \sigma \ud W_t. 
\end{equation} 
We say a control $u$ is admissible if (i) $u_t$ is measurable with respect to $\sigma( (X^u_s)_{0 \leq s \leq t} )$,  (ii) the SDE~\eqref{eq:CSDE} admits a weak solution, and (iii)  $\E \int_0^T \|u_t\|^2 \ud t < \infty$, where $\| \cdot \|$ denotes the $L^2$ norm. Denote the set of all admissible controls by $\cU$.  
Note that the initial distributions of both $X$ and $X^u$ are always fixed to be $\mu_0$. 
For ease of presentation, throughout the paper we adopt the following regularity assumption on $b$, which was also used by~\citet{jamison1975markov,dai1991stochastic}:   
\begin{assumption}\label{ass.b}
For each $1 \leq i \leq d$, $b_i$ is bounded and continuous in $\R^d \times [0, T]$ and is H\"{o}lder continuous in $x$, uniformly with respect to $(x, t) \in \R^d \times [0, T]$. 
\end{assumption}
Under Assumption~\ref{ass.b}, $X$ has a transition density function $p(x, t\mid y, s)$; that is, for any $0 \leq s < t \leq T$,  $y \in \R^d$, and Borel set $A$ in $\R^d$, 
\begin{equation}\label{eq:def.p}
    \E[  X_t  \in A \mid X_s = y ]  = \int_A  p(x, t \mid y, s)  \lambda (\ud x).
\end{equation}
We will use $\ud x$ as a shorthand for $\lambda (\ud x)$. 
Moreover, by Girsanov theorem, Assumption~\ref{ass.b} implies that the probability measures induced by $(X_t)_{\subT}$ and $(B_t)_{\subT}$ are equivalent, where $B_t = \xi + \sigma W_t$.  
Hence, for any $t > s \geq 0$, $p(x, t \mid y, s)$ is strictly positive and $\Law(X_t) \approx \lambda$ (i.e., two measures are equivalent).   
The role of Assumption~\ref{ass.b} in our theoretical results will be further discussed in Remark~\ref{rmk:regularity}. 
 
Schr\"{o}dinger bridge aims to find a minimum-energy modification of the dynamics of $X$ so that its terminal distribution coincides with a pre-specified distribution $\mu_T$, where ``energy'' is measured by KL divergence. 
Given $\sigma$-finite measures $\nu$ and $\mu$ such that $\nu \ll \mu$, we use $\KL(\nu, \mu) = \int \log (  \frac{\ud \nu }{\ud \mu} ) \ud \nu$ to denote the KL divergence; if $\nu \not\ll \mu$, define $\KL(\nu, \mu) = \infty$. 
\citet{dai1991stochastic}  considered the following stochastic control formulation of Schr\"{o}dinger bridge. 

\begin{problem}[Schr\"{o}dinger bridge]\label{problem0}  
Let $\cU_0 = \{ u \in \cU  \colon \Law(X^u_T) = \mu_T \}$ where $(X^u_t)_{\subT}$ is defined in~\eqref{eq:CSDE}. Find $V = \inf_{ u \in \cU_0 } J(u)$, where 
\begin{equation}\label{eq:value.SB0}
     J(u) =  \E \int_0^T \frac{\|u_t\|^2}{2\sigma^2} \ud t, 
\end{equation}
and find the optimal control $u^*$ such that $J(u^*) = V$. 
\end{problem}

\begin{remark}\label{rmk:KL}
Let $\P_X$ (resp. $\P_X^u$) denote the probability measure induced by $X$ (resp. $X^u$) on the space of continuous functions on $[0, T]$. 
For any admissible control $u \in \cU$, Girsanov theorem implies that $J(u) = \KL(\P_X^u, \P_X)$. 
\end{remark}

Problem~\ref{problem0} has been well studied in the literature. %~\citep{dai1991stochastic,pavon1991free, leonard2013survey, chen2021stochastic}.  
In the special case where $\mu_0$ is a Dirac measure, the solution can be succinctly described, and $V$ is just the KL divergence between two probability distributions; we recall this in Theorem~\ref{th:dp1} below.  
In this paper, $\nabla$ always denotes differentiation with respect to $x$. 

\begin{theorem}[\citet{dai1991stochastic}]\label{th:dp1}
Let $\mu_0$ be the Dirac measure such that $\mu_0(\{x_0\}) = 1$ for some $x_0 \in \R^d$, and let 
$X$ be a weak solution to~\eqref{eq:udp}. 
Assume  $\KL(\mu_T, \Law(X_T)) < \infty$.  
For Problem~\ref{problem0}, the optimal control is given by $u^*_t = \sigma^2 \nabla \log h(X_t^{u^*}, t)$, where 
\begin{equation}
\begin{aligned}
    h(x, t) &= \int p(z, T \mid x, t)   \frac{ f_T(z)}{ p(z, T \mid x_0, 0)}    \ud z \eqqcolon \E\left[   \frac{ f_T(X_T)}{ p(X_T, T \mid x_0, 0)}  \,\Big|\, X_t = x\right].  
\end{aligned}
\end{equation}
%where we recall $f_T  = \ud \mu_T / \ud \lambda$. 
Moreover, $J(u^*) = \KL(\mu_T, \Law(X_T))$. 
\end{theorem}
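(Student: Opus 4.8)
The plan is to reduce the dynamic control problem to a static optimization over terminal marginals, exploiting the information-theoretic identity already recorded in Remark~\ref{rmk:KL}. By that remark, $J(u) = \KL(\P_X^u, \P_X)$ for every $u \in \cU$, so Problem~\ref{problem0} becomes the minimization of $\KL(\P_X^u, \P_X)$ subject to the terminal constraint $\Law(X_T^u) = \mu_T$. The key tool is the chain rule (disintegration) for KL divergence with respect to the endpoint $X_T$. Writing $\P_X^z$ (resp.\ $\P_X^{u,z}$) for the bridge measure obtained by conditioning $\P_X$ (resp.\ $\P_X^u$) on $\{X_T = z\}$, I would first establish
\begin{equation}\label{eq:disint}
\KL(\P_X^u, \P_X) = \KL\big(\Law(X_T^u), \Law(X_T)\big) + \int \KL\big(\P_X^{u,z}, \P_X^z\big)\, \Law(X_T^u)(\ud z).
\end{equation}

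On the admissible set $\cU_0$, the first term on the right of~\eqref{eq:disint} equals the constant $\KL(\mu_T, \Law(X_T))$, which is finite by hypothesis, while the second term is nonnegative. Hence $J(u) \geq \KL(\mu_T, \Law(X_T))$ for every $u \in \cU_0$, with equality precisely when $\P_X^{u,z} = \P_X^z$ for $\mu_T$-almost every $z$; that is, the optimally controlled process must share the bridges of the uncontrolled process and differ from it only through its terminal marginal. This simultaneously yields $V = \KL(\mu_T, \Law(X_T))$ and characterizes the optimal path measure $\Q^* \coloneqq \P_X^{u^*}$ as the unique law whose disintegration over $X_T$ agrees with that of $\P_X$ but whose endpoint marginal is $\mu_T$. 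Since the finiteness of the divergence forces $\mu_T \ll \Law(X_T)$, and since $\Law(X_T)$ has density $p(\cdot, T \mid x_0, 0)$ when $\mu_0 = \delta_{x_0}$, preserving the bridges makes the Radon--Nikodym derivative a function of the endpoint alone, namely $\ud \Q^* / \ud \P_X = f_T(X_T) / p(X_T, T\mid x_0, 0)$.

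It then remains to translate this change of measure into the drift $u^*$. I would define the positive $\P_X$-martingale $M_t = \E_{\P_X}[\, \ud \Q^*/\ud \P_X \mid \cF_t\,]$ and use the Markov property of $X$ to identify $M_t = h(X_t, t)$, with $M_0 = h(x_0, 0) = \int f_T(z)\,\ud z = 1$ confirming that $\Q^*$ is a genuine probability measure. Applying It\^o's formula to $\log M_t$ (the finite-variation part of $M_t$ vanishes because $h(x,t)$ solves the backward Kolmogorov equation associated with $X$, which is exactly the martingale property) gives $\ud \log M_t = \sigma \nabla \log h(X_t,t)\cdot \ud W_t - \tfrac12 \sigma^2 \|\nabla \log h(X_t,t)\|^2\,\ud t$. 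The Girsanov change of measure with density $M_T$ then shifts the law of $X$ to $\Q^*$ by adding precisely the drift $u_t^* = \sigma^2 \nabla \log h(X_t^{u^*}, t)$ in~\eqref{eq:CSDE}, as claimed.

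The main obstacle is the rigorous justification of the disintegration~\eqref{eq:disint} on path space: one must construct the regular conditional path measures $\P_X^z$ and $\P_X^{u,z}$, verify the requisite measurability and the additivity of relative entropy under conditioning on $X_T$, and handle the almost-everywhere statements carefully when $\Law(X_T^u)$ is not equivalent to $\mu_T$. A second point requiring care is admissibility of the candidate $u^*$: one must check $\E \int_0^T \|u_t^*\|^2\,\ud t < \infty$ and that the controlled SDE with drift $\sigma^2 \nabla \log h$ admits a weak solution with terminal law $\mu_T$. Here the finiteness of $\KL(\mu_T, \Law(X_T))$, together with the strict positivity and regularity of $p$ guaranteed by Assumption~\ref{ass.b}, are exactly what make $h$ well-defined and strictly positive and render $M$ a bona fide density process for Girsanov's theorem.
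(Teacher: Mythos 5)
Your argument is correct, but it is not the route the paper (following \citet{dai1991stochastic}) takes. You prove the lower bound via the chain rule for relative entropy under disintegration over $X_T$, i.e.\ the ``static reduction'' of the dynamic problem; the paper explicitly notes in Section~2 that this additive property is \emph{not} directly used, and instead establishes lower bounds of this type by a pure Girsanov--Jensen argument (the mechanism of Lemma~\ref{lm:girsanov}: write $1 = \E[g(X_T)/h(X_0,0)]$ as an exponential expectation under the controlled law and apply Jensen), combined with the Doob $h$-path process facts collected in Theorem~\ref{th:h} to verify that the candidate control attains the bound, is admissible, and hits the terminal marginal. The two proofs buy different things. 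Your disintegration argument identifies the optimal path measure $\Q^*$ intrinsically (it must share the bridges of $\P_X$), gives uniqueness of the optimizer essentially for free, and makes the value $\KL(\mu_T,\Law(X_T))$ transparent; but it front-loads the technical work into constructing regular conditional path measures $\P_X^z$, $\P_X^{u,z}$ and justifying the additivity of relative entropy on path space --- you correctly flag this as the main obstacle, and it is a genuine one, though standard (L\'eonard, F\"ollmer). The paper's route never conditions on the endpoint: it works only with the density process $h(X_t,t)$ and It\^o/Girsanov estimates, which is precisely why it generalizes verbatim to the soft-constrained Problem~\ref{problem1}, where the terminal marginal of the controlled process is not fixed and a disintegration over a prescribed endpoint law is no longer available. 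Your closing remarks on admissibility are the right ones: the energy identity $\E\int_0^T \|u^*_t\|^2/(2\sigma^2)\,\ud t = \KL(\mu_T,\Law(X_T)) < \infty$ (part (iii) of Theorem~\ref{th:h}) is what certifies $u^*\in\cU_0$, and part (iv) confirms $\Law(X_T^{u^*})=\mu_T$.
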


\begin{remark}  
If $\KL(\mu_T, \Law(X_T)) = \infty$, then Problem~\ref{problem0} does not admit a solution in the sense that no admissible control $u$ can yield $\Law(X_T^u) = \mu_T$. 
\end{remark}

We propose a relaxed stochastic control formulation of the Schr\"{o}dinger bridge problem by allowing the distribution of $X^u_T$ to be different from $\mu_T$. 
\begin{problem}[Soft-constrained Schr\"{o}dinger bridge]\label{problem1}  
For $\beta > 0$, find $V = \inf_{ u \in \cU  } J_\beta(u)$, where 
\begin{equation}\label{eq:value.SB1}
\begin{aligned}
     & J_\beta(u) 
     =\,  \beta \, \KL (\Law(X^u_T), \mu_T) +  \E \int_0^T \frac{\|u_t\|^2}{2\sigma^2} \ud t, 
\end{aligned}
\end{equation} 
and find the  optimal control $u^*$ such that $J_\beta(u^*) = V$. 
\end{problem}

%\begin{remark}\label{rmk:relax.SB}    
Problem~\ref{problem1} (i.e., the SSB problem) replaces the hard constraint $\Law(X^u_T) = \mu_T$ in Problem~\ref{problem0} with a soft constraint parameterized by $\beta$.
When $\beta = 0$, it is clear that the optimal control $u^*$  for Problem~\ref{problem1} is $u^*_t \equiv 0$. 
As $\beta \rightarrow \infty$, the law of $X_T^u$ is forced to agree with $\mu_T$, and we will see in Theorem~\ref{th:sol.dirac} that the optimal control for Problem~\ref{problem1} converges to that for Problem~\ref{problem0}. 
%\end{remark}

Before we try to solve Problem~\ref{problem1} in full generality, we make a remark on how Problem~\ref{problem0} can be simplified.  
In the literature, Problem~\ref{problem0} is often called the dynamic Schr\"{o}dinger bridge problem. Since the objective function~\eqref{eq:value.SB0} is the KL divergence between the laws of the controlled and uncontrolled processes (recall Remark~\ref{rmk:KL}), we can use an additive property of KL divergence to reduce Problem~\ref{problem0} to a static version~\citep{leonard2013survey}, where one only needs to find a joint distribution $\pi$ with marginals $\mu_0$ and $\mu_T$ that minimizes $\KL(\pi,  \Law(X_0, X_T))$. 
Although this property is not directly used in this paper, the insight from this observation underpins our stochastic control analysis of SSB.  
In particular, when $\mu_0$ is a Dirac measure,  the solution to Problem~\ref{problem1} can be obtained by a simple argument  which reduces the problem to optimizing over the distribution of $X_T^u$ instead of over the distribution of the whole process $(X_t^u)_{\subT}$.

\begin{theorem}\label{th:sol.dirac}
Let $\mu_0$ be as given in Theorem~\ref{th:dp1}.  
For Problem~\ref{problem1}, the optimal control is given by $u^*_t = \sigma^2 \nabla \log h(X_t^{u^*}, t)$, where 
\begin{align*}
    \;& h(x, t) =   h(x, t; \beta) 
    =\; C^{-1} \int p(z, T \mid x, t) \cdot \left( \frac{ f_T(z)}{p(z, T \mid x_0, 0) } \right)^{\beta / (1 + \beta)} \ud z,
\end{align*}
and $C = \int f_T(x)^{\beta/(1 + \beta)} p(x, T \mid x_0, 0)^{1 / (1 + \beta)} \ud x$.   
Moreover, $J_\beta(u^*) = - (1 + \beta) \log C \in [0, \infty),$  and 
\begin{equation}\label{eq:limit.h.dirac}
\lim_{\beta \rightarrow \infty} h(x, t; \beta) =  \int p(z, T \mid x, t)  \frac{ f_T(z)}{p(z, T \mid x_0, 0) }  \ud z. 
\end{equation}
\end{theorem}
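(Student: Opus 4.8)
The plan is to exploit the reduction, highlighted in the discussion preceding the theorem, that when $\mu_0$ is a Dirac measure the dynamic problem collapses to a static optimization over the terminal law alone. Writing $q(\cdot) = p(\cdot, T \mid x_0, 0)$ for the density of $\Law(X_T)$ and using Remark~\ref{rmk:KL} to identify $\E\int_0^T \|u_t\|^2/(2\sigma^2)\,\ud t = \KL(\P_X^u, \P_X)$, I would first establish
\begin{equation}
\inf_{u \in \cU} J_\beta(u) = \inf_{\nu} \Big[ \beta\, \KL(\nu, \mu_T) + \KL(\nu, \Law(X_T)) \Big],
\end{equation}
where the right-hand infimum runs over probability measures $\nu$ on $\R^d$. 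The inequality ``$\geq$'' follows from the data-processing (monotonicity) inequality for KL divergence applied to the terminal-coordinate projection on path space, which gives $\KL(\P_X^u, \P_X) \geq \KL(\Law(X_T^u), \Law(X_T))$ for every $u \in \cU$. The reverse inequality ``$\leq$'' is an achievability statement: for any $\nu$ with $\KL(\nu, \Law(X_T)) < \infty$, Theorem~\ref{th:dp1} applied with $\nu$ in place of $\mu_T$ produces an admissible control steering $X$ to terminal law $\nu$ at energy cost exactly $\KL(\nu, \Law(X_T))$.

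Second, I would solve the static problem by completing the square for KL divergence. Setting $r(x) = C^{-1} f_T(x)^{\beta/(1+\beta)} q(x)^{1/(1+\beta)}$ with $C$ the normalizing constant in the statement, and letting $\rho$ be the distribution with density $r$, a direct computation shows that for any $\nu$ with density $g$,
\begin{equation}
\beta\, \KL(\nu, \mu_T) + \KL(\nu, \Law(X_T)) = (1+\beta)\,\KL(\nu, \rho) - (1+\beta)\log C.
\end{equation}
Hence the unique minimizer is $\nu^* = \rho$, the optimal value is $-(1+\beta)\log C$, and since this value equals $\beta\,\KL(\rho, \mu_T) + \KL(\rho, \Law(X_T))$, both nonnegative terms are finite; in particular $\KL(\rho, \Law(X_T)) < \infty$, so Theorem~\ref{th:dp1} does apply at $\nu^* = \rho$. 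Substituting $r/q = C^{-1}(f_T/q)^{\beta/(1+\beta)}$ into the conditional-expectation formula for $h$ from that theorem then yields exactly the stated $u^*$ and $h(x,t;\beta)$.

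The value and limit claims are then routine. To see $-(1+\beta)\log C \in [0,\infty)$, I would apply H\"older's inequality with exponents $(1+\beta)/\beta$ and $1+\beta$ to get $C \leq \left(\int f_T\,\ud x\right)^{\beta/(1+\beta)}\left(\int q\,\ud x\right)^{1/(1+\beta)} = 1$, giving nonnegativity, while strict positivity of $q$ (from Assumption~\ref{ass.b}) together with $f_T > 0$ on a set of positive Lebesgue measure gives $C > 0$, hence finiteness. For~\eqref{eq:limit.h.dirac}, since $\beta/(1+\beta)\to 1$ and $C\to\int f_T\,\ud x = 1$ as $\beta\to\infty$, the pointwise limit follows from dominated convergence.

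The step I expect to require the most care is the reduction, specifically the ``$\leq$'' direction: one must confirm that the minimizing terminal law $\rho$ lies in the admissible class for which Theorem~\ref{th:dp1} delivers a control, i.e.\ that $\KL(\rho, \Law(X_T)) < \infty$, which is precisely what licenses restricting the outer infimum to laws reachable at finite energy. This is also the point where the Dirac assumption on $\mu_0$ is essential, since it makes the minimal steering cost to $\nu$ equal to the clean expression $\KL(\nu, \Law(X_T))$; for general $\mu_0$ this decoupling fails and one must instead work with the generalized Schr\"odinger system. The data-processing inequality underlying ``$\geq$'' is standard but should be invoked carefully as a statement about pushforwards on path space.
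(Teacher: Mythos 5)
Your proposal is correct and follows essentially the same route as the paper: the reduction to a static minimization over terminal laws (the paper obtains the lower bound by invoking the optimal value in Theorem~\ref{th:dp1}, you by data processing on the terminal projection, which amounts to the same thing), your completing-the-square identity is exactly the paper's Lemma~\ref{lm:min.KL}, and the identification of $u^*$ by applying Theorem~\ref{th:dp1} with the geometric mixture as the new target is precisely the paper's final step. The one small quibble is the limit~\eqref{eq:limit.h.dirac}: since the right-hand side may be $+\infty$ (no integrability of $f_T/p(\cdot,T\mid x_0,0)$ against $p(\cdot,T\mid x,t)$ is assumed), dominated convergence is not available in general; the paper instead splits the integral over $\{f_T \geq p(\cdot,T\mid x_0,0)\}$ and its complement and applies monotone convergence on each piece.
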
 
\begin{proof} 
If $\Law(X^u_T) \not\ll \mu_T$, then $u$ cannot be optimal since $J_\beta(u) = \infty$.  Now fix an arbitrary $u \in \cU$ such that $\Law(X^u_T) = \mu \ll \mu_T$. 
Letting $J(u)$ be as given in~\eqref{eq:value.SB0}, we have  
\begin{align*}
J_\beta(u) = \;&   \beta \, \KL (\mu, \mu_T) + J(u) \\
%\geq \;& \beta \, \KL (\mu, \mu_T) +  \inf_{ u \in \cU \colon \Law(X^u_T) = \mu} J(u) \\
\geq \;& \beta \, \KL (\mu, \mu_T) + \KL( \mu, \Law(X_T)) \\
\eqqcolon \;& \tilde{J}_\beta(\mu)
\end{align*}
where the inequality follows from Theorem~\ref{th:dp1}. 
Since $\mu_T$ has density $f_T$ and $ \Law(X_T)$ has density $ p(x, T \mid x_0, 0)$, 
we can apply Lemma~\ref{lm:min.KL} in \Appx{} to get 
%\begin{align*} 
     $\inf_{\mu}  \tilde{J}_\beta(\mu) =  - (1 + \beta) \log C \in [0, \infty),$
%\end{align*}
where the infimum is taken over all probability measures on $\R^d$ and is attained at $\mu^*$ such that 
\begin{align*}
\begin{aligned}
        \frac{\ud \mu^*}{\ud \lambda}(x) = C^{-1} f_T(x)^{\beta / (1 + \beta)} p(x, T \mid x_0, 0)^{1 / (1 + \beta)}. 
\end{aligned}
\end{align*}
The convergence of $h(x, t; \beta)$ as $\beta \rightarrow \infty$ also follows from Lemma~\ref{lm:min.KL}. 

It only remains to prove that $J_\beta(u^*) = - (1 + \beta) \log C$. Observe that we can rewrite $h$ as 
\begin{align*}
     h(x, t) =\;&   \int p(z, T \mid x, t)   \frac{  ( \ud \mu^* / \ud \lambda )(z) }{  p(z, T \mid x_0, 0)  }  \ud z. 
\end{align*}
Hence, Theorem~\ref{th:dp1} implies that $u^*$ is also the solution to Problem~\ref{problem0} where the terminal constraint is given by $\mu^*$. So Theorem~\ref{th:dp1} yields that  $J_\beta(u^*) = \tilde{J}_\beta(\mu^*)$, which proves the claim.   
\end{proof}

\begin{remark}\label{rmk:simulate}
When $b$ is constant, the transition density $p(x, t \mid x_0, 0)$ is easy to evaluate.
If $f_T$ is known up to a normalizing constant, one can then use the Monte Carlo sampling scheme proposed by~\citep{huang2021schrodinger} to approximate the drift $b + \sigma^2 \nabla \log h(x, t)$ and simulate the controlled diffusion process~\eqref{eq:CSDE}.  
We describe this method and generalize it using importance sampling techniques in \Appx{}~\ref{sec:toy.examples}. 
More sophisticated score-based sampling schemes can also be applied~\citep{heng2024diffusion}. 
\end{remark}

One difference between Theorems~\ref{th:dp1} and~\ref{th:sol.dirac} is that the condition $\KL(\mu_T,  \Law(X_T)) < \infty$ is not required for solving Problem~\ref{problem1}. 
We give a toy example illustrating the importance of this difference.  

\begin{example}\label{ex:cauchy}
Consider $b \equiv 0$,  $T = 1$, $x_0 = 0$ and $\mu_T$ being the Cauchy distribution. Then, $\Law(X_T)$ is just the normal distribution with mean zero and covariance $\sigma^2 I$, and we have $\KL(\mu_T, \Law(X_T)) = \infty$. 
Problem~\ref{problem0} does not admit a solution in this case, but Problem~\ref{problem1}  has a solution for any $\beta \in [0, \infty)$ and the associated optimal control has finite energy cost. 
In \Appx{}~\ref{sec:cauchy}, we simulate the solution to Problem~\ref{problem1} with $\mu_T$ being the Cauchy distribution. 
We find that when $\beta = \infty$, the numerical scheme  is unstable and fails to capture the heavy tails of the Cauchy distribution. In contrast,  using a finite value of $\beta$ significantly stabilizes the algorithm. 
\end{example}

\section{Solution to Soft-constrained Schr\"{o}dinger Bridge}\label{sec:sol.RSB}
When $\mu_0$ is not a Dirac measure, the solution to the Schr\"{o}dinger bridge problem is more difficult to describe and is  characterized by  the so-called Schr\"{o}dinger system~\citep[Theorem 2.8]{leonard2013survey}. 
%The existence and uniqueness of the solution to the  Schr\"{o}dinger system follows from a result of~\cite{beurling1960automorphism}.  
In this section, we prove that the solution to Problem~\ref{problem1} can be obtained in a similar way, but the Schr\"{o}dinger system for Problem~\ref{problem1} now depends on $\beta$. 

The main idea behind our approach is to first show that the optimal control must belong to a small class parameterized by a function $g$ and then use an argument similar to the proof of Theorem~\ref{th:sol.dirac} to determine the choice of $g$. 
To introduce this class of controls, for each  measurable function $g \colon \R^d \rightarrow [0, \infty)$, let $\cT g$ denote the function on  $\R^d \times [0, T)$ given by 
\begin{equation}\label{eq:def.Tg}
    (\cT g)(x, t) = \sigma^2 \nabla \log h(x, t), 
\end{equation}
where 
\begin{equation}\label{eq:def.h}
    h(x, t)= \E[ g(X_T) \mid X_t = x ].
\end{equation}
Let $\cU_1 = \{ u \in \cU \colon u_t = (\cT g)(X^u_t, t) \text{ for some } g \geq 0 \}$ denote the set of all controls that are constructed by this logarithmic transformation. 
We present in Theorem~\ref{th:h} in \Appx{} some well-known results about the controlled SDE~\eqref{eq:CSDE} with  $u \in \cU_1$; in particular, part (iv) of  the theorem shows that such a process is a Doob's $h$-path process~\citep{doob1959markov}. 
Theorem~\ref{th:h} is largely adapted from Theorem 2.1 of~\citet{dai1991stochastic}, and similar results are extensively documented in the literature~\citep{jamison1975markov,  fleming1985stochastic, follmer1988random,  MR731258,  fleming2005logarithmic}. 
We can now prove a key lemma.

\begin{lemma}\label{lm:girsanov} 
Let $u$ be an admissible control. 
Let $h$ be as given in~\eqref{eq:def.h} for some measurable $g \geq 0$ such that $\E g(X_T) < \infty$ and $h > 0$ on $\R^d \times [0, T)$.  
We have  
\begin{align*}
    J_\beta(u) \geq  \;& \beta \, \KL (\Law(X^u_T),\mu_T) + 
    \;   \E[\log g(X_T^u)] 
    - \int \log h(x, 0) \mu_0(\ud x), 
\end{align*}
where $J_\beta$ is defined in~\eqref{eq:value.SB1}. 
The equality holds when $u_t = (\cT g)(X^u_t, t)$.  
\end{lemma}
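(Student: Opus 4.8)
The plan is to strip away the common term and reduce the lemma to a pure path-space relative-entropy inequality. Because $\beta\,\KL(\Law(X_T^u),\mu_T)$ sits on both sides, it suffices to bound the energy cost $J(u)=\E\int_0^T \|u_t\|^2/(2\sigma^2)\,\ud t$ from below. By Remark~\ref{rmk:KL}, for an admissible $u$ this energy equals $\KL(\P_X^u,\P_X)$, so the whole statement collapses to showing
\[
\KL(\P_X^u,\P_X)\;\geq\; \E[\log g(X_T^u)] - \int \log h(x,0)\,\mu_0(\ud x),
\]
with equality when $u=\cT g$. I would then add $\beta\,\KL(\Law(X_T^u),\mu_T)$ back at the very end.

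First I would record that $h(X_t,t)=\E[g(X_T)\mid\cF_t]$ is a nonnegative $\P_X$-martingale (Markov property plus $\E g(X_T)<\infty$) with terminal value $h(X_T,T)=g(X_T)$. This lets me introduce the probability measure $\Q$ defined on $\cF_T$ through $\ud\Q/\ud\P_X = h(X_T,T)/h(X_0,0) = g(X_T)/h(X_0,0)$; the martingale property together with $h>0$ on $\R^d\times[0,T)$ makes $\Q$ a genuine probability measure with $\Q\ll\P_X$. By the $h$-path construction in Theorem~\ref{th:h}(iv), $\Q$ is precisely the law $\P_X^{u^*}$ of the process controlled by $u^*_t=(\cT g)(X_t^{u^*},t)=\sigma^2\nabla\log h(X_t^{u^*},t)$, so I need not re-derive its Girsanov density.

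The core step is the chain rule for relative entropy: whenever $\P_X^u\ll\Q\ll\P_X$,
\[
\KL(\P_X^u,\P_X) = \KL(\P_X^u,\Q) + \E_{\P_X^u}\Big[\log\tfrac{\ud\Q}{\ud\P_X}\Big].
\]
Evaluating the last expectation along the controlled process and using $X_0^u=\xi\sim\mu_0$ gives $\E_{\P_X^u}[\log(\ud\Q/\ud\P_X)] = \E[\log g(X_T^u)] - \int\log h(x,0)\,\mu_0(\ud x)$, which is exactly the desired right-hand side. Discarding the nonnegative term $\KL(\P_X^u,\Q)\geq 0$ yields the inequality, and $\KL(\P_X^u,\Q)=0$ holds precisely when $\P_X^u=\Q=\P_X^{u^*}$, i.e.\ when $u=\cT g$, which delivers the equality case.

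The step I expect to be the main obstacle is the absolute-continuity bookkeeping that licenses the chain rule once $g$ is permitted to vanish. Admissibility and Girsanov give $\P_X^u\approx\P_X$ and $\Q\ll\P_X$, but $\P_X^u\ll\Q$ can fail on $\{g(X_T)=0\}$. I would resolve this by cases: if $\P_X^u(g(X_T^u)=0)>0$, then $\E[\log g(X_T^u)]=-\infty$ and the asserted bound is vacuously true; otherwise $g(X_T^u)>0$ almost surely, so $\P_X^u\ll\Q\ll\P_X$ and the decomposition above is valid. A secondary check is that $\int h(x,0)\,\mu_0(\ud x)=\E g(X_T)<\infty$, so $h(\cdot,0)$ is finite $\mu_0$-a.e.\ and the final integral is well defined.
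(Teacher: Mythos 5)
Your argument is correct and arrives at the same bound, but it is organized differently from the paper's proof. The paper works directly with the Girsanov exponential: it writes $1 = \E[g(X_T)/h(X_0,0)]$ as an expectation under the controlled process of $\frac{g(X_T^u)}{h(X_0^u,0)}\exp\{-\int_0^T \frac{u_t}{\sigma}\,\ud W_t - \int_0^T\frac{\|u_t\|^2}{2\sigma^2}\,\ud t\}$, applies Jensen's inequality $\E[e^V]\geq e^{\E V}$, and uses $\E\int_0^T u_t\,\ud W_t=0$ to remove the stochastic integral; the equality case is then read off from the energy identity in Theorem~\ref{th:h}(iii). You instead pass through Remark~\ref{rmk:KL} to replace the energy by $\KL(\P_X^u,\P_X)$ and invoke the chain rule for relative entropy with the $h$-transform measure $\Q$ as an intermediate measure. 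The two are essentially dual: the nonnegative term $\KL(\P_X^u,\Q)$ you discard is exactly the slack in the paper's Jensen step, and your identification of $\Q$ with the law of the $\cT g$-controlled process reuses the same Theorem~\ref{th:h}(ii) the paper relies on. What your route buys is a more transparent equality case (equality iff $\P_X^u=\Q$) and no explicit manipulation of the stochastic integral; what it costs is the absolute-continuity bookkeeping needed to license the chain rule, which you correctly resolve by splitting on whether $g(X_T^u)$ vanishes with positive probability. One small caveat: in that degenerate case you declare the bound vacuous via $\E[\log g(X_T^u)]=-\infty$, which strictly requires $\E[(\log g(X_T^u))^+]<\infty$ for the expectation to be well defined as $-\infty$; this ambiguity is already latent in the statement of the lemma itself and is harmless in its downstream applications, but it is worth a sentence. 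Likewise, for the equality case your appeal to $J(u)=\KL(\P_X^u,\P_X)$ presumes the control $\cT g$ is admissible, whereas the paper's use of Theorem~\ref{th:h}(iii) establishes the energy identity for that control directly.
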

\begin{proof}
    See \Appx{}~\ref{sec:proofs}.  
\end{proof}

\begin{remark}\label{rmk:regularity}
Assumption~\ref{ass.b} guarantees that the SDE~\eqref{eq:udp} admits a unique (in law) weak solution. More importantly, in the proof of Theorem~\ref{th:h} (which is used to derive Lemma~\ref{lm:girsanov}), Assumption~\ref{ass.b} is used to ensure that  SDE~\eqref{eq:udp} has a transition density function such that the function $h$ defined in~\eqref{eq:def.h} is sufficiently smooth and satisfies $\frac{\partial h}{\partial t} +  \cL h = 0$, where $\cL$ denotes the generator of $X$ (see Theorem~\ref{th:h}). 
This condition can be relaxed; see~\citet{friedman1975stochastic} and~\citet[Chap. 5.7]{karatzas2012brownian} for details. 
\end{remark}

Observe that in the bound given in Lemma~\ref{lm:girsanov}, the term $\int \log h(x, 0) \mu_0(\ud x)$ is independent of the control $u$, and the other terms depend on $u$ only through the  distribution of $X^u_T$. 
This implies that  among all the admissible controls that result in the same distribution of $X^u_T$, the cost $J_\beta$ is minimized by  some $u \in \cU_1$.  
We can now prove the main theoretical result of this work in Theorem~\ref{th:sol.RSB}. The existence of the solution will be considered later in Theorem~\ref{th:exist.compact}. 
  
\begin{theorem}\label{th:sol.RSB}
Suppose there exist $\sigma$-finite measures $\nu_0, \nu_T$ such that $\nu_0 \approx \mu_0, \nu_T \approx \mu_T$ and
\begin{align}
   \frac{\ud \mu_0}{\ud \nu_0}(y) =\;& \int p(x, T \mid y, 0) \nu_T(\ud x), \label{eq:rsb2} \\ 
   \frac{\ud \mu_T}{\ud \lambda}(x) =\;& \rho_T(x)^{\frac{1+\beta}{\beta}} \int p(x, T \mid y, 0) \nu_0(\ud y), \label{eq:rsb1} 
\end{align}
where $\rho_T = \ud \nu_T / \ud \lambda$ and the transition density $p$ is defined in~\eqref{eq:def.p}. 
Assume $\int  ( \ud \mu_0 / \ud \nu_0) \ud \mu_0  < \infty$. 
Then   $u^*_t = \sigma^2 \nabla \log h(X_t^{u^*}, t)$ solves Problem~\ref{problem1}, where 
$h(x, t) =  \E[ \rho_T(X_T) \mid X_t = x].$ 
Moreover, $J_\beta(u^*) = -\KL(\mu_0, \nu_0) \in [0, \infty)$. 
\end{theorem}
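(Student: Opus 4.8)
The plan is to apply Lemma~\ref{lm:girsanov} with the particular choice $g = \rho_T$. By the observation following that lemma, this collapses the dynamic control problem into a static one: the resulting lower bound depends on the control only through the terminal law $\mu := \Law(X^u_T)$, so it suffices to minimize the induced functional of $\mu$ and then exhibit a control whose terminal law is the minimizer. The two equations of the generalized Schr\"odinger system are precisely what align the static minimizer with the terminal distribution of the candidate $h$-path process.

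First I would check that $g = \rho_T$ meets the hypotheses of Lemma~\ref{lm:girsanov}. Positivity $h>0$ on $\R^d\times[0,T)$ follows from strict positivity of $p$ (Assumption~\ref{ass.b}) together with $\rho_T=\ud\nu_T/\ud\lambda>0$ on a set of positive measure (since $\nu_T\approx\mu_T$). For the integrability $\E g(X_T)<\infty$, equation~\eqref{eq:rsb2} gives the identity $h(y,0)=\int p(x,T\mid y,0)\rho_T(x)\,\ud x = (\ud\mu_0/\ud\nu_0)(y)$, whence $\E[\rho_T(X_T)]=\int h(y,0)\,\mu_0(\ud y)=\int(\ud\mu_0/\ud\nu_0)\,\ud\mu_0<\infty$ by assumption. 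This same identity shows that the control-independent term in the bound equals $\int\log h(x,0)\,\mu_0(\ud x)=\KL(\mu_0,\nu_0)$.

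Lemma~\ref{lm:girsanov} then yields, for every admissible $u$ with $\mu=\Law(X^u_T)$,
\[ J_\beta(u)\ \geq\ \beta\KL(\mu,\mu_T)+\int\log\rho_T\,\ud\mu-\KL(\mu_0,\nu_0). \]
Setting $\mu^*$ to be the measure with density $q^*:=f_T\,\rho_T^{-1/\beta}$, a short calculation rewrites the $\mu$-dependent part as $\beta\KL(\mu,\mu^*)$, which is nonnegative and vanishes exactly at $\mu=\mu^*$; hence $J_\beta(u)\geq-\KL(\mu_0,\nu_0)$ for all admissible $u$. To see the bound is attained by $u^*=\cT\rho_T$, I would substitute~\eqref{eq:rsb1} into $q^*$ to get $q^*(z)=\rho_T(z)\int p(z,T\mid y,0)\,\nu_0(\ud y)$, and compare it with the terminal marginal of the Doob $h$-path process (Theorem~\ref{th:h}), namely $\Law(X_T^{u^*})(\ud z)=\Big(\rho_T(z)\int \tfrac{p(z,T\mid y,0)}{h(y,0)}\,\mu_0(\ud y)\Big)\ud z$, which by $h(\cdot,0)=\ud\mu_0/\ud\nu_0$ reduces to the very same expression. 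Thus $\Law(X_T^{u^*})=\mu^*$, and the equality case of Lemma~\ref{lm:girsanov} gives $J_\beta(u^*)=-\KL(\mu_0,\nu_0)$.

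The main obstacle I anticipate is the integrability and admissibility bookkeeping rather than the algebra. I would need to confirm that $u^*\in\cU$, in particular that its energy $\E\int_0^T\|u^*_t\|^2\,\ud t$ is finite, which through the Girsanov identity of Remark~\ref{rmk:KL} I would express as $\int\log\rho_T\,\ud\mu^*-\KL(\mu_0,\nu_0)$, and that the claimed value lies in $[0,\infty)$. The upper bound $\KL(\mu_0,\nu_0)\leq\log\int(\ud\mu_0/\ud\nu_0)\,\ud\mu_0<\infty$ is Jensen's inequality, and the nonnegativity of $J_\beta$ forces $\KL(\mu_0,\nu_0)\leq 0$; ruling out the degenerate value $-\infty$ and ensuring every integral above is well-defined is the delicate step, and is where the finiteness assumption on $\int(\ud\mu_0/\ud\nu_0)\,\ud\mu_0$ and the positivity afforded by Assumption~\ref{ass.b} do the real work.
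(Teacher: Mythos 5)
Your proposal is correct and follows essentially the same route as the paper: apply Lemma~\ref{lm:girsanov} with $g=\rho_T$, reduce to a static minimization over the terminal law, identify the minimizer as the geometric mixture $f_T^{\beta/(1+\beta)}k_T^{1/(1+\beta)}$ (your $f_T\rho_T^{-1/\beta}$), and verify via Theorem~\ref{th:h}(iv) that the Doob $h$-path process attains it. The only cosmetic difference is that the paper bounds the terminal-law term by $-\log C$ via Lemma~\ref{lm:min.KL1} for a possibly unnormalized $\eta$ and deduces $C=1$ afterwards, whereas you assert $\beta\KL(\mu,\mu^*)\geq 0$ up front — which is justified only once the computation $\Law(X_T^{u^*})=\mu^*$ confirms $\mu^*$ is indeed a probability measure, as your argument does supply.
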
 

\begin{proof}
    See \Appx{}~\ref{sec:proofs}. 
\end{proof}

\begin{remark}\label{rmk:geom.mix}
    As we derive in the proof, the terminal distribution of the optimally controlled process is still a geometric mixture of two distributions. Explicitly, its density is proportional to 
%    \begin{align*}
    $$    f_T(x)^{\beta / (1 + \beta)} \left(\int p(x, T \mid y, 0) \nu_0 (\ud y) \right)^{1 / (1 + \beta)},$$   
%    \end{align*}
    where we recall $f_T$ is the density of $\mu_T$.  
\end{remark}

\begin{remark}\label{rmk:RSB.unique}
The assumption $\int  ( \ud \mu_0 / \ud \nu_0) \ud \mu_0  < \infty$ guarantees that we can use Lemma~\ref{lm:girsanov} and Theorem~\ref{th:h} and that $J_\beta(u^*) < \infty$; it is also used in~\citet{dai1991stochastic}.
Observe that $u^*_t = \sigma^2 \nabla \log h(X_t^{u^*}, t)$ is invariant to the scaling of the function $\rho_T$ and thus also the scaling of $\nu_0$ and $\nu_T$.  
This suggests that the system defined by~\eqref{eq:rsb2} and~\eqref{eq:rsb1} can be generalized as follows. Let $a > 0$ and $\sigma$-finite measures $\nu_0 = \nu_0(a), \nu_T = \nu_T(a)$ be the solution to the following system
\begin{align}
   \frac{\ud \mu_0}{\ud \nu_0}(y) =\;& \int p(x, T \mid y, 0) \nu_T(\ud x),  \label{eq:rsb5} \\ 
   \frac{\ud \mu_T}{\ud \lambda}(x) =\;& (a \rho_T(x) )^{\frac{1 + \beta}{\beta}}  \int p(x, T \mid y, 0) \nu_0(\ud y), \quad \label{eq:rsb6}
\end{align} 
where $\rho_T = \ud \nu_T / \ud \lambda$. We can use essentially the same argument to show that the choice $h(x, t) = \E[ \rho_T(X_T) \mid X_t = x]$ is optimal, but now we have 
$$J_\beta(u^*) = -(1 + \beta) \log a - \KL(\mu_0, \nu_0(a)).$$ 
This is the same as that given in Theorem~\ref{th:sol.RSB}. Indeed,  if $(\nu_0^*, \nu_T^*)$ is a solution to~\eqref{eq:rsb2} and~\eqref{eq:rsb1}, then the solution to~\eqref{eq:rsb5} and~\eqref{eq:rsb6} is given by $\ud \nu_0(a) = a^{1 + \beta} \ud \nu_0^*$ and $\ud \nu_T(a) = a^{-(1 + \beta)}\ud \nu_T^*$.
\end{remark}

\begin{example}\label{ex:dirac} 
Theorem~\ref{th:sol.dirac} can be obtained from Theorem~\ref{th:sol.RSB}  as a special case. If $\mu_0$ is a Dirac measure with $\mu_0(\{x_0\}) = 1$, one can check that the solution to~\eqref{eq:rsb2} and~\eqref{eq:rsb1} is given by  $\nu_0 = \mu_0$ and 
$$ \rho_T(x) = \left( \frac{f_T(x)}{ p(x, T\mid x_0, 0)} \right)^{\beta/(1 + \beta)}.$$ 
%%%% QUESTION: how to rigorously prove that the control converges to SB as $\beta \rightarrow \infty$
\end{example}

\begin{example}\label{ex:rev}
Suppose $b \equiv 0$, and let $\phi_\sigma$ denote the density function of the normal distribution with mean $0$ and covariance matrix $\sigma^2 I$. 
We have $p(x, T \mid y, 0) = \phi_{ \sigma \sqrt{T} }(x - y)$. 
Assume $\mu_0 \ll \lambda$ has density $f_0$, and suppose that $f_0, f_T$ satisfy 
\begin{align*}
    f_0(y) = c^{-1} \int \phi_{ \sigma \sqrt{T} }(x - y) f_T(x)^{\frac{\beta}{ 1 + \beta}} \ud x, 
\end{align*}
where $c = \int f_T(x)^{\beta/(1+\beta)} \ud x$ is the normalizing constant assumed to be finite. A routine calculation using $\int \phi_{ \sigma \sqrt{T} }(x - y) \ud y = 1$ can verify that the solution to~\eqref{eq:rsb2} and~\eqref{eq:rsb1} is given by 
\begin{align*}
    \frac{\ud \nu_0}{ \ud \lambda} = c^{-(1 + \beta)}, \; \rho_T(x) = c^\beta f_T(x)^{\beta/(1 + \beta)}. 
\end{align*}
According to Remark~\ref{rmk:RSB.unique}, by choosing $a = c$ in~\eqref{eq:rsb6}, we can also replace $\nu_0, \nu_T$ by $\nu_0(a), \nu_T(a)$, where $\nu_0(a)$ coincides with $\lambda$ and $\nu_T(a)$ is a probability distribution with density $c^{-1} f_T^{\beta/(1+\beta)}$. 
For the original Schr\"{o}dinger bridge problem (i.e., $\beta = \infty$), this solution has been used in developing efficient generative sampling methods~\citep{pmlr-v139-wang21l, berner2022optimal}.
\end{example}

\cite{chen2019relaxed} studied a matrix optimal transport problem which can be seen as the discrete analogue to Problem~\ref{problem1}, and they proved the solution to the corresponding Schr\"{o}dinger system admits a unique solution. 
The main idea is to show that the solution can be characterized as the fixed point of some operator with respect to the Hilbert metric~\citep{bushell1973hilbert}, a technique that has been widely used in the literature on Schr\"{o}dinger system~\citep{fortet1940resolution, georgiou2015positive, chen2016entropic, essid2019traversing, deligiannidis2024quantitative}. 
For the Schr\"{o}dinger system defined by~\eqref{eq:rsb2} and~\eqref{eq:rsb1}, an argument based on the Hilbert metric 
can also be applied to prove the existence and uniqueness of the solution when $\mu_0, \mu_T$ are absolutely continuous and have compact support. 

\begin{theorem}  \label{th:exist.compact}
Let $K_0$ (resp. $K_T$) denote the support of $\mu_0$ (resp. $\mu_T$). Assume that 
\begin{enumerate}[(i)]
    \item $K_0, K_T \subset \mathbb{R}^d$ are compact; 
    \item $f_0 = \ud\mu_0 / \ud \lambda$, $f_T = \ud \mu_T / \ud \lambda$ exist, where $\lambda$ is the Lebesgue measure; 
    \item  $(y, x) \mapsto p(x, T \mid y, 0)$ is continuous and strictly positive on $K_0 \times K_T$. 
\end{enumerate} 
%(i) $K_0, K_T \subset \mathbb{R}^d$ are compact, (ii) $f_0 = \ud\mu_0 / \ud \lambda$, $f_T = \ud \mu_T / \ud \lambda$ exist, where $\lambda$ is the Lebesgue measure, and (iii) $(y, x) \mapsto p(x, T \mid y, 0)$ is continuous and strictly positive on $K_0 \times K_T$. 
For any $\beta \in (0, \infty)$,  there exists a unique pair of non-negative, integrable functions $(\rho_0, \rho_T)$ such that  
\begin{align}
    f_0(y) &= \rho_0(y) \int_{K_T} p(x, T \mid y, 0) \rho_T(x) \ud x,   \\
    f_T(x) &= \rho_T(x)^{(1 +\beta) / \beta} \int_{K_0} 
    p(x, T \mid y, 0) \rho_0(y) \ud y.  
\end{align} 
\end{theorem}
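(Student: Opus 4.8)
\emph{Proof proposal.} The plan is to recast the pair of equations as a single fixed-point equation and to solve it by a contraction argument in the Hilbert projective metric, following the references cited after the statement. Write $\alpha = \beta/(1+\beta) \in (0,1)$ and introduce the positive integral operators
\[
(A\rho)(y) = \int_{K_T} p(x, T \mid y, 0)\,\rho(x)\,\ud x, \qquad (B\rho)(x) = \int_{K_0} p(x, T \mid y, 0)\,\rho(y)\,\ud y,
\]
acting on non-negative functions on $K_T$ and $K_0$, respectively. Solving the first equation for $\rho_0$ gives $\rho_0 = f_0/(A\rho_T)$, and solving the second for $\rho_T$ gives $\rho_T = (f_T/(B\rho_0))^{\alpha}$. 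Since the integral operators smooth their arguments into continuous functions bounded between positive constants, it is cleanest to iterate on the potential $\psi = B\rho_0 \in C(K_T)$ via the self-map
\[
\Psi(\psi) = B\left( \frac{f_0}{A\bigl((f_T/\psi)^{\alpha}\bigr)} \right).
\]
A fixed point $\psi^{*}$ recovers a solution of the system through $\rho_T = (f_T/\psi^{*})^{\alpha}$ and $\rho_0 = f_0/(A\rho_T)$, and one checks directly that both equations then hold.

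For the contraction estimate I would assemble three standard facts about the Hilbert metric $d_H(f,g) = \log\frac{\sup(f/g)}{\inf(f/g)}$ on the positive cone. First, multiplication by a fixed positive function and the reciprocal map $\rho\mapsto 1/\rho$ are $d_H$-isometries, so the steps $\varphi\mapsto f_0/\varphi$ and $\psi\mapsto f_T/\psi$ preserve distances. Second, the power map satisfies $d_H(\rho^{\alpha},\tilde\rho^{\alpha}) = \alpha\, d_H(\rho,\tilde\rho)$. Third, by Birkhoff's theorem \citep{bushell1973hilbert} the positive linear operators $A$ and $B$ are non-expansive in $d_H$, with ratio $\tanh(\Delta/4)\le 1$, where the projective diameter $\Delta$ is finite because assumption (iii) bounds the kernel as $0 < m \le p \le M < \infty$ on the compact set $K_0\times K_T$. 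Composing these along one pass of $\Psi$ yields
\[
d_H\bigl(\Psi(\psi),\Psi(\tilde\psi)\bigr) \le \alpha\, d_H(\psi,\tilde\psi),
\]
so $\Psi$ is a strict contraction with ratio at most $\alpha < 1$. Note that the nonlinearity alone supplies the contraction, so---unlike the classical case $\beta = \infty$, where $\alpha = 1$ and one must rely on strict Birkhoff contraction---the argument is more robust and, as explained below, additionally pins down the scaling.

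The step I expect to be the main obstacle is the functional-analytic bookkeeping needed to apply the Banach fixed-point theorem, namely showing that $\Psi$ is a genuine self-map of a complete, $d_H$-bounded space despite $f_0,f_T$ being merely integrable. The key observations are that $f_T^{\alpha}$ is integrable on the compact set $K_T$ (by H\"older's inequality, $\int_{K_T} f_T^{\alpha} \le |K_T|^{1-\alpha}(\int_{K_T} f_T)^{\alpha} < \infty$, using $\alpha < 1$), so $(f_T/\psi)^{\alpha}$ is integrable whenever $\psi$ is bounded below, and that the resulting $A\rho_T$ and $B\rho_0$ are finite, continuous, and bounded between positive constants. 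Hence after one application the iterates all lie in the ``part'' $\{\psi \in C(K_T): \sup\psi/\inf\psi \le M/m\}$, which has finite $d_H$-diameter and is complete under $d_H$; I would invoke the completeness of such parts of the cone from the cited Hilbert-metric literature. Banach's theorem then produces a fixed point, unique up to positive scaling.

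Finally I would upgrade projective uniqueness to genuine uniqueness of the pair $(\rho_0,\rho_T)$. Because $\Psi$ is positively homogeneous of degree $\alpha$, that is $\Psi(c\psi) = c^{\alpha}\Psi(\psi)$, a scalar multiple $c\psi^{*}$ of a fixed point is again a fixed point only if $c^{\alpha}=c$, forcing $c=1$ since $\alpha\in(0,1)$. Thus $\psi^{*}$, and hence $\rho_T = (f_T/\psi^{*})^{\alpha}$, is unique as a function rather than merely as a ray, and $\rho_0 = f_0/(A\rho_T)$ is then determined uniquely. Non-negativity is immediate, and integrability of $\rho_0$ follows from $f_0\in L^1(K_0)$ together with $A\rho_T$ being bounded below by a positive constant, completing the existence-and-uniqueness claim.
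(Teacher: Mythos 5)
Your proposal is correct and follows the same core strategy as the paper: recast the system as a fixed-point equation for a composite operator, decompose that operator into Hilbert-metric isometries (reciprocal and multiplication by a fixed positive function), Birkhoff-nonexpansive positive integral operators, and the power map $\rho \mapsto \rho^{\beta/(1+\beta)}$, whose factor $\alpha = \beta/(1+\beta) < 1$ alone supplies the strict contraction; and then use the homogeneity $\Psi(c\psi) = c^{\alpha}\Psi(\psi)$ to upgrade projective uniqueness to genuine uniqueness of the pair. This is exactly the content of the paper's Lemma~\ref{lm:O} and the final step of its proof (where the fixed point of $\mathcal{O}$ is recovered as $\|\mathcal{O}(g)\|_2^{1+\beta} g$, consistent with your exponent $1/(1-\alpha)=1+\beta$). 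The one place where you genuinely diverge is the existence step. The paper explicitly declines to apply the Banach fixed-point theorem to the unnormalized operator; instead it normalizes in $L^2$, shows the iterates are $d_H$-Cauchy, transfers this to $L^2$-Cauchyness via $\|g_k-g_l\|_2 \le e^{d_H(g_k,g_l)}-1$, and then uses uniform boundedness, equicontinuity and Arzel\`{a}--Ascoli to show the $L^2$ limit is a legitimate strictly positive continuous fixed point. You instead invoke completeness, under $d_H$, of a part of the cone (equivalently of a normalized cross-section) and apply Banach directly. That is a legitimate shortcut --- completeness of parts of a closed normal cone in a Banach space under the Hilbert/Thompson metrics is standard in the nonlinear Perron--Frobenius literature --- but it is precisely the step you leave as a citation, whereas the paper proves the needed convergence by hand; if you take your route you should also make explicit the small rescaling step that converts the fixed ray produced by Banach into an actual fixed point of $\Psi$ (solve $c^{\alpha-1}\lambda = 1$, which has a unique positive root since $\alpha<1$), which is implicit in your ``unique up to positive scaling'' phrasing. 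Everything else, including the integrability bookkeeping via H\"{o}lder and the uniform two-sided bounds coming from the positivity and continuity of the kernel on the compact product, matches the paper.
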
 

\begin{proof}
    See \Appx{}~\ref{sec:existence}. 
\end{proof}

\begin{remark}\label{rmk:proof.exist} 
The proof of Theorem~\ref{th:exist.compact} is adapted from that for Proposition 1 in~\citet{chen2016entropic}. 
Observe that the Schr\"{o}dinger system naturally yields an iterative algorithm  for computing $\rho_0, \rho_T$. 
Given an estimate for $\rho_0$,  denoted by $\hat{\rho}_0$, we can estimate  $\rho_T$ by 
\begin{align*}
    \hat{\rho}_T(x) = \left( \frac{f_T(x)}{ \int_{K_0} 
    p(x, T \mid y, 0) \hat{\rho}_0(y) \ud y } \right)^{\beta/(1 + \beta)}, 
\end{align*}
which then can be used to update $\hat{\rho}_0$ by 
\begin{align*}
   \hat{\rho}_0^{\rm{new}}(y) = \frac{f_0(y)}{ \int_{K_T} p(x, T \mid y, 0) \hat{\rho}_T(x) \ud x }. 
\end{align*}
\citet{chen2016entropic}  considered the original Schr\"{o}dinger bridge problem (i.e., $\beta = \infty$)  and showed that this updating scheme yields a strict contraction with respect to the Hilbert metric. 
We note that when $\beta < \infty$, this argument can be potentially made easier, since the mapping $\psi \mapsto \psi^{\beta/(1+\beta)}$ for a suitable function $\psi$ can be easily shown to be a strict contraction, and thus one only needs to verify the other steps in the updating are  contractions (not necessarily strict); see Lemma~\ref{lm:O} in \Appx{}.
The full scope of consequences of this observation and the existence proof for the general case are left to future study. 
\end{remark}

\section{Extension to Time Series}\label{sec:tSB} 
Recently a time series version of Problem~\ref{problem0} was studied in~\citet{tsb}, where the goal is to generate time series samples from a joint probability distribution on $\R^{d} \times \cdots \times \R^d$.  
We can generalize our Problem~\ref{problem1} to  time series data analogously. 

\begin{problem}\label{problem2}
Consider $N$ fixed time points $0 < t_1 < \cdots < t_N = T$. 
Let $\mu_N$ be a probability distribution on $\R^{d \times N}$ such that $\mu_N \ll \lambda$.  
For $\beta > 0$, find $V = \inf_{ u \in \cU  } J^N_\beta(u)$, where 
\begin{equation}\label{eq:value.SB2}
\begin{aligned}
      J^N_\beta(u) = \;& \beta \, \KL (\Law( (X_{t_i})_{1 \leq i \leq N} ), \mu_N)  
        +  \E \int_0^T \frac{\|u_t\|^2}{2\sigma^2} \ud t, 
\end{aligned}
\end{equation} 
and find the  optimal control $u^*$ such that $J^N_\beta(u^*) = V$. 
\end{problem}
 
Recall that in Section~\ref{sec:sol.RSB}, we started by considering functions $h$ such that $h(X_t, t) = \E[g(X_T) \mid \cF_t]$ for some function $g$, where $\cF_t = \sigma( (X_s)_{0 \leq s \leq t} )$. 
It turns out that this technique can be used to solve Problem~\ref{problem2} as well, but now we need to consider conditional expectations of the form $ \E[ g(X_{t_1}, \dots, X_{t_N}) \mid \cF_t ]$.  
To simplify the notation, we will write $\bx_j = (x_1, \dots, x_j)$, $\bX_j = (X_{t_1}, \dots, X_{t_j})$, and $\bX^u_j = (X_{t_1}^u, \dots, X_{t_j}^u)$; when $j = 0$, $\bx_j$, $\bX_j$, $\bX^u_j$ all denote the empty vector.  

Given a measurable function $g \colon \R^{d \times N} \rightarrow [0, \infty)$, 
the Markovian property of $X$ enables us to express the conditional expectation $g(\bX_N)$ by 
\begin{equation}\label{eq:hj1}
    \E[ g(\bX_N) \mid \cF_t ] = \sum_{j = 0}^{N-1} \ind_{  [ t_j, t_{j+1}) }(t) \cdot  h_j(X_t, t;  \bX_j),
\end{equation}
where we set $t_0 = 0$, and the function $h_j$ is defined by   
\begin{equation}\label{eq:def.hj}
\begin{aligned}
     & h_j(x, t; \bx_j) 
    =\; \E\left[g(\bx_{j}, X_{t_{j+1}}, \dots, X_{t_N} ) \,\Big|\, X_t = x \right],  
\end{aligned}
\end{equation} 
for $(x, t) \in \R^d \times [t_j, t_{j + 1}).$
Let $\cT_N g$ denote the function on  $\R^d \times [0, T) \times \R^{d \times N}$ given by 
\begin{equation}\label{eq:def.ut.ts}
\begin{aligned}
    & (\cT_N g)(x, t, \bx_N) 
    =\;   \sum_{j = 0}^{N-1} \ind_{  [ t_j, t_{j+1}) }(t) \cdot \sigma^2 \nabla \log h_j(x, t; \bx_j).  
\end{aligned}
\end{equation} 
We prove in Lemma~\ref{lm:girsanov.ts} in \Appx{} that it suffices to consider controls in the set 
$$\cU_N = \{ u \in \cU \colon u_t = (\cT_N g)(X^u_t, t, \bX^u_N) \text{ for some } g \geq 0 \}.$$ 
This result is  a generalization of Lemma~\ref{lm:girsanov} and obtained by applying Theorem~\ref{th:h} to each time interval $[t_j, t_{j+1})$ separately. 
Note although we express $u \in \cU_N$ as a function of $\bX^u_N$, by~\eqref{eq:def.ut.ts}, $u_t$ is measurable with respect to $\sigma((X^u_s)_{0\leq s \leq t})$. 
To formulate the Schr\"{o}dinger system for the time series SSB problem, 
let $p_N(\bx_N \mid y)$ denote the transition density from $X_0 = y$ to $\bX_N = \bx_N$, which is given by 
\begin{equation}\label{eq:def.pN}
    p_N(\bx_N \mid y) = p(x_1, t_1 \mid y, 0) \prod_{j = 1}^{N-1} p(x_{j+1}, t_{j+1}  \mid x_j, t_j). 
\end{equation}
 
\begin{theorem}\label{th:sol.ts}  
Consider Problem~\ref{problem2}. %and let $p(\bx_N \mid y)$ be given by~\eqref{eq:def.pN}. 
Suppose there exist $\sigma$-finite measures $\nu_0$ on $\R^d$ and $\nu_N$ on $\R^{d \times N}$ such that $\nu_0 \approx \mu_0, \nu_N \approx \mu_N$ and
\begin{align}
   \frac{\ud \mu_0}{\ud \nu_0}(y) =\;& \int p_N(\bx_N \mid y) \nu_N(\ud \bx_N),   \\ 
   \frac{\ud \mu_N}{\ud \lambda}(\bx_N) =\;& \rho_N(\bx_N)^{\frac{ 1+\beta }{\beta}} \int p_N(\bx_N \mid y) \nu_0(\ud y), 
\end{align}
where $\rho_N = \ud \nu_N / \ud \lambda$. 
Assume  $\int  ( \ud \mu_0 / \ud \nu_0 ) \ud \mu_0  < \infty$. 
Then  $u_t^* = (\cT_N \rho_N)(X^{u^*}_t, t, \bX^{u^*}_N)$ solves Problem~\ref{problem2}, where $\cT_N$ is defined by~\eqref{eq:def.ut.ts}. 
Moreover, $J_\beta^N(u^*) = -\KL(\mu_0, \nu_0) \in [0, \infty)$. 
\end{theorem}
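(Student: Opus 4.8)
The plan is to follow the proof of Theorem~\ref{th:sol.RSB} almost verbatim, replacing the single terminal variable $X^u_T$ by the vector $\bX^u_N$ and the transition density $p$ by $p_N$. First I would invoke Lemma~\ref{lm:girsanov.ts} (the time series analogue of Lemma~\ref{lm:girsanov}) with $g=\rho_N$, which gives, for every admissible $u$,
$$J^N_\beta(u) \geq \beta\,\KL(\Law(\bX^u_N),\mu_N) + \E[\log\rho_N(\bX^u_N)] - \int \log h_0(y,0)\,\mu_0(\ud y),$$
with equality when $u_t=(\cT_N\rho_N)(X^u_t,t,\bX^u_N)$. Here $h_0(y,0)=\E[\rho_N(\bX_N)\mid X_0=y]=\int p_N(\bx_N\mid y)\rho_N(\bx_N)\,\ud\bx_N$, which by the first Schr\"odinger equation of the theorem equals $\ud\mu_0/\ud\nu_0(y)$; hence the last term is exactly $\KL(\mu_0,\nu_0)$ and is a constant independent of $u$. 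The hypotheses needed to apply the lemma (namely $\E\rho_N(\bX_N)<\infty$ and strict positivity of the $h_j$) are supplied by the standing assumption $\int(\ud\mu_0/\ud\nu_0)\,\ud\mu_0<\infty$, exactly as in Remark~\ref{rmk:RSB.unique}.

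Since the right-hand side depends on $u$ only through $\mu:=\Law(\bX^u_N)$, the problem reduces to minimizing $\beta\,\KL(\mu,\mu_N)+\int\log\rho_N\,\ud\mu$ over probability measures on $\R^{d\times N}$. Writing $r(\bx_N)=\int p_N(\bx_N\mid y)\nu_0(\ud y)$, the second Schr\"odinger equation reads $f_N=\rho_N^{(1+\beta)/\beta}\,r$, so that $f_N\rho_N^{-1/\beta}=\rho_N\,r$. A completion-of-squares step (equivalently, Lemma~\ref{lm:min.KL}) then rewrites the functional as $\beta\,\KL(\mu,\mu^*)$, where $\mu^*$ is the geometric mixture with density $f_N\rho_N^{-1/\beta}$ (cf. Remark~\ref{rmk:geom.mix}). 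This identity needs $\mu^*$ to be a probability measure, which I would verify by integrating $\rho_N\,r$ in $\bx_N$, interchanging the order of integration, and using $\int p_N\rho_N\,\ud\bx_N=h_0(y,0)=\ud\mu_0/\ud\nu_0(y)$ together with $\int(\ud\mu_0/\ud\nu_0)\,\ud\nu_0=1$. Thus the infimum over $\mu$ is $0$, attained uniquely at $\mu=\mu^*$, and the whole lower bound is $-\KL(\mu_0,\nu_0)$.

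It remains to show that the control $u^*=\cT_N\rho_N$ attains the equality in Lemma~\ref{lm:girsanov.ts} \emph{and} realizes $\Law(\bX^{u^*}_N)=\mu^*$; the equality is immediate from the lemma, so the crux—and the step I expect to be the main obstacle—is the terminal-law computation. Unlike the single-interval case, the optimally controlled process is now a concatenation of $N$ Doob $h$-transforms, one per interval $[t_j,t_{j+1})$, driven by the functions $h_j(\cdot,\cdot;\bX_j)$ of~\eqref{eq:def.hj}. The structural fact I would extract by applying Theorem~\ref{th:h} on each interval is the knot consistency $h_j(x_{j+1},t_{j+1};\bx_j)=h_{j+1}(x_{j+1},t_{j+1};\bx_{j+1})$, which is just the tower property for the Markov process $X$. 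Using it, the density of $\bX^{u^*}_N$ given $X_0=y$ factors as $p_N(\bx_N\mid y)$ times a telescoping product of $h_j$-ratios that collapses to $\rho_N(\bx_N)/h_0(y,0)$; integrating against $\mu_0$ and using $\mu_0(\ud y)/h_0(y,0)=\nu_0(\ud y)$ yields $\rho_N(\bx_N)\,r(\bx_N)=f_N\rho_N^{-1/\beta}$, precisely the density of $\mu^*$. Collecting these facts gives $J^N_\beta(u^*)=-\KL(\mu_0,\nu_0)$, and since $J^N_\beta\ge 0$ always while the term is finite under the integrability assumption, the value lies in $[0,\infty)$. Verifying that the piecewise $h$-transform genuinely composes to the stated joint density, with the requisite integrability at each knot, is the only essentially new ingredient beyond the single-marginal argument.
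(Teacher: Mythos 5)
Your proposal is correct and follows essentially the same route as the paper: the lower bound via Lemma~\ref{lm:girsanov.ts} with $g=\rho_N$, the reduction to a static KL minimization over $\Law(\bX^u_N)$ identifying the geometric mixture $f_N^{\beta/(1+\beta)}k_N^{1/(1+\beta)}$ as the optimal terminal law, and the verification that the concatenated Doob $h$-transform realizes this law. The only cosmetic differences are that you check the normalizing constant equals $1$ directly by Fubini rather than deducing it from $q_N^*$ being a density, and you write the terminal-law computation as a telescoping product of per-interval $h$-ratios where the paper uses the equivalent single change of measure $\ud\Q/\ud\P=\rho_N(\bX_N)/h_0(X_0,0)$.
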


\begin{proof}
    See \Appx{}~\ref{sec:proofs}. 
\end{proof}

Comparing the Schr\"{o}dinger system in Theorem~\ref{th:sol.RSB} and that in Theorem~\ref{th:sol.ts}, we see that the solution to Problem~\ref{problem2} has essentially the same structure as that to Problem~\ref{problem1}. 
The only difference is that in Theorem~\ref{th:sol.RSB} the Schr\"{o}dinger system is constructed by using the joint distribution of $(X_0, X_T)$, while in Theorem~\ref{th:sol.ts} it is replaced by the joint distribution of $(X_0, \bX_N)$.  
We also note that~\citet{tsb} only considered the time series Schr\"{o}dinger bridge problem with $\mu_0$ being a Dirac measure, and by letting $\beta \rightarrow \infty$, Theorem~\ref{th:sol.ts} gives the solution to their problem in the general case.

\section{Experiments}\label{sec:sim}

\subsection{Problem Setup}
We consider an application of SSB to robust generative modeling in the following scenario. 
Let $\DR$ denote a large collection of high-quality samples with distribution $\muR$, and let $\DT$ be a small set of noisy samples with distribution $\muT$. Our objective is to generate realistic samples resembling those in $\DT$, but due to the limited availability of training samples,  we want to leverage information from $\DR$ to enhance the sample quality. 

A natural idea is to use SSB as a regularization method to mitigate overfitting to the noisy samples in $\DT$. This can be implemented in two steps. 
For simplicity, we assume in this section that the uncontrolled process $X$ is given by $X_t = \sigma W_t$; that is, we assume $X_0 = 0$  and $b \equiv 0$ in~\eqref{eq:udp}. Then $X_T$ has density  $\phi_{\sigma \sqrt{T}}$ (recall this is the density of the normal distribution with mean $0$ and covariance matrix $(\sigma^2 T) I$).   
Let $f_{\REF}$ denote the density of $\muR$ with respect to the Lebesgue measure. 
Let $X^{\REF} = (X^{\REF}_t)_{\subT}$ be the Schr\"{o}dinger bridge targeting $\muR$ evolving by 
\begin{equation}\label{eq:def.Xref}
    \ud X_t^{\REF} = b^{\REF}(X_t^{\REF}, t) \ud t + \sigma \ud W_t,  
\end{equation}
for $t \in [0, T]$, where 
\begin{align*}
   b^{\REF}(X_t, t) =  \sigma^2 \nabla \log  \E\left[  \frac{ f_{\REF}(X_T) }{ \phi_{\sigma \sqrt{T}}(X_T) } \,\Big|\, X_t = x \right]. %\int p(z, T \mid x, t)  \frac{\rho_1}{ \phi_1}(z) \ud z. 
\end{align*}
Theorem~\ref{th:dp1} implies that $X^{\REF}_T$ has distribution $\muR$. 
Next, we solve Problem~\ref{problem1} using $X^{\REF}$ as the reference process and $\muT$ as the target distribution. This yields the process $X^{\TGT}$ with dynamics given by 
\begin{equation}\label{eq:def.Xtgt}
\ud X^{\TGT}_t= b^{\TGT}(X_t^{\TGT}, t) \ud t + \sigma \ud W_t,  
\end{equation}
for $t \in [0, T]$, where 
\begin{equation}
\begin{aligned}
    & b^{\TGT}(X_t^{\TGT}, t)  =   b^{\REF}(X_t^{\TGT}, t) + 
     \sigma^2 \nabla \log \E\left[  \left( \frac{ f_{\TGT}(X_T^{\REF}) }{ f_{\REF}(X_T^{\REF}) } \right)^{\beta/(1+\beta)} \,\Big|\, X_t^{\REF} = x \right].
    %\int p(z, T \mid x, t) \frac{c^{-1}\rho_1^{1/(1+\beta)}\rho_2^{\beta/(1+\beta)}}{ \phi_1}(z) \ud z.
\end{aligned}
\end{equation} 
By Theorem~\ref{th:sol.dirac}, the distribution of $X_T^{\TGT}$ will be close to $\muT$ if $\beta$ is relatively large. 
It turns out that there is no need to train $X^{\REF}$ and $X^{\TGT}$ separately. The following lemma shows that we can directly train a Schr\"{o}dinger bridge targeting a geometric mixture of $\muR$ and $\muT$. 

\begin{lemma}\label{lm:two.SB}
    Let $X_t = \sigma W_t$ and $X^{\REF}$ and $X^{\TGT}$ be as given in~\eqref{eq:def.Xref} and~\eqref{eq:def.Xtgt}. Equivalently, we can express the drift of $X^{\TGT}$ by 
    \begin{equation}\label{eq:drift.two.SB}
    \begin{aligned}
          & b^{\TGT}(X_t^{\TGT}, t)  
        =\;  \sigma^2 \nabla \log \E\left[    \frac{  f_{\REF}(X_T)^{ \frac{1}{1+\beta}}  f_{\TGT}(X_T)^{\frac{\beta}{1+\beta}} }{ \phi_{\sigma \sqrt{T}}(X_T) } \,\Big|\, X_t  = x \right].
    \end{aligned}
    \end{equation}
\end{lemma}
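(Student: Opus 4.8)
The plan is to exploit the additivity of the two drifts together with the Doob $h$-path description of the reference bridge. Abbreviate the two functions entering the drifts by
\[
h^{\REF}(x,t) = \E\!\left[ \frac{f_{\REF}(X_T)}{\phi_{\sigma\sqrt T}(X_T)} \,\Big|\, X_t = x \right], \qquad
h^{\TGT}(x,t) = \E\!\left[ \left( \frac{f_{\TGT}(X^{\REF}_T)}{f_{\REF}(X^{\REF}_T)} \right)^{\beta/(1+\beta)} \,\Big|\, X^{\REF}_t = x \right],
\]
so that, by the definition~\eqref{eq:def.Xtgt} of $b^{\TGT}$ and the identity $\nabla\log(h^{\REF}h^{\TGT}) = \nabla\log h^{\REF} + \nabla\log h^{\TGT}$, we have $b^{\TGT} = \sigma^2\nabla\log h^{\REF} + \sigma^2\nabla\log h^{\TGT} = \sigma^2 \nabla \log\big(h^{\REF} h^{\TGT}\big)$. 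Since $\nabla\log$ is insensitive to multiplicative constants, it suffices to identify the product $h^{\REF}(x,t)\,h^{\TGT}(x,t)$ with the single conditional expectation on the right-hand side of~\eqref{eq:drift.two.SB}.

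The crux is to re-express $h^{\TGT}$, which is an expectation along the \emph{controlled} process $X^{\REF}$, as an expectation along the \emph{uncontrolled} process $X$. By Theorem~\ref{th:h}(iv), $X^{\REF}$ is a Doob $h$-path process with $h = h^{\REF}$, and hence its transition density is
\[
p^{\REF}(z, T \mid x, t) = p(z, T \mid x, t)\, \frac{h^{\REF}(z, T)}{h^{\REF}(x, t)}.
\]
At the terminal time the function reduces to its defining integrand, $h^{\REF}(z,T) = f_{\REF}(z)/\phi_{\sigma\sqrt T}(z)$ (here we use $X_0 = 0$, so that $p(z, T \mid 0, 0) = \phi_{\sigma\sqrt T}(z)$). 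Writing $h^{\TGT}(x,t) = \int p^{\REF}(z, T\mid x, t)\,(f_{\TGT}(z)/f_{\REF}(z))^{\beta/(1+\beta)}\,\ud z$, substituting the kernel above, and pulling the $z$-independent factor $1/h^{\REF}(x,t)$ outside the integral gives
\[
h^{\REF}(x,t)\,h^{\TGT}(x,t) = \int p(z, T \mid x, t)\, \frac{f_{\REF}(z)}{\phi_{\sigma\sqrt T}(z)} \left( \frac{f_{\TGT}(z)}{f_{\REF}(z)} \right)^{\beta/(1+\beta)} \ud z.
\]

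It then remains to simplify the integrand, which is routine exponent bookkeeping: using $1 - \tfrac{\beta}{1+\beta} = \tfrac{1}{1+\beta}$, the factor $f_{\REF}(z)\,(f_{\TGT}(z)/f_{\REF}(z))^{\beta/(1+\beta)}$ collapses to $f_{\REF}(z)^{1/(1+\beta)} f_{\TGT}(z)^{\beta/(1+\beta)}$. Hence the right-hand side is exactly $\E[\, f_{\REF}(X_T)^{1/(1+\beta)} f_{\TGT}(X_T)^{\beta/(1+\beta)}/\phi_{\sigma\sqrt T}(X_T) \mid X_t = x\,]$, and applying $\sigma^2\nabla\log$ to both sides yields~\eqref{eq:drift.two.SB}.

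The only step requiring care is this change of variables through the $h$-transform kernel: one must correctly identify $X^{\REF}$ as a Doob $h$-path process and read off its transition density from Theorem~\ref{th:h}, and must confirm that the terminal value $h^{\REF}(\cdot, T)$ is the bare ratio $f_{\REF}/\phi_{\sigma\sqrt T}$ rather than a further-conditioned expectation. Everything else is algebra. I would also briefly verify that the interchange is legitimate — all densities are strictly positive (since $\phi_{\sigma\sqrt T} > 0$) and the relevant conditional expectations are finite under the standing assumptions — so that no integrability issue obstructs the manipulation.
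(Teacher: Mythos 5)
Your proof is correct and is essentially the argument in the paper: both reduce to showing $h^{\REF}h^{\TGT}$ equals the conditional expectation of the geometric-mixture ratio along the uncontrolled process, by transferring the expectation defining $h^{\TGT}$ from $X^{\REF}$ back to $X$. The only cosmetic difference is that you invoke the Doob $h$-path transition density from Theorem~\ref{th:h}(iv), whereas the paper carries out the same change of measure explicitly via the likelihood-ratio martingale $Z^{\REF}_T/Z^{\REF}_t$ from Theorem~\ref{th:h}(ii); these are the same computation.
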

\begin{proof}
    See \Appx{}~\ref{sec:proof.two.SB}. 
\end{proof}

\begin{remark}\label{rmk:two.SB}
    Lemma~\ref{lm:two.SB} follows from a change-of-measure argument. It still holds if $X$ is not a Brownian motion but $X$ solves the SDE~\eqref{eq:udp} with $X_0 = x_0 \in \R^d$ almost surely (see \Appx{}~\ref{sec:proof.two.SB}). 
    %However, the assumption that $\mu_0$ (the initial distribution of $X$) is a Dirac measure    cannot be dropped.  
\end{remark}

The assumption that $\mu_0$ (the initial distribution of $X$) is a Dirac measure greatly simplifies the calculations and enables us to directly target the unnormalized density function $f_{\REF}^{1/(1+\beta)}  f_{\TGT}^{\beta/(1+\beta)}$. We will propose in the next subsection a score matching algorithm for learning this geometric mixture distribution. 
For applications where a general initial distribution is desirable, one may need to build iterative  algorithms by borrowing ideas from the iterative proportional fitting procedure~\citep{debortoli2023diffusion}. 

\begin{example}\label{ex:normal.mix}
For an illustrative toy example, let $\muR$ be a mixture of four bivariate normal distributions with means $(1, 1), (1, -1), (-1, 1)$, $(-1, -1)$ respectively and the same covariance matrix $0.05^2 I$. Let $\muT$ be a mixture of two normal distributions with means $(1.2, 0.8)$, $(-1.5, -0.5)$ respectively and the same covariance matrix $0.5^2 I$. 
So $\muT$ essentially contains two components of $\muR$ but with small bias and much larger noise. 
Since the density functions are known, we can directly simulate a Schr\"{o}dinger bridge process targeting  $f_{\REF}^{1/(1+\beta)}  f_{\TGT}^{\beta/(1+\beta)}$ using the method described in Remark~\ref{rmk:simulate}. 
We provide the results in \Appx{}~\ref{sec:sim.gaussian.mix}, which show that by targeting a geometric mixture with a moderate value of $\beta$, we can  effectively compel  the terminal distribution of the controlled process to acquire a covariance structure similar to $\muR$. 
\end{example}

\subsection{A Score Matching Algorithm for Learning Geometric Mixtures}\label{sec:exp.score.alg}
Let $\mu^*$ denote a probability distribution with density function  $f^*(x) = C^{-1} f_{\REF}(x)^{1/(1+\beta)}  f_{\TGT}(x)^{\beta/(1+\beta)}$ where $C$ is the normalizing constant. 
To generate samples from $\mu^*$, one can use existing score-based diffusion model methods, but, as we will see shortly, one major challenge is how to train the score functions without samples from the distribution $\mu^*$. 

Let $\mu^*_\sigma$ be the probability distribution with density  
\begin{equation}\label{eq:def.fsigma}
    f^*_\sigma(x)=\int f^*(x) \phi_\sigma(x - y) \ud y, 
\end{equation}
which can be thought of as a smoothed version of $f^*$, and suppose that we can generate samples from the distribution $\mu^*_\sigma$. Solving the Schr\"{o}dinger bridge problem with initial distribution $\mu^*_\sigma$ and terminal distribution $\mu^*$, we obtain the controlled process $X^*$ with $\Law(X^*_0) = \mu^*_\sigma$ and dynamics given by 
\begin{equation}\label{eq:sde.sbp}
\begin{aligned}
    \ud X^*_t =\;& b^*(X_t^*, t) \ud t + \sigma \ud W_t,   \\
    \text{ where } b^*(x, t) =\;& \sigma^2 \nabla \log f^*_{\sigma \sqrt{T - t}}(x). 
\end{aligned}
\end{equation}
The process $X^*$ satisfies  $\Law(X^*_T) = \mu^*$  (note that this result is a special case of  Example~\ref{ex:rev} with $\beta = \infty$). 

We now describe how to simulate the dynamics given in~\eqref{eq:sde.sbp} and generate samples from $\mu^*_\sigma$. 
First, to learn the drift function in~\eqref{eq:sde.sbp}, we propose to combine the score matching technique  with importance sampling. 
Let $s_\theta(x, \tilde{\sigma})$ denote our approximation to $\nabla \log f^*_{\tilde{\sigma}}(x)$, 
where  $\tilde{\sigma} \in [0, \sigma \sqrt{T}]$ 
and the unknown parameter $\theta$ typically denotes a neural network.  
According to the well-known score matching technique~\citep{hyvarinen2005estimation, vincent2011connection}, we can estimate $\theta$ for a given $\tilde{\sigma}$ by minimizing the objective function  
$\E_{x \sim \mu^*} [ L (x, \theta; \tilde{\sigma})]$,  where 
\begin{align*} 
     L (x, \theta; \tilde{\sigma}) 
   =\;& \E_{\tilde{x} \sim \mathcal{N}(x, \tilde{\sigma}^2 I)} 
    \left[\left\| s_\theta( \tilde{x}, \tilde{\sigma}) - \nabla_{\tilde{x}} \log f^*_{\tilde{\sigma}}(\tilde{x} \mid x)\right\|^2  \right] \\
    =\;& \E_{\tilde{x} \sim \mathcal{N}(x, \tilde{\sigma}^2 I)} 
    \left[\left\| s_\theta( \tilde{x}, \tilde{\sigma}) + \frac{\tilde{x} - x}{\tilde{\sigma}^2 } \right\|^2  \right]. 
\end{align*}
Unfortunately, the existing score matching methods estimate $\E_{x \sim \mu^*} [ L (x, \theta; \tilde{\sigma})]$ by using samples from $\mu^*$, which are not applicable to our problem since we only have access to samples from $\muR$ and $\muT$ but not from $\mu^*$. 
We propose to use importance sampling to tackle this issue. 
Let $\tilde{\mu}$ denote an auxiliary distribution with density $q = \ud \tilde{\mu} / \ud \lambda$, and assume that we can generate samples from $\tilde{\mu}$ (e.g. we can let $\tilde{\mu} = \muR, \muT$ or the smoothed versions of them). 
By a change of measure, we can express $\E_{x \sim \mu^*} [ L (x, \theta; \tilde{\sigma})]$ as 
\begin{equation}\label{eq:is.score}
    C^{-1} \, \E_{x \sim \tilde{\mu}} \left[ L (x, \theta; \tilde{\sigma}) \left( \frac{f_{\REF}(x)}{q(x)} \right)^{\frac{1}{1 + \beta}} 
    \left( \frac{f_{\TGT}(x)}{q(x)} \right)^{\frac{\beta}{1 + \beta}}  
     \right]. 
\end{equation}
The density ratios $f_{\REF} / q$ and $f_{\TGT}/q$ can be learned by using samples from $\muR, \muT, \tilde{\mu}$ and minimizing the logistic regression loss as in~\cite{pmlr-v139-wang21l}. We do not need to know $C$ since it does not affect the drift term in~\eqref{eq:sde.sbp}. 
In particular, when $\tilde{\mu} = \muR$, we get 
\begin{align*} 
 & \E_{x \sim \mu^*} [ L (x, \theta; \tilde{\sigma})] 
=\;
    C^{-1} \, \E_{x \sim  \muR} \left[ L (x, \theta; \tilde{\sigma}) \left( \frac{f_{\TGT}(x)}{f_{\REF}(x)} \right)^{\frac{\beta}{1 + \beta}}  \right], 
\end{align*}
where the ratio $f_{\TGT} / f_{\REF}$ can be learned by using samples from $\DT$, $\DR$. 
A similar expression can be easily derived for $\tilde{\mu} = \muT$. 
This importance sampling method enables us to estimate the expectation $\E_{x \sim \mu^*} [ L (x, \theta; \tilde{\sigma})]$ by using samples from $\DT, \DR$.   
By averaging over $\tilde{\sigma}$ randomly drawn from the interval $[0, \sigma \sqrt{T}]$, we obtain an estimated loss for the parameter $\theta$. Minimizing this loss we get the estimate $\hat{\theta}$, and we can approximate $\nabla \log f^*_{\tilde{\sigma}}(x)$ using $s_{\hat{\theta}}(x, \tilde{\sigma})$. 

Simulating the SDE~\eqref{eq:sde.sbp} requires us to generate samples from $\mu^*_\sigma$. There are a few possible approaches. 
First, if $\sigma$ is chosen to be sufficiently large, one may argue that we can simply approximate $f^*_\sigma$ using the normal density $\phi_\sigma$, and thus we only need to draw $X_0$ from the normal distribution. This is  the approach taken in the score-based generative models based on backward SDEs~\citep{song2019generative}. 
Second, one can run an additional Schr\"{o}dinger bridge process with terminal distribution $\mu^*_\sigma$, as proposed in the two-stage Schr\"{o}dinger bridge algorithm of~\cite{pmlr-v139-wang21l}. The dynamics they considered is simply the solution to Problem~\ref{problem0} given in Theorem~\ref{th:dp1}, where the uncontrolled process is a Brownian motion started at $0$. The drift term is approximated by a Monte Carlo scheme (see Remark~\ref{rmk:simulate} and \Appx{}~\ref{sec:MC.simulation}), which also requires an estimate of the density ratio $f^*_\sigma / \phi_\sigma$ and the score of $ f^*_\sigma$. 
Third, one can also  run the Langevin diffusion
$\ud \tilde{X}_t =  \sigma^2 \nabla \log f^*_\sigma( \tilde{X}_t ) \ud t + \sigma \ud W_t$ for a sufficiently long duration, which has stationary distribution $\mu^*_\sigma$. We found in our experiments that this approach yields more robust results, probably because it does not require Monte Carlo sampling which may yield drift estimates with large variances. 
 
\subsection{An Example Using MNIST}\label{sec:mnist}
We present a numerical example using the MNIST data set~\citep{deng2012mnist}, which consists of images of handwritten digits from $0$ to $9$. All images have size $28\times28$ and pixels are rescaled to $[0, 1]$. 
To construct $\DT$, we randomly select 50 images labeled as eight and reduce their quality by adding Gaussian noise with mean 0 and variance $0.4$; see Fig.~\ref{fig:DT} in \Appx{}~\ref{sec:mnist.supp}. 
The reference data set $\DR$ includes all images that are not labeled as eight, a total of 54,149 samples.  

We first run the algorithm of~\cite{pmlr-v139-wang21l} using only  $\DT$, and the generated samples are noisy as expected; see Fig.~\ref{fig:path2} in \Appx{}~\ref{sec:mnist.supp}. 
Next, we run our algorithm described in Section~\ref{sec:exp.score.alg} with different choices of $\beta$. The density ratio and scores are trained by using one GPU (RTX 6000).  
Generated samples are shown in Fig.~\ref{fig:path1}, and we report in Table~\ref{table:fid} the Fr\'{e}chet Inception Distance score~\citep{heusel2017gans} assessing  the disparity between our generated images (sample size = 40K) and the collection of clean digit 8 images from the MNIST dataset (sample size $\approx$ 6K). 
When $\beta$ is too small, the generated images do not resemble those in $\DT$ and we frequently observe the influence of other digits;  if $\beta = 0$, the images are essentially generated from $\muR$.   
When $\beta$ is too large, the influence from the reference data set becomes negligible, but the algorithm tends to overfit to the noisy data in $\DT$. 
For moderate values of $\beta$, we observe a blend of characteristics from both data sets, and when $\beta=1.5$, FID is minimized (among all tried values) and we get high-quality images of digit 8. 
This experiment illustrates that the information from $\DT$ can help capture the structural features associated with the digit 8, while the samples from $\DR$ can guide the algorithm towards effective noise removal. 
In \Appx{}~\ref{sec:mnist.supp}, we further analyze the generated images using t-SNE plots and inception scores~\citep{salimans2016improved}. 

\begin{figure}[H]
    \centering
    \includegraphics[width=0.65\linewidth]{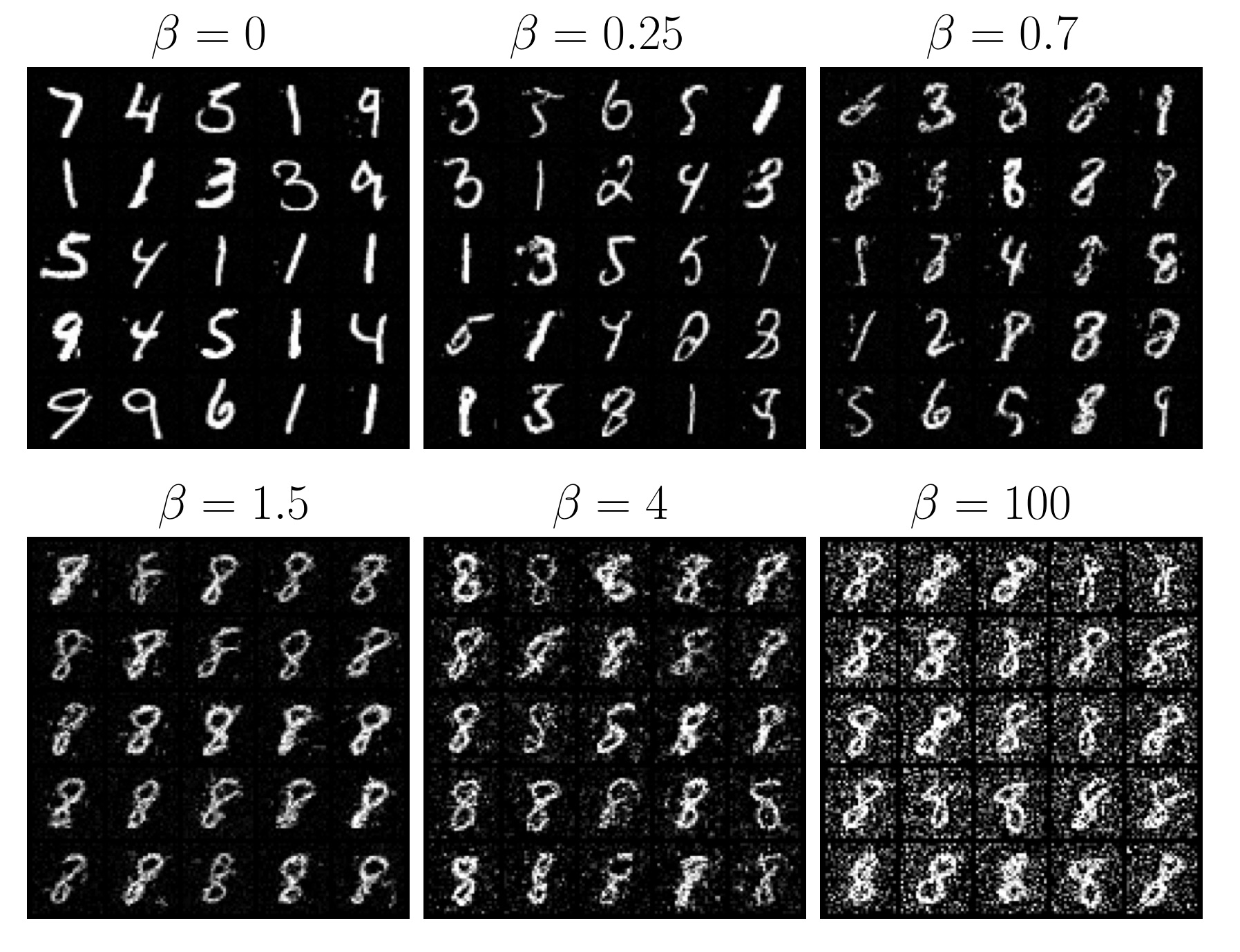}    
    %\caption{Samples generated by our algorithm for six different choices of $\beta$.} 
    \caption{SSB Samples for MNIST Experiment}\label{fig:path1}
\end{figure}

\begin{table}[H]
  \caption{FID Scores for MNIST Experiment}\label{table:fid}
%\caption{FID scores comparing our images with all clean digit 8 images in MNIST. }
\begin{center}
   \begin{tabular}{ccccccc}
      \hline
     $\beta$  & 0 & 0.25 & 0.7 & 1.5 & 4 & 100 \\
      \hline 
     FID  &  67.4 & 66.4 & 61.0 & 56.3 & 110.9 & 182.4 \\
      \hline
  \end{tabular}
\end{center}
\end{table}

\section{Concluding Remarks}\label{sec:conc}
We propose the soft-constrained Schr\"{o}dinger bridge (SSB) problem and find its solution. Our theory encompasses the existing stochastic control results for Schr\"{o}dinger bridge in~\citet{dai1991stochastic} and~\citet{tsb} as special cases. 
The paper focuses on the theory of SSB, and the numerical examples are designed to be uncomplicated but illustrative. 
More advanced algorithms for solving Problems~\ref{problem1} and~\ref{problem2} in full generality need to be developed. 
It will also be interesting to study the applications of SSB to other generative modeling tasks, such as conditional generation, style transfer~\citep{shi2022conditional, shi2023schrowave, su2022dual} and time series data generation~\citep{tsb}. 
Some further generalization of the objective function may be considered as well; for example, one can add a time-dependent cost as in~\cite{pra1990markov} or consider a more general form of the terminal cost.

\section{Acknowledgements}
The authors would like to thank Tiziano De Angelis for valuable discussion on the problem formulation, 
Yun Yang for the helpful conversation on the numerics, and anonymous reviewers for their comments and suggestions.  
JG and QZ were supported in part by NSF grants  DMS-2311307 and DMS-2245591. 
XZ acknowledges the support from NSF DMS-2113359. 
The numerical experiments were conducted with the computing resources provided by Texas A\&M High Performance Research Computing.

\bibliographystyle{plainnat}
\bibliography{Ref}

\clearpage 
\newpage 

\begin{center}
    \Large{\textbf{APPENDIX}}  
\end{center}
\medskip 

The code for all experiments (Cauchy simulation in \Appx{}~\ref{sec:cauchy}, normal mixture simulation in \Appx{}~\ref{sec:sim.gaussian.mix} and MNIST example in Section~\ref{sec:mnist}) is at the GitHub repository \href{https://github.com/gargjhanvi/Soft-constrained-Schrodinger-Bridge-a-Stochastic-Control-Approach}{https://github.com/gargjhanvi/Soft-constrained-Schrodinger-Bridge-a-Stochastic-Control\\-Approach}.  
\setcounter{section}{0} % Reset section counter to start with 1
\renewcommand{\thesection}{\Alph{section}} % Change numbering to letter
\renewcommand{\thetheorem}{\thesection\arabic{theorem}}
\setcounter{theorem}{0}
\renewcommand{\thelemma}{\thesection\arabic{lemma}}
\renewcommand{\thecorollary}{\thesection\arabic{corollary}}
\renewcommand{\thedefinition}{\thesection\arabic{definition}}
\renewcommand{\theexample}{\thesection\arabic{example}}
\setcounter{example}{0}
\renewcommand{\thefigure}{\thesection\arabic{figure}}
\setcounter{figure}{0}
\renewcommand{\thetable}{\thesection\arabic{table}}
\setcounter{table}{0}
\renewcommand{\theremark}{\thesection\arabic{remark}}
\setcounter{remark}{0}

\section{Simulation with Known Density Functions}\label{sec:toy.examples}

\subsection{Monte Carlo Simulation with Densities Known up to a Normalization Constant} \label{sec:MC.simulation} 
Let the uncontrolled process $X$ be the Brownian motion with $b\equiv 0$ and $X_0 = 0$, and set $T = 1$. 
Let  $\phi_\sigma$ denote the density of normal distribution with mean zero and covariance matrix $\sigma^2 I$. 
Given a target distribution with density $f_T$ (which we simply denote by $f$ in this section), the solution to SSB is given by 
\begin{equation}\label{eq:X.A1}
\ud X^*_{t}= u^*_t \ud t + \sigma \ud W_t, 
\end{equation}  
where the  drift $u^*$ is determined by $u^*(x, t) = \nabla\log h(x, t)$ and 
\begin{align*}
    h(x, t) \propto (1-t)^{-d/2} \int \phi_\sigma \left( \frac{z-x}{\sqrt{1 - t}}\right)\cdot\left(\frac{f(z)}{\phi_\sigma(z)}\right)^{\beta/(1 + \beta)}\ud z.  
\end{align*}
Note that to determine $u^*$, we only need to know $f$ up to a normalization constant. 
Since $\nabla \log h = h^{-1} \nabla h $, we can rewrite  $u^*$ as 
\begin{equation}\label{eq:ratio.drift}
  u^*(x,t)  =   \frac{\E_{z \sim \phi_\sigma}[r({x}+\sqrt{1-t} {z}) \nabla \log r({x}+\sqrt{1-t} {z})]}{\E_{{z} \sim \phi_\sigma}[r({x}+\sqrt{1-t} {z})]} 
\end{equation}
where we set  
$$ r(x) = \left(\frac{f(x)}{\phi_\sigma(x)}\right)^{\beta/(1 + \beta)}.$$
We can then approximate the numerator and denominator in~\eqref{eq:ratio.drift} separately using Monte Carlo samples. 

For some target distributions, this approach can be made more efficient by using importance sampling. 
When $f$ has a heavy tail (e.g. Cauchy distribution), $r(x)$ may grow super-exponentially with $x$, and Monte Carlo estimates for the numerator and denominator in~\eqref{eq:ratio.drift} with $z$ drawn from the normal distribution may have large variances. 
Observe that the numerator can be written as 
\begin{align*}
& \E_{z \sim \phi_\sigma}[r({x}+\sqrt{1-t} {z}) \nabla \log r({x}+\sqrt{1-t} {z})]  \\ 
=\;& \int \phi_\sigma(z) r({x}+\sqrt{1-t} {z})^{\beta/(1 + \beta)} \nabla \log r({x}+\sqrt{1-t} {z})  \ud z.  
\end{align*}
The term $ \nabla \log r({x}+\sqrt{1-t} {z})$ is often polynomial in $z$ (that is, it does not grow too fast). 
Hence, intuitively, the integral is likely to be well approximated by a Monte Carlo estimate with $z$ drawn from a density proportional to $\phi_\sigma(z) r({x}+\sqrt{1-t} {z})^{\beta/(1 + \beta)}$. Such a density may not be easily accessible, but if one knows the tail decay rate of $f$, one can try to find a proposal distribution for $z$ with tails not lighter than $\phi_\sigma(z) r({x}+\sqrt{1-t} {z})^{\beta/(1 + \beta)}$. In the experiment given below, we propose $z$ from some distribution with tail decay rate same as $f^{\beta/(1 + \beta)} \phi_\sigma^{1/(1 + \beta)}$.  
Letting $w$ denote our proposal density, we  express the numerator in~\eqref{eq:ratio.drift} as 
\begin{equation}\label{eq:is.num}    
\begin{aligned}
& \E_{z \sim \phi_\sigma}[r({x}+\sqrt{1-t} {z}) \nabla \log r({x}+\sqrt{1-t} {z})] \\ 
=\;& \E_{z \sim w}\left[\frac{\phi_\sigma(z) r({x}+\sqrt{1-t} {z}) }{w(z)} \nabla \log r({x}+\sqrt{1-t} {z})\right], 
\end{aligned}
\end{equation}
and estimate the right-hand side using a Monte Carlo average.  
Similarly, the denominator can be expressed by 
\begin{equation}\label{eq:is.deno}
\E_{{z} \sim \phi_\sigma }[r({x}+\sqrt{1-t} {z})] =  \E_{z \sim w}\left[\frac{\phi_\sigma(z) r({x}+\sqrt{1-t} {z}) }{w(z)}\right]
\end{equation}
and estimated by a Monte Carlo average. 

\subsection{Simulating the Cauchy Distribution} \label{sec:cauchy}
Fix $b \equiv 0$, $\sigma = 1$, $X_0 = 0$ and $T = 1$. 
Let $f$ be the density of the standard Cauchy distribution i.e., $f(x) = \pi^{-1} (1 + x^2)^{-1} $. Theorem~\ref{th:sol.dirac} shows that the solution to SSB is a Schr\"{o}dinger bridge such that the terminal distribution has density  
\begin{equation}\label{eq:f.geom.cauchy}
  f_\beta^* = C_\beta^{-1} \phi(x)^{1/(1 + \beta)} f(x)^{\beta / (1 + \beta)}, 
\end{equation}
where $C_\beta$ is the normalization constant and $\phi$ is the density of the standard normal distribution. 
We plot $f_\beta^*$ and $\log f_\beta^*$ in Figure~\ref{fig:cauchy.geom.densities} for $\beta = \infty, 100, 50, 20$. 
For $x$ close to zero, the density of $f_\beta^*(x)$ remains approximately the same for the four choices of $\beta$. 
When $\beta = \infty$, $f_\beta^*$ is the Cauchy distribution which has a heavy tail. But for any $\beta < \infty$, the tail decay rate of $f_\beta^*$ is dominated by the Gaussian component. 

\begin{figure}[H]
    \centering
    \includegraphics[width=0.38\linewidth]{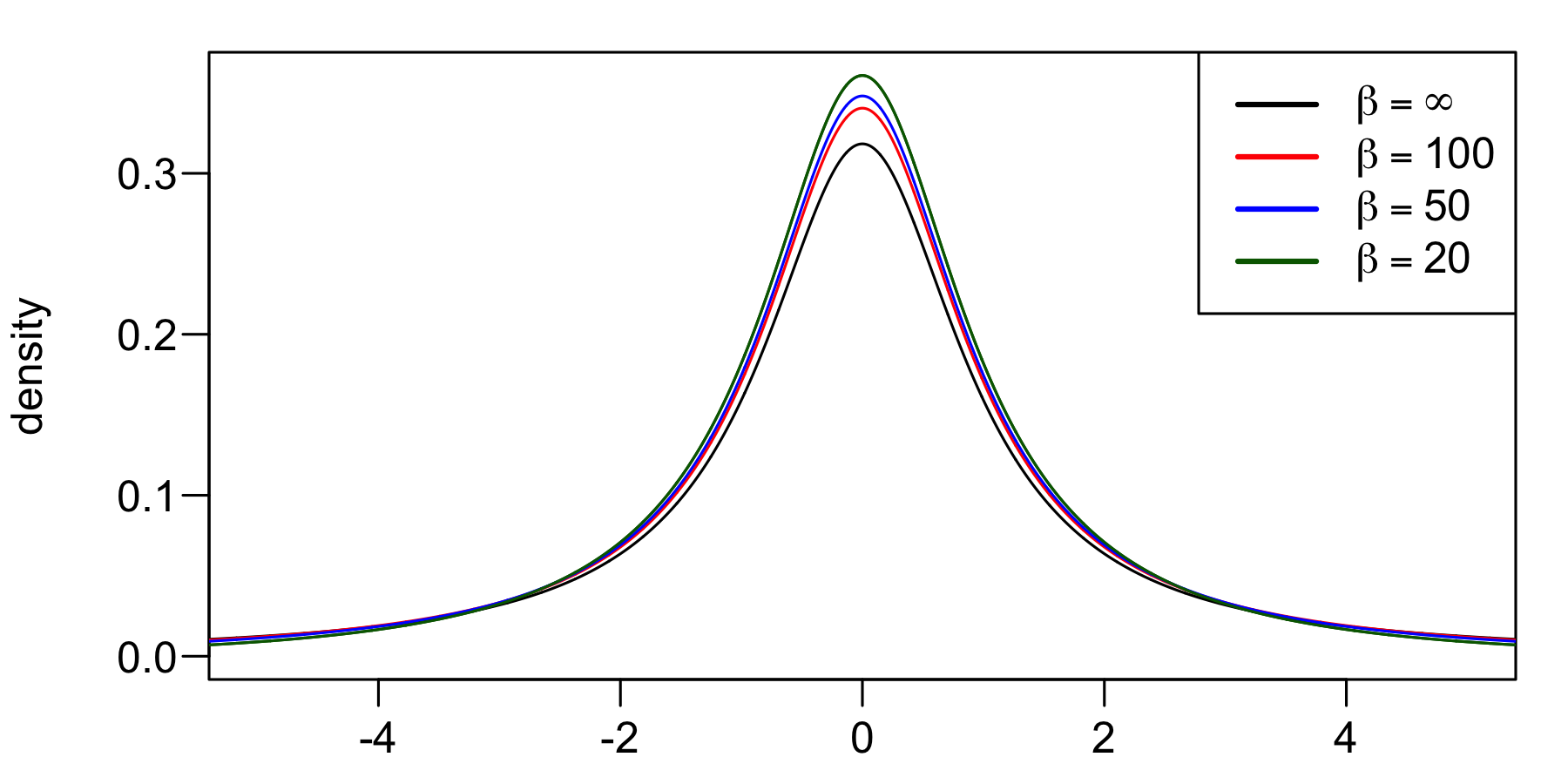}  \hspace{2cm}
    \includegraphics[width=0.38\linewidth]{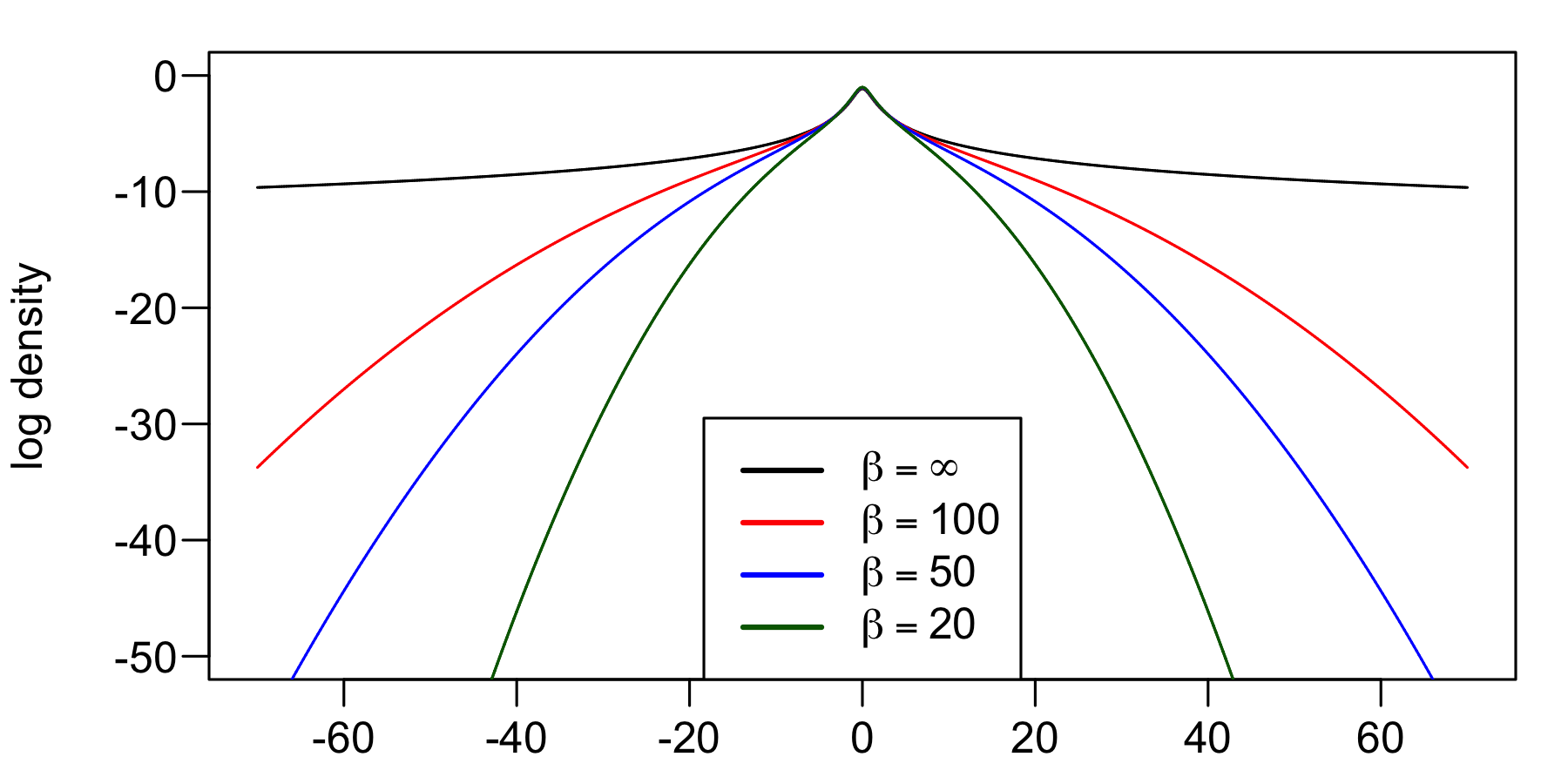} 
    \caption{Densities of  Geometric Mixtures of Normal  Cauchy Distributions.}
    \label{fig:cauchy.geom.densities}
\end{figure}  

We simulate the solution to SSB given in~\eqref{eq:X.A1} over the time interval $[0, 1]$ using the Euler-Maruyama method with $200$ time steps. The drift is approximated by  the importance sampling scheme. 
When $\beta = \infty$ (i.e., the terminal distribution is Cauchy), we let the proposal distribution $w$ in~\eqref{eq:is.num} and~\eqref{eq:is.deno} be the $t$-distribution with $2$ degrees of freedom (we have also tried directly proposing $z$ from the standard Cauchy distribution and obtained very similar results). 
When $\beta < \infty$, we let $w$ be the normal distribution with mean $0$ and variance $1+\beta$. 
We simulate the SSB process 10,000 times, and in Table~\ref{tab:cauchy.fails} we report the number of failed runs; these failures happen because Monte Carlo estimates for the numerator and denominator in~\eqref{eq:ratio.drift} become unstable when $|x|$ is large, resulting in numerical overflow. 
When $\beta = \infty$, we observe that numerical overflow is still common even if we use $1,000$ Monte Carlo samples for each estimate. In contrast, when we use $\beta \leq 100$ and only $200$ Monte Carlo samples, the algorithm becomes very stable. 
In Figure~\ref{fig:cauchy.qqplot}, 
we compare the distribution of generated samples (i.e., the distribution of $X^*_T$ with $T=1$) with their corresponding target distributions. 
The first panel compares the distribution of the samples generated with $\beta = \infty$ (failed runs ignored) with the Cauchy distribution, and the second compares the distribution the samples generated with $\beta = 100$ with the geometric mixture given in~\eqref{eq:f.geom.cauchy}.   
It is clear that simulating the Schr\"{o}dinger bridge process (i.e, using $\beta = \infty$) cannot recover the heavy tails of the Cauchy distribution, but the numerical simulation of SSB with $\beta = 100$ accurately yields samples from the geometric mixture distribution.  
Recall that the KL divergence between standard Cauchy and normal distributions is infinite, which, by Theorem~\ref{th:dp1}, means that  there is no control with finite energy cost that can steer a standard Brownian motion towards the Cauchy distribution at time $T = 1$. 
Our experiment partially illustrates the practical consequences of this fact in numerically simulating the Schr\"{o}dinger bridge, and it also suggests that SSB may be a numerically more robust alternative. 

%\begin{minipage}{0.5\linewidth}
%\vspace{0.2cm}
\begin{table}[H]
    %\caption{Number of failed attempts in 10,000 trials. $N_{\rm{mc}}$ denotes the number of Monte Carlo samples used to approximate the expectations defined in~\eqref{eq:is.num} and~\eqref{eq:is.deno}. }
    \caption{Number of Failed Attempts in 10,000 Trials. $N_{\rm{mc}}$ denotes the number of Monte Carlo samples used to approximate the expectations defined in~\eqref{eq:is.num} and~\eqref{eq:is.deno}. 
    }
    \label{tab:cauchy.fails}
\begin{center}
    \begin{tabular}{|c|cccccc|}
\hline
$N_{\rm{mc}}$ &   20 & 50 & 100 & 200 & 500 & 1000 \\
\hline
$\beta=\infty$ &   1149 & 933 & 714 & 610 & 444 & 308 \\
$\beta=100$ &   495 & 29 & 6 & 1 & 0 & 0 \\
$\beta=50$ &   124 & 18 & 6 & 0 & 0 & 0 \\
$\beta=20$ &   48 & 4 & 2 & 0 & 1 & 0 \\
\hline
\end{tabular}
\end{center}
\end{table} 
%\end{minipage}\hspace{0.5cm}
%\begin{minipage}{0.48\linewidth}
\begin{figure}[H]
    \centering
    \includegraphics[width=0.28\linewidth]{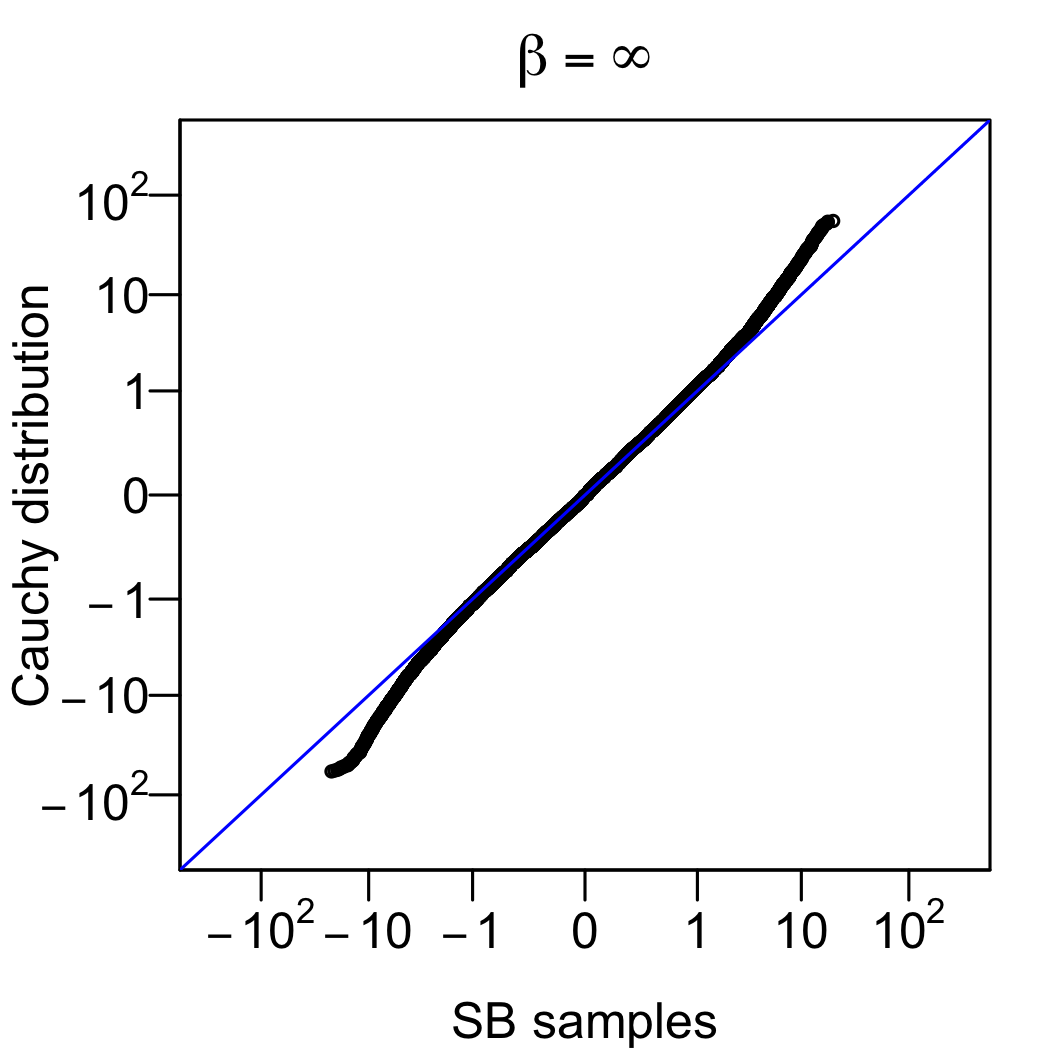}\hspace{2cm} 
    \includegraphics[width=0.28\linewidth]{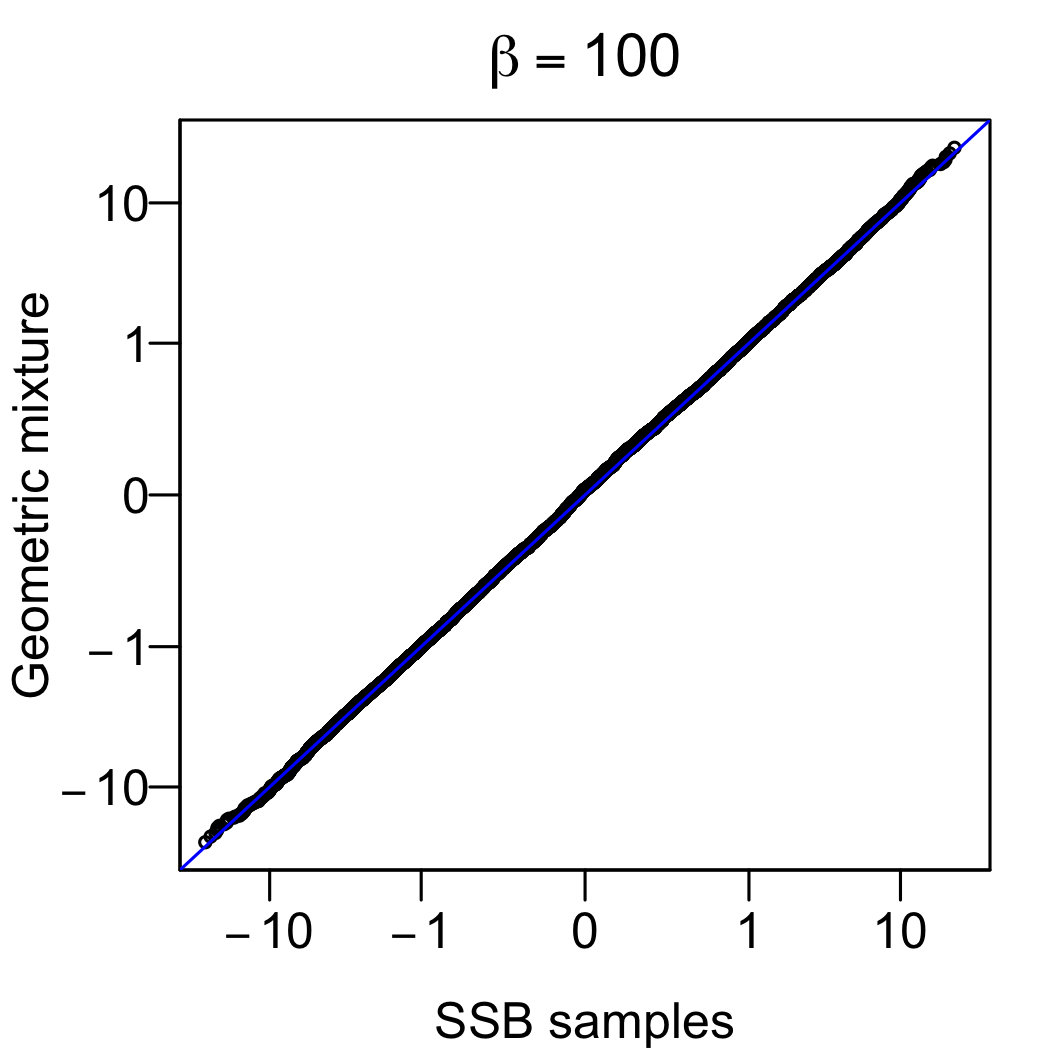}
        \caption{Q-Q plots for the SSB Samples with $N_{\rm{mc}} = 1,000$.}
    \label{fig:cauchy.qqplot}
\end{figure}
%\end{minipage}

\subsection{Simulating Mixtures of Normal Distributions}\label{sec:sim.gaussian.mix}
In Example~\ref{ex:normal.mix}, we let 
$\muR$ be a mixture of four bivariate normal distributions, 
\begin{align*}
    \muR = 0.1 \, \mathcal{N}( (1, 1), \, 0.05^2 I) + 
    0.2 \, \mathcal{N}( (-1, 1), \, 0.05^2 I) + 
    0.3 \, \mathcal{N}( (1, -1), \, 0.05^2 I) +
    0.4 \, \mathcal{N}( (-1, -1), \, 0.05^2 I), 
\end{align*}
where the weights and the mean vector of the four component distributions are different. 
Let $\muT$ be a mixture of two equally weighted bivariate normal distributions
\begin{align*}
    \muT = 0.5 \, \mathcal{N}( (1.2, 0.8), \, 0.5^2 I) + 
    0.5 \, \mathcal{N}( (-1.5, -0.5), \, 0.5^2 I). 
\end{align*}
The first component of $\muT$ has mean close to $(1, 1)$ (which is the mean vector of the first component of $\muR$), and the second component of $\muT$ has mean close to $(-1, -1)$ (which is the mean vector of the last component of $\muR$).  Hence, we can interpret $\muR$  as a distribution of high-quality samples from four different classes, and interpret $\muT$ as a distribution of noisy samples from two of the four classes. 
Let $f_\beta^* = C_\beta^{-1} f_{\REF}(x)^{1/(1 + \beta)} f_{\TGT}(x)^{\beta / (1 + \beta)}$  be the density of our target distribution. 

We generate $1,000$ samples from $f_\beta^*$ by simulating the SDE~\eqref{eq:X.A1} over the time interval $[0, 1]$ with $f = f_\beta^*$ and $\sigma = 1$.  
We use $200$ time steps for discretization and generate $200$ Monte Carlo samples at each step for estimating the drift. 
The trajectories of the simulated processes are shown in Figure~\ref{fig:mix.normal} for $\beta = 0, 2, 10, \infty$, and the samples we generate correspond to $t = 1$. 
It can be seen that when $\beta = 2$ or $10$, the majority of the generated samples form two clusters, one with mean close to $(1, 1)$ and the other with mean close to $(-1, -1)$. Further, the two clusters both exhibit very small within-cluster variation, which indicates that the noise from $\muT$ has been effectively reduced. 

\begin{figure}[H]
\centering 
\includegraphics[width=0.9\linewidth]{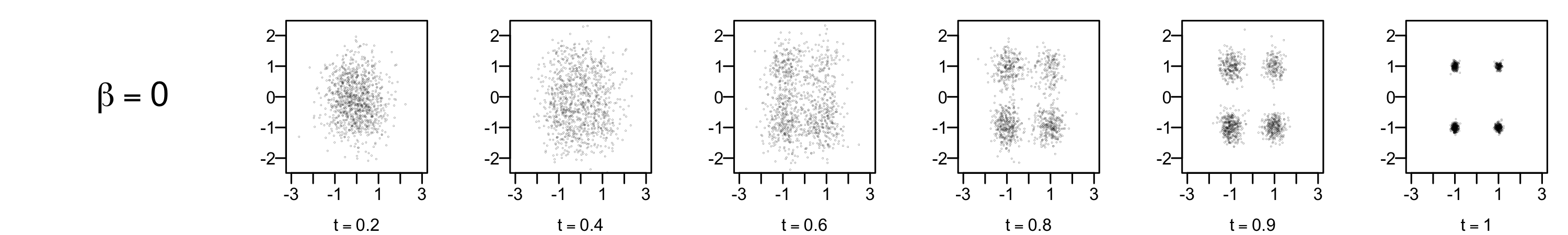} \\
\includegraphics[width=0.9\linewidth]{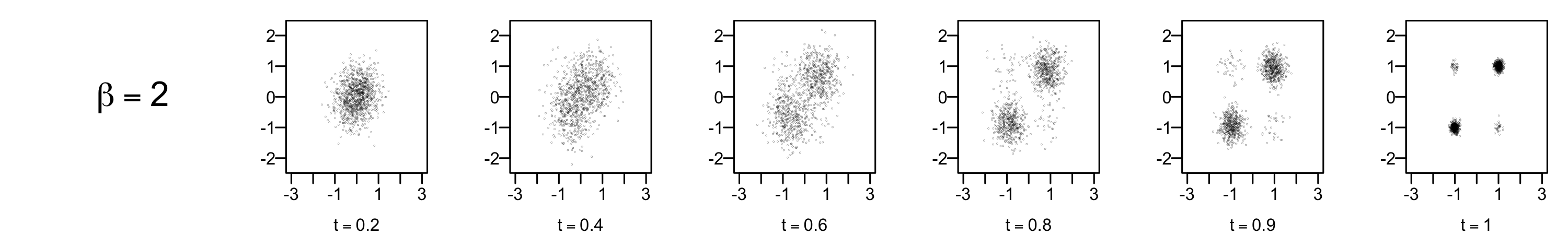} \\
\includegraphics[width=0.9\linewidth]{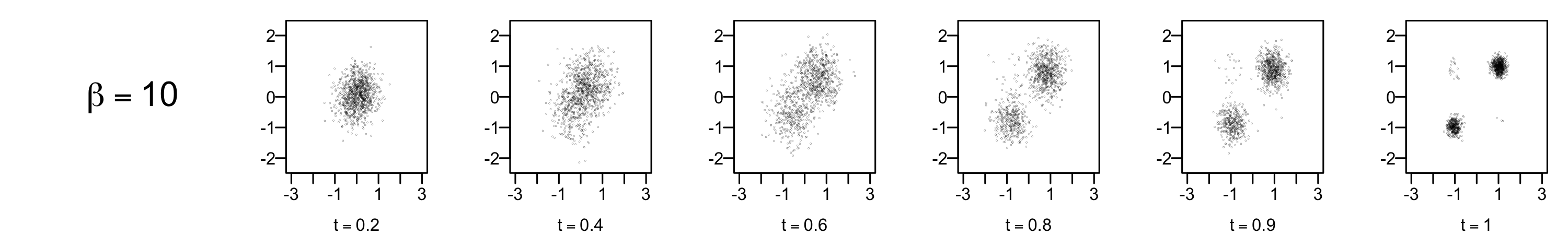} \\ 
\includegraphics[width=0.9\linewidth]{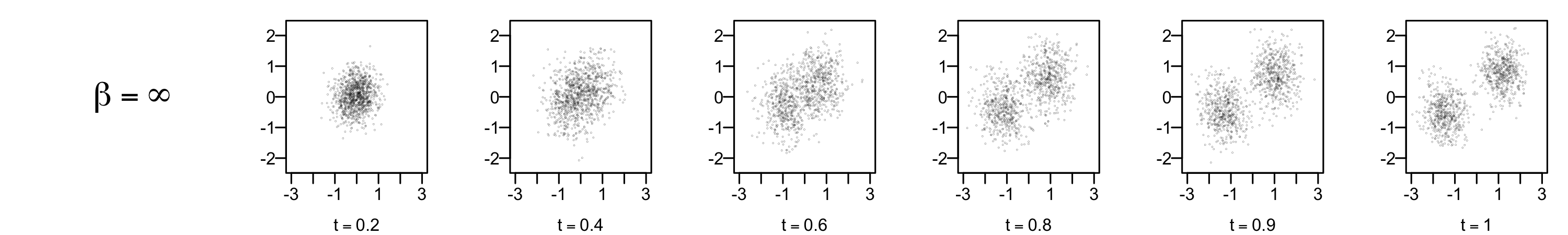}
\caption{SSB Trajectories for Normal Mixture Targets.}
\label{fig:mix.normal}
\end{figure}

\section{Auxiliary Results} \label{sec:aux}
We first present two lemmas about the minimization of Kullback-Leibler divergence. 

\begin{lemma}\label{lm:min.KL1} 
Let $(\Omega, \cF, \pi)$ be a $\sigma$-finite measure space, and let $\nu \ll \pi$ be a  measure with density $f = \ud \nu / \ud \pi$ such that $ C = \nu(\Omega) \in (0, \infty)$. 
Let $\cP_\pi$ denote the set of all probability measures absolutely continuous with respect to $\pi$. 
Then, 
\begin{align*}
   \inf_{\mu \in \cP_\pi} \KL( \mu, \nu) = - \log C. 
\end{align*}
The infimum is attained by the probability measure $\mu^*$ such that $\ud \mu^* /  \ud \pi = C^{-1} f(x)$.  
\end{lemma} 
\begin{proof}[Proof of Lemma~\ref{lm:min.KL1}]
Since $\KL(\mu, \nu) = \infty$ if $\mu \not\ll \nu$, it suffices to consider $\mu \in \cP_\pi$ such that $\mu \ll \nu$. 
It is straightforward to show that $\KL( \mu, \nu) = \KL( \mu, \mu^*) -  \log C$. 
Since $\int \ud \mu^* = C^{-1} \int f \ud \pi = C^{-1} \int \ud \nu = 1$, $\mu^*$ is a probability measure.   
The claim then follows from the fact that the KL divergence between any two probability measures is non-negative. 
\end{proof}

\begin{lemma}\label{lm:min.KL} 
Let $(\Omega, \cF, \pi)$ and $\cP_\pi$ be as given in Lemma~\ref{lm:min.KL1}. 
For $i = 0, 1$, let $\nu_i$ be a finite measure (i.e., $\nu_i(\Omega) < \infty$) with density $f_i = \ud \nu_i / \ud \pi$. Assume that $\nu_1 \ll \nu_0$. 
For $\beta \geq 0$, 
\begin{align*}
   \inf_{\mu \in \cP_\pi} \KL( \mu, \nu_0) + \beta\KL(\mu, \nu_1) = - (1 + \beta) \log C_\beta,  
\end{align*}
where $C_\beta = \int f_0(x)^{1/(1 + \beta)} f_1(x)^{\beta/(1 + \beta)} \pi (\ud x) \in (0, \infty)$.  
The infimum is attained by the probability measure $\mu^*_\beta$ such that 
$$f^*_\beta = \frac{ \ud \mu^*_\beta }{\ud \pi }=  C_\beta^{-1} f_0^{1/(1 + \beta)} f_1^{\beta/(1 + \beta)}.$$
When $\nu_0, \nu_1 \in \cP_\pi$, we have $C_\beta \in (0, 1]$. Further, 
\begin{align*}
    \lim_{\beta \rightarrow \infty} C_\beta  = \nu_1(\Omega), \quad \lim_{\beta \rightarrow \infty} \KL( \mu^*_\beta, \nu_0) = \KL(\nu_1, \nu_0). 
\end{align*}
\end{lemma}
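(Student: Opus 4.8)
The plan is to reduce the joint minimization to a single Kullback--Leibler minimization and then invoke Lemma~\ref{lm:min.KL1}. Writing $g = \ud\mu/\ud\pi$ for a candidate $\mu \in \cP_\pi$ (supported on $\{f_1 > 0\}$, as any other $\mu$ makes both sides infinite), I would first record the pointwise algebraic identity
\begin{equation*}
\log\frac{g}{f_0} + \beta \log\frac{g}{f_1} = (1+\beta)\log\frac{g}{f_0^{1/(1+\beta)} f_1^{\beta/(1+\beta)}},
\end{equation*}
which, upon integrating against $\ud\mu$, yields
\begin{equation*}
\KL(\mu,\nu_0) + \beta\, \KL(\mu,\nu_1) = (1+\beta)\, \KL(\mu, \tilde\nu_\beta),
\end{equation*}
where $\tilde\nu_\beta$ is the finite measure with density $\ud\tilde\nu_\beta/\ud\pi = f_0^{1/(1+\beta)} f_1^{\beta/(1+\beta)}$ and total mass $\tilde\nu_\beta(\Omega) = C_\beta$. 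Applying Lemma~\ref{lm:min.KL1} with $\nu = \tilde\nu_\beta$ then delivers both the optimal value $-(1+\beta)\log C_\beta$ and the minimizer $f^*_\beta$ in one stroke.

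Before invoking Lemma~\ref{lm:min.KL1} I must verify $C_\beta \in (0,\infty)$. Finiteness follows from H\"{o}lder's inequality with conjugate exponents $1+\beta$ and $(1+\beta)/\beta$, giving $C_\beta \le \nu_0(\Omega)^{1/(1+\beta)}\nu_1(\Omega)^{\beta/(1+\beta)} < \infty$; the same bound specializes to $C_\beta \le 1$ when $\nu_0,\nu_1$ are probability measures. Positivity uses $\nu_1 \ll \nu_0$: on $\{f_1 > 0\}$ one has $f_0 > 0$ ($\pi$-a.e.), so the integrand is strictly positive on a set of positive $\pi$-measure. For the limit $C_\beta \to \nu_1(\Omega)$, the integrand converges pointwise to $f_1$ (since $f_0^{1/(1+\beta)} \to 1$ wherever $f_1 > 0$), while the weighted arithmetic--geometric mean bound $f_0^{1/(1+\beta)} f_1^{\beta/(1+\beta)} \le \tfrac{1}{1+\beta} f_0 + \tfrac{\beta}{1+\beta} f_1 \le f_0 + f_1$ provides a $\pi$-integrable dominating function, so dominated convergence gives the result.

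For the final claim I would first rewrite, using $\int f^*_\beta\, \ud\pi = 1$ and $g := f_1/f_0 = \ud\nu_1/\ud\nu_0$,
\begin{equation*}
\KL(\mu^*_\beta,\nu_0) = -\log C_\beta + \frac{\beta}{1+\beta}\, C_\beta^{-1}\int g^{\beta/(1+\beta)} \log g \,\ud\nu_0,
\end{equation*}
so that, since $C_\beta \to 1$ and $\beta/(1+\beta) \to 1$, everything reduces to showing $\int g^{\alpha}\log g\,\ud\nu_0 \to \int g\log g\,\ud\nu_0 = \KL(\nu_1,\nu_0)$ as $\alpha = \beta/(1+\beta)\uparrow 1$. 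The main obstacle is that $\log g$ is unbounded and $\KL(\nu_1,\nu_0)$ may be infinite, so a single dominated-convergence argument will not suffice. I would split the domain: on $\{g\ge 1\}$ the map $\alpha \mapsto g^\alpha\log g$ is nonnegative and increases to $g\log g$, so monotone convergence gives the limit (allowing the value $+\infty$); on $\{g<1\}$ one checks that $g^\alpha\log g \uparrow g\log g$ as well, with $|g^\alpha\log g|\le 1/(\alpha e)$, which is bounded for $\alpha$ bounded away from $0$, so dominated convergence applies against the finite measure $\nu_0$. Summing the two pieces yields the claim, with the $\{g\ge 1\}$ part — and in particular keeping the argument valid when $\KL(\nu_1,\nu_0)=\infty$ — being the step that requires the most care.
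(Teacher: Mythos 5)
Your proposal is correct and follows essentially the same route as the paper: the key step in both is the identity $\KL(\mu,\nu_0)+\beta\,\KL(\mu,\nu_1)=(1+\beta)\,\KL(\mu,\tilde\nu_\beta)$ followed by an appeal to Lemma~\ref{lm:min.KL1}, with H\"{o}lder's inequality giving $C_\beta<\infty$. The only differences are cosmetic: the paper obtains $C_\beta\to\nu_1(\Omega)$ by splitting on $\{f_0\ge f_1\}$ and using monotone convergence where you use dominated convergence with the AM--GM majorant, and your explicit treatment of $\lim_\beta \KL(\mu^*_\beta,\nu_0)$ (splitting on $\{g\ge 1\}$, handling the case $\KL(\nu_1,\nu_0)=\infty$) fills in a step the paper dispatches with ``an analogous argument.''
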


\begin{proof}[Proof of Lemma~\ref{lm:min.KL}]
Since $\nu_1 \ll \nu_0$, we have $C > 0$. 
By H\"{o}lder's inequality, 
$$C_\beta \leq  \left( \int f_0 \ud \pi \right)^{1 / (1 + \beta)} \left( \int f_1 \ud \pi \right)^{\beta / (1 + \beta)} = 
\nu_0(\Omega)^{1 / (1 + \beta)}\nu_1(\Omega)^{\beta / (1 + \beta)} < \infty.$$ 
Clearly, if $\nu_0, \nu_1 \in \cP_\pi$, the above inequality implies $C \leq 1$. 
Observe that 
\begin{align*}
     \KL( \mu, \nu_0) + \beta\KL(\mu, \nu_1) = (1 + \beta) \KL(\mu, \nu)
\end{align*}
where $\nu$ is the measure with density $\ud \nu / \ud \pi = f_0^{1/(1 + \beta)} f_1^{\beta/(1 + \beta)} = C_\beta f^*_\beta$.  Hence, we can apply Lemma~\ref{lm:min.KL1}  to prove that $\mu^*_\beta$ is the minimizer. 

To prove the convergence as $\beta \rightarrow \infty$, let $A = \{x \colon f_0(x) \geq f_1(x) \}$ and write $C_\beta = C_{\beta,0} + C_{\beta,1}$, where 
\begin{align*}
    C_{\beta,0} = \int_A f_0(x)^{1/(1 + \beta)} f_1(x)^{\beta/(1 + \beta)} \pi (\ud x), \quad \quad 
    C_{\beta,1} = \int_{\Omega \setminus A} f_0(x)^{1/(1 + \beta)} f_1(x)^{\beta/(1 + \beta)} \pi (\ud x).
\end{align*}
The integrands of both $C_{\beta,0}$ and $C_{\beta,0}$ are monotone in $\beta$. Hence, using $C_0 < \infty$ and monotone convergence theorem, we find that  
$$\lim_{\beta \rightarrow \infty} C_\beta = \int \lim_{\beta \rightarrow \infty}f_0(x)^{1/(1 + \beta)} f_1(x)^{\beta/(1 + \beta)} \pi (\ud x) = \int f_1(x) \pi (\ud x) = \nu_1(\Omega), $$
which also implies $f^*_\beta \rightarrow f_1 / \nu_1(\Omega)$ pointwise. An analogous argument using monotone convergence theorem proves that $\lim_{\beta \rightarrow \infty} \KL( \mu^*_\beta, \nu_0) = \KL(\nu_1, \nu_0)$. 
\end{proof}

The next result is about the controlled SDE~\eqref{eq:CSDE} with $u = \sigma^2 \nabla \log h$. It is adapted from  Theorem 2.1 of~\citet{dai1991stochastic}. Our proof is provided for completeness. 

\begin{theorem}\label{th:h}
Suppose Assumption~\ref{ass.b} holds. Let $X = (X_t)_{\subT}$ be a weak solution to~\eqref{eq:udp} with initial distribution $\mu_0$ and transition density $p(x, t \mid y, s)$. 
Define 
\begin{equation*}
    h(x, t) = \int  g(z) p(z, T \mid x, t) \ud z, \quad \text{for } (x, t) \in \R^d \times [0, T), 
\end{equation*}   
for some measurable $g \geq 0$   such that $\E g(X_T) < \infty$. 
Assume  $h > 0$ on $\R^d \times [0, T)$.   
Let $X^h = (X^h_t)_{\subT}$  be a weak solution with $X^h_0 = X_0$ to the SDE 
\begin{equation}\label{eq:rt.SDE} 
\ud X_t^h = \left[ b(X_t^h, t) + \sigma^2 \nabla \log h (X_t^h, t) \right] \ud t + \sigma \ud W_t, \text{ for } t \in [0, T]. 
\end{equation} 
Then, we have the following results.
\begin{enumerate}[(i)]
     \item $h \in \C^{2, 1}(\R^d \times [0, T) )$  and 
\begin{equation}\label{eq:h.pde}
   \frac{\partial h}{\partial t} + \sum_{i = 1}^d b_i \frac{\partial h}{\partial x_i} + \frac{\sigma^2}{2} \sum_{i=1}^d \frac{\partial^2 h}{\partial x_i^2} = 0, \text{ for } (x, t) \in \R^d \times [0, T).  
\end{equation} 
    \item The weak solution $X^h$ to the SDE~\eqref{eq:rt.SDE} exists. Indeed, we can define a probability measure $\Q$ by $\ud \Q / \ud \P = g(X_T)/h(X_0, 0)$ such that he law of $X$ under $\Q$ is the same as the law of $X^h$ under $\P$. 
    \item The process $X^h$ satisfies 
    \begin{equation}
        \E \left[  \log \frac{ g(X^h_T) }{h(X_0^h, 0)} \right] = \E\left[   \int_0^{T}  \frac{\sigma^2}{2} \| \nabla \log h (X_s^h, s)\|^2 \ud s \right]. 
    \end{equation}
    \item  The transition density of $X^h$ is given by 
    \begin{equation}\label{eq:def.ph}
        p_h(x, t \mid y, s) = \frac{p(x, t \mid y, s) h (x, t)}{h(y, s)}, \quad \text{ for } 0 \leq s < t \leq T.
    \end{equation}
    Hence, $X^h$ is a Doob's $h$-path process, and the density of the distribution of $X^h_T$ is  
    \begin{equation}\label{eq:distr.XT} 
   \frac{ \ud  \Law(X^h_T)}{\ud \lambda}(x) = g(x) \int  \frac{p(x, T \mid y, 0)}{h(y, 0)}   \mu_0(\ud y). 
\end{equation}
\end{enumerate}
\end{theorem}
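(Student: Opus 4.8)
The plan is to handle the four parts in sequence, since part (i) supplies the parabolic equation~\eqref{eq:h.pde} that drives the change-of-measure arguments behind (ii)--(iv). For (i), I would invoke classical parabolic regularity. Under Assumption~\ref{ass.b} the drift $b$ is bounded, continuous, and uniformly H\"older in $x$, so the transition density $p(z, T \mid x, t)$ defined in~\eqref{eq:def.p} is the fundamental solution of the backward operator $\partial_t + \cL$; it is $\C^{2,1}$ in the backward variables $(x, t)$ and admits Gaussian-type bounds on $p$ together with its first and second spatial derivatives and its time derivative, locally uniformly on $\R^d \times [0, T - \delta]$ for each $\delta > 0$. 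Combined with $\E\, g(X_T) < \infty$, these bounds let me differentiate under the integral sign by dominated convergence, yielding $h \in \C^{2,1}(\R^d \times [0, T))$ and the equation $\partial_t h + \cL h = 0$, which is~\eqref{eq:h.pde}.

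For (ii), I would set $N_t = h(X_t, t)$ and $M_t = N_t / h(X_0, 0)$. By the Markov property and the definition $h(x,t) = \E[g(X_T) \mid X_t = x]$, we have $N_t = \E[g(X_T) \mid \cF_t]$, so $N$ is a $\P$-martingale closed by $g(X_T)$ and $M$ is a nonnegative martingale with $M_0 = 1$; path continuity gives $M_T = g(X_T)/h(X_0, 0)$ as its terminal value. I would then define $\Q$ on $\cF_T$ by $\ud\Q / \ud\P = M_T$, which is a probability measure since $\E[M_T] = 1$. Applying It\^o's formula to $N_t$ and using~\eqref{eq:h.pde} to cancel the drift gives $\ud N_t = \sigma\, \nabla h(X_t, t) \cdot \ud W_t$, hence $\ud M_t = M_t\, \sigma\, \nabla \log h(X_t, t) \cdot \ud W_t$, so $M$ is the stochastic exponential of $\int_0^{\cdot} \sigma\, \nabla \log h(X_s, s) \cdot \ud W_s$. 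By Girsanov's theorem, $\tilde{W}_t = W_t - \int_0^t \sigma\, \nabla \log h(X_s, s)\, \ud s$ is a $\Q$-Brownian motion, and substituting into~\eqref{eq:udp} shows that $X$ solves~\eqref{eq:rt.SDE} under $\Q$. This proves weak existence of $X^h$ and that the law of $X$ under $\Q$ equals that of $X^h$ under $\P$.

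Parts (iii) and (iv) are then computations under this change of measure. For (iii), I would expand $\log M_T = \int_0^T \sigma\, \nabla \log h \cdot \ud W_s - \tfrac12 \int_0^T \sigma^2 \| \nabla \log h \|^2\, \ud s$, substitute $\ud W_s = \ud \tilde{W}_s + \sigma\, \nabla \log h\, \ud s$, and take $\E^{\Q}$; the $\tilde{W}$-integral has mean zero, leaving $\E^{\Q}[\log M_T] = \tfrac12\, \E^{\Q} \int_0^T \sigma^2 \| \nabla \log h \|^2\, \ud s$, which is the claimed identity once rewritten in terms of $X^h$ via (ii). For (iv), the abstract conditioning rule under the change of measure gives, for $s < t < T$, $\E^{\Q}[f(X_t) \mid X_s = y] = \E^{\P}[f(X_t)\, h(X_t, t)/h(X_s, s) \mid X_s = y] = h(y, s)^{-1} \int f(x)\, h(x, t)\, p(x, t \mid y, s)\, \ud x$, which identifies the transition density $p_h$ as in~\eqref{eq:def.ph} and exhibits $X^h$ as a Doob $h$-path process. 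Finally, $\E^{\Q}[f(X_T)] = \E^{\P}[f(X_T)\, g(X_T)/h(X_0, 0)]$, conditioned on $X_0 \sim \mu_0$, yields the terminal density~\eqref{eq:distr.XT}.

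The main obstacle is the regularity step (i): it is the only genuinely technical ingredient, resting on fundamental-solution estimates for non-degenerate parabolic operators with merely H\"older coefficients and on dominating the spatial and temporal derivatives of $p(z, T \mid x, t)$ uniformly enough, away from $t = T$, to differentiate under the integral. Everything downstream is a Girsanov computation; the care needed there is purely about integrability---verifying that $M$ is a true (not merely local) martingale and that the $\tilde{W}$-stochastic integral in (iii) has zero mean, both of which follow once $\E^{\Q} \int_0^T \| \nabla \log h(X_s, s) \|^2\, \ud s < \infty$. A minor bookkeeping point is that $h$ is defined only on $[0, T)$, so $M_T$ must be taken as the martingale limit and then identified with $g(X_T)/h(X_0, 0)$ using path continuity.
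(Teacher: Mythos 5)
Parts (i), (ii) and (iv) of your proposal track the paper's proof closely: the paper obtains (i) by citing Proposition~2.1 of \citet{dai1991stochastic} (itself resting on the fundamental-solution estimates you sketch), and your treatment of the change of measure and of the transition density is the same Girsanov/Doob $h$-transform computation the paper performs. The genuine problem is part (iii). You write $\log M_T$ as a $\tilde W$-stochastic integral plus $\tfrac12\int_0^T\sigma^2\|\nabla\log h\|^2\,\ud s$ and take $\E^{\Q}$, asserting that the stochastic integral has mean zero. That assertion requires $\E^{\Q}\int_0^T\|\nabla\log h(X_s,s)\|^2\,\ud s<\infty$, which you acknowledge---but this quantity is (up to a constant) exactly the right-hand side of the identity you are trying to prove, and the theorem neither assumes it is finite nor provides any a priori bound on it ($\nabla\log h$ can blow up as $t\uparrow T$, and $g$ is only assumed measurable with $\E\,g(X_T)<\infty$). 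So the argument is circular in the finite case and silent in the infinite case, where the identity must still hold as an equality in $[0,\infty]$.

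The paper closes this gap with two ingredients you are missing. First, it localizes: with $t_n=T\wedge\tau_n$, $\tau_n=\inf\{t:|X^h_t|\ge n\}$, the integrand $\nabla\log h(X^h_s,s)$ is bounded on $[0,t_n]$, so the stochastic integral genuinely has zero mean and one gets
$\E[\log h(X^h_{t_n},t_n)]=\E[\log h(X^h_0,0)]+\E\int_0^{t_n}\tfrac{\sigma^2}{2}\|\nabla\log h(X^h_s,s)\|^2\,\ud s$, after which monotone convergence handles the energy term even when it is infinite. Second---and this is the delicate step---one must show $\liminf_n\E[\log h(X^h_{t_n},t_n)]=\E[\log g(X^h_T)]$. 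The paper does this by transferring to the uncontrolled measure, writing $\E[\log Y^h_t]=\E[(Y_t/Y_0)\log Y_t]$ with $Y_t=h(X_t,t)$, and sandwiching the limit using Fatou's lemma (valid because $y\mapsto y\log y$ is convex and bounded below) together with the fact that $Y_t\log Y_t$ is a $\P$-submartingale. Without some argument of this type, passing to the limit in the $\log h$ term is unjustified, since $\log h(X^h_{t_n},t_n)$ is not dominated and $\log g$ need not be integrable from above or below separately. You would need to supply both of these steps for part (iii) to stand.
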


\begin{proof}[Proof of Theorem~\ref{th:h}]
We follow the arguments of~\citet{jamison1975markov, dai1991stochastic}. 
Under Assumption~\ref{ass.b}, Proposition 2.1 of~\citet{dai1991stochastic} (which is adapted from the result of~\cite{jamison1975markov}) implies that $h \in \C^{2, 1}(\R^d \times [0, T) )$ and $\cL h + \frac{\partial h}{\partial t} = 0$ on $\R^d \times [0, T)$, where 
\begin{equation} \label{eq:generator}
\cL  = \sum_{i = 1}^d b_i \frac{\partial }{\partial x_i} + \frac{\sigma^2}{2} \sum_{i=1}^d \frac{\partial^2}{\partial x_i^2} 
\end{equation} 
denotes the generator of $X$. This proves part (i).   

To prove part (ii),  we first apply It\^{o}'s lemma to get 
\begin{align*}
    \log h(X_t, t) =\;& \log h(X_0, 0) + \int_0^t \tilde{b}(X_s, s) \ud s  
    + \int_0^t  \sigma \nabla \log h(X_s, s) \ud W_s,  
\end{align*}
for any $t \in [0, T)$, where 
\begin{align*}
    \tilde{b}(x, t) = \frac{1}{h(x, t)} \left((\cL h )(x, t)  + \frac{\partial h}{\partial t} (x, t) \right) - \frac{\sigma^2}{2} \| \nabla \log h (x, t)\|^2 = - \frac{\sigma^2}{2} \| \nabla \log h (x, t)\|^2.  
\end{align*}
Since $g(X_T)$ is integrable,  $h(X_t, t)$ is a uniformly integrable martingale on $[0, T)$ and converges to $g(X_T)$ both a.s. and in $L^1$. Letting $t \uparrow T$, we obtain that 
\begin{equation}\label{eq:ito.g}
    \log g(X_T) =  \log h(X_0, 0) - \int_0^T  \frac{\sigma^2}{2} \| \nabla \log h (X_s, s)\|^2 \ud s  
    + \int_0^T  \sigma   \nabla \log h(X_s, s) \ud W_s.   
\end{equation}
Write $h(x, T) = g(x)$, $Y_t = h(X_t, t)$ and $Z_t = Y_t / Y_0$. We have shown that $Y_t$ and $Z_t$ are  martingales on $[0, T]$. Since $\E[Z_T] = 1$, we can define a probability measure $\Q$ by $\ud \Q / \ud \P = Z_T$. 
By Girsanov theorem and the expression for $Z_T$ given in~\eqref{eq:ito.g},  the law of $X$ under $\Q$ is the same as the law of $X^h$ under $\P$; in other words, $\ud \P^h_X  = (g(x_T) / h(x_0, 0)) \ud \P_X$, where $\P_X$ (resp. $\P_X^h$) is the probability measure induced by $X$ (resp. $X^h$) on the space of continuous functions.

For part (iii), choose $t_n = T \wedge \tau_n$, where $\tau_n =\inf \{ t \colon |X^h_t| \geq n\}$.  
Analogously to~\eqref{eq:ito.g}, we can apply It\^{o}'s lemma to get 
\begin{align*} 
        \log h(X^h_{t_n}, {t_n}) =  \log h(X^h_0, 0)  + \int_0^{t_n} \frac{\sigma^2}{2} \| \nabla \log  h(X^h_s, s) \|^2 \ud s  + \int_0^{t_n}  \sigma   \nabla \log h(X^h_s, s) \ud W_s. 
\end{align*} 
Since $h$ is smooth, $| \nabla \log h(X^h_s, s) |$ is bounded on $[0, t_n]$. Taking expectations on both sides,  we find that 
\begin{equation} 
\E \left[  \log h(X^h_{t_n}, {t_n})  \right] = \E\left[  \log h(X^h_0, 0) +  \int_0^{t_n}  \frac{\sigma^2}{2} \| \nabla \log h(X^h_s, s)  \|^2 \ud s \right]. 
\end{equation}
Letting $n \rightarrow \infty$ and applying monotone convergence theorem, we get 
\begin{equation}\label{eq:cost.identity}
\liminf_{n \rightarrow \infty} \E \left[  \log h(X^h_{t_n}, {t_n})  \right] %= \liminf_{n \rightarrow \infty} \E\left[  \log h(X^h_0, 0) +  \int_0^{t_n}  \frac{\sigma^2}{2} \| \nabla \log h(X^h_s, s)  \|^2 \ud s \right] 
= \E\left[  \log h(X^h_0, 0) +  \int_0^{T}  \frac{\sigma^2}{2} \| \nabla \log h(X^h_s, s)  \|^2 \ud s \right]. 
\end{equation}
It remains to argue that the left-hand side converges to $ \E \left[  \log h(X^h_{T}, T)  \right]$.  
Write $Y^h_t = h(X^h_t, t)$. Using the change of measure and $h(x, T) = g(x)$, we get 
\begin{equation*} 
    \E\left[  \log Y^h_t \right] = \E \left[ \frac{ Y_t }{Y_0} \log Y_t \right].   
\end{equation*}
The function $f(y) = y \log y$ is bounded below and convex. 
If $(t_n)$ is chosen such that $t_n \uparrow T$,  we have 
\begin{align*}
     \E \left[ \lim_{n \rightarrow \infty}  Y_{t_n} \log Y_{t_n} \mid \cF_0 \right]  \leq \liminf_{n \rightarrow \infty}  \E \left[ Y_{t_n} \log Y_{t_n} \mid \cF_0  \right]   \leq \E \left[  Y_T \log  Y_T \mid \cF_0 \right], 
\end{align*}
where Fatou's lemma is applied to obtain the first inequality, and the second inequality follows from the fact that $Y_t \log Y_t$ is a submartingale. Since $Y_t$ converges a.s. to $Y_T = g(X_T)$, we get 
\begin{equation}
    \liminf_{n \rightarrow \infty}  \E \left[ Y_{t_n} \log Y_{t_n} \mid \cF_0  \right]   = \E \left[  Y_T \log  Y_T \mid \cF_0 \right]. 
\end{equation}
Taking expectations on both sides we get 
\begin{equation}\label{eq:xlogx.converge}
        \liminf_{n \rightarrow \infty}  \E \left[ \log Y^h_{t_n} \right]   = \E \left[ \log Y^h_{T} \right]. 
\end{equation}
Combining it with~\eqref{eq:cost.identity} proves part (iii).

Consider part (iv). For any bounded and measurable function $f$, we apply the change of measure to the conditional expectation to get 
\begin{align*}
 \E_{\Q}[ f(X_t) \mid \cF_s ] = \frac{ \E[ f(X_t) h(X_t, t) \mid \cF_s] }{ h(X_s, s) }.  
\end{align*}
The claim then follows from Fubini's theorem. 
\end{proof}

\section{Proofs for the Main Results} \label{sec:proofs}
Recall that $X$ denotes the solution to the SDE 
\begin{equation}  
\ud X_t= b(X_t, t) \ud t +  \sigma \ud W_t
\end{equation} 
over the time interval $[0, T]$ with initial distribution $\Law(X_0) = \mu_0$, and $X^u$ denotes the solution to the controlled SDE
\begin{equation} 
\ud X^u_t=\left[ b (X^u_t, t )+ u_t \right] \ud t + \sigma \ud W_t, 
\end{equation} 
with $X_0^u = X_0$. The transition density of the uncontrolled process $X$ is denoted by $p(x, t \mid y, s)$.

\begin{proof}[Proof of Lemma~\ref{lm:girsanov}]
Since $u$ is admissible, it must satisfy $\E \int_0^T \|u_t\|^2 \ud t < \infty$. 
Therefore, Novikov's condition is satisfied, and we can apply Girsanov theorem  to get
\begin{align*}
    \E \left[ \frac{g(X_T)}{h(X_0, 0)} \right] =\;& \E\biggr[  \frac{g(X_T^u)}{h(X_0^u, 0)} \exp\biggr\{ 
    -\int_0^T \frac{u_t}{\sigma} \ud W_t - \int_0^T \frac{ \|u_t\|^2}{2 \sigma^2} \ud t \biggr\} \biggr] \\
    \geq\;& \exp\biggr\{ \E\biggr[  \log \frac{g(X_T^u)}{h(X_0^u, 0)} 
    -\int_0^T \frac{u_t}{\sigma} \ud W_t-   \int_0^T \frac{ \|u_t\|^2}{2 \sigma^2} \ud t   \biggr] \biggr\}.
\end{align*} 

By part (ii) of Theorem~\ref{th:h}, the left-hand side of the above inequality is equal to $1$. Hence, 
\begin{align*}
   & 0 \geq \E\left[  \log \frac{g(X_T^u)}{h(X_0^u, 0)} 
    -\int_0^T \frac{u_t}{\sigma} \ud W_t - \int_0^T \frac{ \|u_t\|^2}{2 \sigma^2} \ud t   \right]  = \E\left[  \log \frac{g(X_T^u)}{h(X_0^u, 0)} 
  - \int_0^T \frac{ \|u_t\|^2}{2 \sigma^2} \ud t   \right], 
\end{align*} 

where we have used $\E \int_0^T \| u_t\|^2 \ud t < \infty$ again to obtain $\E[  \int_0^T u_t \ud W_t   ] = 0$. 
By the definition of $J_\beta$, we have 
\begin{align*}
    J_\beta(u) = \;& \beta \, \KL (\Law(X^u_T), \mu_T) + \E\left[  \int_0^T \frac{ \|u_t\|^2}{2 \sigma^2} \ud t    \right]    \\
    \geq  \;& \beta \, \KL (\Law(X^u_T), \mu_T) + \E\left[  \log \frac{g(X_T^u)}{h(X_0^u, 0)}   \right]. 
\end{align*} 
Since $\Law(X_0^u) = \mu_0$, we have $\E[ \log h(X_0^u, 0) ] = \int \log h(x, 0) \mu_0(\ud x)$. 
By part (iii) of Theorem~\ref{th:h}, the equality is attained when $u_t = (\cT g)(X^u_t, t)$.   
\end{proof}

\begin{proof}[Proof of Theorem~\ref{th:sol.RSB}]
First, we find using~\eqref{eq:rsb2}  that 
$$\E \rho_T(X_T) =  \int   \frac{ \ud \mu_0 }{ \ud \nu_0 } \ud \mu_0,$$
which is finite by assumption. Since $p(z, T \mid x, t) > 0$ whenever $t < T$, we have $h > 0$ on $\R^d \times [0, T)$. 
Hence, Theorem~\ref{th:h} and Lemma~\ref{lm:girsanov} can be applied with $g = \rho_T$.  
In addition to setting $\rho_T = \ud \nu_T / \ud \lambda$ and $f_T = \ud \mu_T / \ud \lambda$, we define  
\begin{equation}\label{eq:densities}
\begin{aligned}
   k_T(x) =  \int p(x, T \mid y, 0) \nu_0(\ud y), \quad 
   q_T^u(x) =  \frac{\ud \Law(X^u_T)}{\ud \lambda}(x). 
\end{aligned}
\end{equation}
There is no loss of generality in assuming $q_T^u$ exists, since $\mu_T \ll \lambda$ and  $\KL (\Law(X^u_T), \mu_T) = \infty$ if $\Law(X_T^u) \not\ll \mu_T$.  
For later use, we note that by~\eqref{eq:rsb2} and~\eqref{eq:rsb1}, 
\begin{align}
    &h(y, 0) =  \frac{\ud \mu_0}{\ud \nu_0}(y), \label{eq:rsb3} \\
    &\rho_T(x) =  \left( \frac{ f_T(x) }{ k_T(x) }  \right)^{\beta/(1 + \beta)}.  \label{eq:rsb4}
\end{align} 

Let $\eta$ be the $\sigma$-finite measure on $\R^d$ with density $f_T^{\beta/(1+\beta)} k_T^{1/(1+\beta)}$. 
By Lemma~\ref{lm:girsanov}, for any admissible control $u$, 
\begin{align*}%\label{eq:bound.Jbeta}
    J_\beta(u) \geq \;&   \beta \, \KL (\Law(X^u_T), \mu_T)   + \E\left[  \log   \rho_T(X_T^u) \right] - \int \log h(x, 0) \mu_0(\ud x) \\ 
     = \;&  \beta \, \KL (\Law(X^u_T), \mu_T)  +\E\left[  \log   \rho_T(X_T^u) \right] - \KL(\mu_0, \nu_0) \\
     = \;&   \E\left[ \beta \log \frac{ q_T^u(X^u_T)}{ f_T(X^u_T) }   + \frac{\beta}{1 + \beta} \log \frac{ f_T(X^u_T) }{k_T(X^u_T) }  \right]   -   \KL(\mu_0, \nu_0) \\
     = \;& \beta \KL( \Law(X^u_T),  \eta ) - \KL(\mu_0, \nu_0), 
\end{align*}
where the second line follows from~\eqref{eq:rsb3} and the third from~\eqref{eq:rsb4}. 
The measures $\mu_0, \nu_0, \eta$ do not depend on $u$. By Lemma~\ref{lm:min.KL1},  
\begin{equation}
\begin{aligned}
    \KL( \Law(X^u_T),  \eta ) \geq - \log C, \quad \text{where }  C = \int  f_T(x)^{\beta/(1+\beta)} k_T(x)^{1/(1+\beta)} \ud x. 
\end{aligned}
\end{equation}
We will later prove that $C = 1$. 
Combining the above two displayed inequalities, we get 
\begin{align}
     J_\beta(u) \geq\;&  \beta \KL( \Law(X^u_T),  \eta ) -   \KL(\mu_0, \nu_0) \label{eq:Jbeta2} \\
     \geq \;&  - \log C - \KL(\mu_0, \nu_0). \label{eq:Jbeta3}
\end{align}
For $u^*_t = \sigma^2 \nabla \log h(X_t^{u^*}, t)$, we know by Lemma~\ref{lm:girsanov} that the equality in~\eqref{eq:Jbeta2} is attained. Hence, it is optimal if we can show that the equality in~\eqref{eq:Jbeta3} is also attained. By Lemma~\ref{lm:min.KL1}, this is equivalent to showing that 
\begin{equation}\label{eq:opt.qT}
    q_T^*(x) = C^{-1} f_T(x)^{\beta/(1+\beta)} k_T(x)^{1/(1+\beta)}, 
\end{equation} 
where we write $q_T^* = q_T^{u^*}$. 
By part (iv) of Theorem~\ref{th:h}, we have 
\begin{align*}
    q_T^*(x) =\;&  \rho_T(x) \int  \frac{p(x, T \mid y, 0)}{h(y, 0)}   \mu_0(\ud y) \\
    \overset{(i)}{=}\;&  \rho_T(x)  \int p(x, T \mid y, 0)   \nu_0(\ud y)  \\
    =\;& \rho_T(x)  k_T(x)  \\
    \overset{(ii)}{=}\;&   f_T(x)^{\beta/(1 + \beta)} 
    k_T(x)^{1/(1 + \beta)}, 
\end{align*}
where step $(i)$ follows from~\eqref{eq:rsb3} and step (ii) follows from~\eqref{eq:rsb4}.  
So $u^*$ is optimal, and the normalizing constant $C$ in~\eqref{eq:opt.qT} equals $1$,  from which it follows that $J_\beta(u^*) = -\KL(\mu_0, \nu_0)$. 
Finally, one can apply Jensen's inequality and the assumption $\int ( \ud \mu_0 / \ud \nu_0 ) \ud \mu_0 < \infty$ to show that $|\KL(\mu_0, \nu_0)| < \infty$, which concludes the proof. 
\end{proof}

\begin{proof}[Proof of Theorem~\ref{th:sol.ts}]
Recall that $\nu_0, \nu_N$ are defined by 
\begin{align}
   \frac{\ud \mu_0}{\ud \nu_0}(y) =\;& \int p_N(\bx_N \mid y) \nu_N(\ud \bx_N), \label{eq:tsb2} \\ 
   \frac{\ud \mu_N}{\ud \lambda}(\bx_N) =\;& \rho_N(\bx_N)^{\frac{ 1+\beta }{\beta}} \int p_N(\bx_N \mid y) \nu_0(\ud y). \label{eq:tsb1} 
\end{align}
The proof is essentially the same as that of Theorem~\ref{th:sol.ts}. 
First, we need to derive a result analogous to Lemma~\ref{lm:girsanov}, which we give in Lemma~\ref{lm:girsanov.ts} below. 
We will apply Lemma~\ref{lm:girsanov.ts} with $g = \rho_N$.  Recall that $h_j$ is defined by 
\begin{equation}\label{eq:def.hj2}
\begin{aligned}
    h_j(x, t; \bx_j) = \E\left[\rho_N(\bx_{j}, X_{t_{j+1}}, \dots, X_{t_N} ) \,\Big|\, X_t = x \right], \text{ for } (x, t) \in \R^d \times [t_j, t_{j + 1}). 
\end{aligned}
\end{equation} 
Note that the conditions $\E \rho_N(\bX_N) < \infty$ and $h_j > 0$ can be verified by the same argument as that used in the proof of Theorem~\ref{th:sol.RSB}. 
By~\eqref{eq:tsb2}, we have 
\begin{equation}\label{eq:tsb3}
 h_0(y, 0) = \E[ \rho_N(\bX_N) \mid X_0 = y] = \frac{\ud \mu_0}{\ud \nu_0}(y). 
\end{equation}
Define 
\begin{equation}\label{eq:densities2}
\begin{aligned}
   f_N(\bx_N) = \frac{\ud \mu_N }{\ud \lambda}(\bx_N), \quad k_N(\bx_N) = \int p_N(\bx_N \mid y) \nu_0(\ud y), \quad q_N^u(\bx_N) = \frac{\ud \Law(\bX^u_N)}{\ud \lambda}(\bx_N).
\end{aligned}
\end{equation} 
We can rewrite~\eqref{eq:tsb1} as $\rho_N = ( f_N / k_N )^{\beta/(1 + \beta)}$. 

By Lemma~\ref{lm:girsanov.ts} and Lemma~\ref{lm:min.KL1}, for any admissible control $u$,
\begin{align*} 
    J_\beta^N(u) \geq  - \log C  -   \KL(\mu_0, \nu_0), \quad \text{where } C = \int  f_N(\bx_N)^{\beta/(1+\beta)} k_N(\bx_N)^{1/(1+\beta)} \ud \bx_N.  
\end{align*} 
To prove that the equality is attained by the control $u_t^* = (\cT_N \rho_N)(X^{u^*}_t, t, \bX^{u^*}_N)$, it remains to show that 
\begin{equation} 
    q_N^*(\bx_N) = C^{-1} f_N(\bx_N)^{\beta/(1+\beta)} k_N(\bx_N)^{1/(1+\beta)}. 
\end{equation}
where $q_N^* = q_N^{u^*}$. To find $q_N^*$, we can mimic the proof of Theorem~\ref{th:h}. 
It is not difficult to verify that the law of $X^*$ under $\P$ is the same as the law of $X$ under $\Q$, where $\Q$ is defined by 
\begin{align}
     \frac{\ud \Q}{\ud \P} =  \frac{ \E[ \rho_N( \bX_N ) \mid \cF_T ] }{ \E[ \rho_N( \bX_N )  \mid \cF_0] }  
    = \frac{  \rho_N(\bX_N) }{ h_0(X_0, 0) }.
\end{align}
So for any bounded and measurable function $\ell$, we have
\begin{align*}
    &\E_{\Q} [ \ell( \bX_N) ] = \E \left[  \ell( \bX_N) \frac{\rho_N(\bX_N)  }{ h_0(X_0, 0)} \right] 
    = \int  \ell (\bx_N)  \rho_N(\bx_N) \left\{ \int \frac{ p_N(\bx_N \mid y) }{h_0(y, 0)} \mu_0(\ud y) \right\} \ud \bx_N.
\end{align*}
It thus follows from~\eqref{eq:tsb3} that
\begin{equation}
    q_N^*(x) = \rho_N(\bx_N) \int  \frac{p_N(\bx_N \mid y)}{h_0(y, 0)}   \mu_0(\ud y)  
    =   f_N(\bx_N)^{\beta/(1 + \beta)} 
    k_N(\bx_N)^{1/(1 + \beta)}. 
\end{equation} 
The rest of the proof is identical to that of Theorem~\ref{th:sol.RSB}. 
\end{proof}

\begin{lemma}\label{lm:girsanov.ts}
Let $u$ be an admissible control and $g \colon \R^{d \times N} \rightarrow [0, \infty)$ be a measurable function such that $\E[ g(\bX_N) ] < \infty$.  
Let $h_j$ be as given in~\eqref{eq:def.hj}, and assume $h_j > 0$ for each $j$. 
For the cost $J_\beta^N$ defined in~\eqref{eq:value.SB2}, we have  
\begin{align*}
    J_\beta^N(u) \geq \beta \, \KL (\Law(\bX^u_N), \mu_N) +  
 \E[\log g( \bX^u_N)] - \int \log h_0(x, 0) \mu_0(\ud x). 
\end{align*} 
The equality is attained when $u_t = (\cT_N g)(X^u_t, t, \bX^u_N)$. 
\end{lemma}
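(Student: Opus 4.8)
The plan is to reduce Lemma~\ref{lm:girsanov.ts} to the single-period argument of Lemma~\ref{lm:girsanov} by exploiting the piecewise structure of the conditional expectation in~\eqref{eq:hj1}. The central object is the martingale $M_t = \E[\,g(\bX_N) \mid \cF_t\,]$, which by~\eqref{eq:hj1} equals $h_j(X_t, t; \bX_j)$ on each interval $[t_j, t_{j+1})$. First I would verify that $M$ is a continuous, uniformly integrable $\P$-martingale with $M_0 = h_0(X_0, 0)$ and $M_T = g(\bX_N)$: integrability comes from the hypothesis $\E[g(\bX_N)] < \infty$, while continuity across the breakpoints $t_1, \dots, t_{N-1}$ follows from the tower property, since freezing $\bX_j = (\bX_{j-1}, X_{t_j})$ yields $h_{j-1}(X_{t_j}, t_j; \bX_{j-1}) = h_j(X_{t_j}, t_j; \bX_j)$.

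Next, I would apply Theorem~\ref{th:h} on each interval $[t_j, t_{j+1})$ separately, conditionally on $\cF_{t_j}$. Given $\bX_j$, which is $\cF_{t_j}$-measurable, the map $x \mapsto h_j(x, \cdot\,; \bX_j)$ is precisely the space-time harmonic function of Theorem~\ref{th:h} with terminal time $t_{j+1}$; hence it is smooth, solves the backward equation, and the logarithmic transformation gives, via It\^o's lemma as in~\eqref{eq:ito.g},
\begin{align*}
\log M_{t_{j+1}} - \log M_{t_j} = - \int_{t_j}^{t_{j+1}} \frac{\sigma^2}{2} \| \nabla \log h_j(X_s, s; \bX_j) \|^2 \ud s + \int_{t_j}^{t_{j+1}} \sigma \nabla \log h_j(X_s, s; \bX_j) \ud W_s .
\end{align*}
Summing over $j = 0, \dots, N-1$ and using the continuity of $M$ to telescope, I obtain the global analogue of~\eqref{eq:ito.g}, where by~\eqref{eq:def.ut.ts} the stochastic integrand $\sigma \nabla \log h_j$ is exactly $(\cT_N g)(X_s, s, \bX_N)/\sigma$. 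In particular $\E[M_T/M_0] = 1$, so $\ud \Q / \ud \P = g(\bX_N)/h_0(X_0, 0)$ defines a probability measure under which the law of $X$ coincides with that of the $\cT_N g$-controlled process, the time-series version of part (ii) of Theorem~\ref{th:h}.

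With these ingredients the remaining steps duplicate the proof of Lemma~\ref{lm:girsanov}. For arbitrary admissible $u$, Novikov's condition holds by admissibility, and Girsanov's theorem together with $\E[g(\bX_N)/h_0(X_0, 0)] = 1$ gives
\begin{align*}
1 = \E\left[ \frac{g(\bX^u_N)}{h_0(X^u_0, 0)} \exp\left\{ - \int_0^T \frac{u_t}{\sigma} \ud W_t - \int_0^T \frac{\|u_t\|^2}{2\sigma^2} \ud t \right\} \right] .
\end{align*}
Applying Jensen's inequality to the convex function $\exp$, taking logarithms, and using $\E[\int_0^T u_t \ud W_t] = 0$ and $\Law(X^u_0) = \mu_0$ yields the claimed lower bound after adding $\beta\,\KL(\Law(\bX^u_N), \mu_N)$ to both sides. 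The equality assertion then follows because, when $u_t = (\cT_N g)(X^u_t, t, \bX^u_N)$, the law of $X^u$ is exactly $\Q$, and the energy identity obtained by applying part (iii) of Theorem~\ref{th:h} on each interval and summing forces the Jensen step to be tight.

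I expect the main obstacle to be the first two steps rather than the final estimate: rigorously establishing the continuity and martingale property of $M$ at the breakpoints, and legitimizing the conditional application of Theorem~\ref{th:h}. In particular, freezing $\bX_j$ and transferring the single-interval conclusions of Theorem~\ref{th:h} requires a regular conditional distribution or strong Markov argument, and one must check that the smoothness, positivity, and uniform integrability hypotheses hold on each interval simultaneously. Once the telescoped logarithmic transformation and the change of measure $\Q$ are in place, the Girsanov--Jensen estimate is a routine repetition of Lemma~\ref{lm:girsanov}.
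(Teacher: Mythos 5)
Your proposal is correct and follows essentially the same route as the paper's proof: apply Theorem~\ref{th:h} interval by interval, using the tower property to nest the $h_j$'s at the breakpoints, sum the per-interval energy identities to handle the equality case, and run the Girsanov--Jensen argument of Lemma~\ref{lm:girsanov} for the general lower bound. Your packaging via the martingale $M_t = \E[g(\bX_N)\mid\cF_t]$ is just a more explicit restatement of~\eqref{eq:hj1}, and the technical caveats you flag are exactly the points the paper also resolves by invoking Theorem~\ref{th:h} conditionally on each $\bx_j$.
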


\begin{proof}[Proof of Lemma~\ref{lm:girsanov.ts}]
The SDE~\eqref{eq:CSDE} with control $u_t^* = (\cT_N g)(X^{u^*}_t, t, \bX^{u^*}_N)$ can be expressed as  
\begin{equation}\label{eq:h.sde.ts}
\begin{aligned}
    &X_0^* = X_0, 
    \\& \ud X_t^* = \left[ b(X_t^*, t) + \sigma^2 \nabla \log h_j (X_t^*, t; \bX_j^*) \right] \ud t +\sigma \ud W_t, \text{ for } t \in [t_j, t_{j+1}), 
\end{aligned}
\end{equation} 
where we write $X^* = X^{u^*}$ and $h_j$ is as given in~\eqref{eq:def.hj}.  
By the tower property,  we have 
\begin{equation}\label{eq:tower.hj}
    h_j(x, t; \bx_j) = \E\left[  h_{j+1}(X_{t_{j+1}}, t_{j+1}; \bx_j, X_{t_{j+1}}) \mid X_t = x \right], 
\end{equation} 
where $h_N$ is defined by $h_N(x, t_N; \bx_{N-1}, x) = g(\bx_{N-1}, x)$, and $X$ is the solution to the uncontrolled process~\eqref{eq:udp}.   
The assumption $\E[ g( \bX_N) ] < \infty$ implies that $\E[h_j( X_{t_j}, t_j; \bx_{j-1}, X_{t_j})] < \infty$ for almost every $\bx_{j-1}$. 
Hence, for each $j$ and $x_j \in \R^d$,  Theorem~\ref{th:h} implies that there exists a weak solution to the SDE~\eqref{eq:h.sde.ts} on $[t_j, t_{j + 1}]$ with $X_{t_j}^* = x_j$. 
Moreover, 
\begin{equation} 
\begin{aligned}
\E \left[  \log \frac{ h_{j+1}( X_{t_{j+1}}^*, t_{j+1}; \bX_{j+1}^*) }{h_j(X_{t_j}^*, t_j; \bX_j^*)} \right] = \E\left[    \int_{t_j}^{t_{j+1}}  \frac{\sigma^2}{2} \| \nabla \log h_j (X_s^*, t; \bX_j^*)  \|^2 \ud s \right].  
\end{aligned}
\end{equation} 
Summing over all the $N$ time intervals, we get
\begin{align*} 
 \E \left[  \log \frac{ g(\bX^*_N ) }{h_0(X_0^*, 0)}  \right] 
= \E\left[    \sum_{j = 0}^{N-1}  \log \frac{ h_{j+1}( X_{t_{j+1}}^*, t_{j+1}; \bX_{j+1}^*) }{h_j(X_{t_j}^*, t_j; \bX_j^*)}   \right]   
= \E\left[     \int_0^T  \frac{ \|  u_t^*  \|^2 }{2 \sigma^2 }  \ud s \right].  
\end{align*}

Now consider an arbitrary admissible control $u$. As in the proof of Lemma~\ref{lm:girsanov}, by Girsanov theorem, we have 
\begin{align*}
    1 =\;& \E\left[ \frac{g(\bX_{N})}{h_0(X_0, 0)} \right] \\
    =\; &  \E\biggr[  \frac{g(\bX_N^u)}{h_0(X_0^u, 0)} \exp\biggr\{ 
    -\int_0^T \frac{u_t}{\sigma} \ud W_t - \int_0^T \frac{ \|u_t\|^2}{2 \sigma^2} \ud t \biggr\}  \biggr] \\
    \geq\;& \exp\biggr\{ \E\biggr[  \log \frac{g(\bX_{N}^u)}{h_0(X_0^u, 0)} 
    -\int_0^T \frac{u_t}{\sigma} \ud W_t -\int_0^T \frac{ \|u_t\|^2}{2 \sigma^2} \ud t   \biggr] \biggr\}, 
\end{align*}  
which yields that 
\begin{align*} 
  &\E\left[\int_0^T \frac{ \|u_t\|^2}{2 \sigma^2} \ud t   \right]
  \geq \E\left[  \log \frac{g(\bX_{N}^u)}{h_0(X_0^u, 0)}\right]=\E[\log g( \bX^u_N)] - \int \log h_0(x, 0) \mu_0(\ud x). 
\end{align*}  
The asserted result thus follows. 
\end{proof}

\section{Proof of the Existence of Solution to SSB} \label{sec:existence} 

% \begin{definition}[Hilbert metric~\citep{chen2016entropic}] Let $\mathcal{S}$ be a real Banach space and $\mathcal{K}$ be a closed solid cone in $\mathcal{S}$ with nonempty interior; that is, $\mathcal{K}$ is a closed subset of $\mathcal{S}$ such that (i) $\mathcal{K}+\mathcal{K} \subset \mathcal{K}$, (ii) $\alpha \mathcal{K} \subset \mathcal{K}$ for any real $\alpha \geq 0$, and (iii) $\mathcal{K} \cap (-\mathcal{K})=\{0\}$. 
% For $x, y \in \mathcal{S}$, we write  $x \preceq y$ if and only if $y - x \in \mathcal{K}$.  
% For any $x, y \in \mathcal{K}^{+} \coloneqq \mathcal{K} \backslash\{0\}$, define
% \begin{equation}
% \begin{aligned}
% M(x, y) =\inf \{c \colon x \preceq c y\},  \quad 
% m(x, y) =\sup \{c \colon   c y \preceq x\}, 
% \end{aligned}
% \end{equation}
% where we use the convention that $\inf \emptyset=\infty$. The Hilbert (pseudo-)metric  $d_H$ on $\mathcal{K}^{+}$ is defined  by
% \begin{equation}
% d_H(x, y) = \log \frac{M(x, y)}{m(x, y)}, 
% \end{equation}
% which is indeed a metric on the space of rays of $\mathcal{K}^{+}$. 
% \end{definition}

We first recall the definition of the Hilbert metric~\citep{chen2016entropic}. 
For $K \subset \mathbb{R}^d$ and $1 \leq p \leq \infty$, let $\mathcal{L}^p(K)$ denote the $L^p$ space of functions defined on $K$. Define 
\begin{equation}\label{eq:def.L.plus}
\mathcal{L}_{+}^p(K) =  \{f \in \mathcal{L}^p(K) \colon  \inf_{x \in K} f(x) > 0 \}, 
\quad 
\mathcal{L}_{0}^p(K) =  \{f \in \mathcal{L}^p(K) \colon  \inf_{x \in K} f(x) \geq  0 \}. 
\end{equation} 
Since $\mathcal{L}_{0}^p(K)$ is a closed solid cone in the Banach space $\mathcal{L}^p(K)$, we can define a Hilbert metric on it. For any $x, y \in \mathcal{L}_{0}^p(K) \setminus \{0\}$ (where $0$ denotes the constant function equal to 0),  define 
\begin{equation}\label{eq:def.Mm.dH}
\begin{aligned}
M(x, y) =\inf \{c \colon x \preceq c y\},  \quad 
m(x, y) =\sup \{c \colon   c y \preceq x\}, 
\end{aligned}
\end{equation}
where $x \preceq y$ means $y - x \in \mathcal{L}_{0}^p(K)$ and we use the convention  $\inf \emptyset=\infty$. 
The Hilbert  metric  $d_H$ on $\mathcal{K} \setminus \{0\}$ is defined  by
\begin{equation}
d_H(x, y) = \log \frac{M(x, y)}{m(x, y)}. 
\end{equation}
Note that $d_H$ is only a pseudometric on $\mathcal{K} \setminus \{0\}$, but it is a metric  on the space of rays of $\mathcal{K} \setminus \{0\}$. 
 
\begin{proof}[Proof of Theorem~\ref{th:exist.compact}]
The proof is adapted from \citet[Proposition 1]{chen2016entropic}.  We will show the existence of strictly positive and integrable functions $\rho_0, \rho_T$ such that 
\begin{align}
    f_0(y) &= \rho_0(y) \int_{K_T} p(x, T \mid y, 0) \rho_T(x) \ud x,  \label{eq:equation1} \\
    f_T(x) &= \rho_T(x)^{(1 +\beta) / \beta} \int_{K_0} 
    p(x, T \mid y, 0) \rho_0(y) \ud y.  \label{eq:equation2}
\end{align}  
Let $\hat{\psi}_0 \in \mathcal{L}_{+}^\infty(K_0)$ be our guess for  $f_0  / \rho_0$. We can update $\hat{\psi}_0$ as follows.
\begin{enumerate}
    \item Set $\hat{\rho}_0(y) = f_0(y) / \hat{\psi}_0(y)$.
    \item Set $\hat{\psi}_T(x) = \left( \int_{K_0} 
    p(x, T \mid y, 0) \hat{\rho}_0(y) \ud y \right)^{\beta / (1 + \beta)}$, which is an estimate for $f_T^{\beta/(1 + \beta)} / \rho_T$ by~\eqref{eq:equation2}. 
    \item Set $\hat{\rho}_T(x) = f_T^{\beta/(1 + \beta)} / \hat{\psi}_T(x)$. 
    \item By~\eqref{eq:equation1}, update the estimate for $\psi_0$ by $\hat{\psi}_0^{\rm{new}}(y) =  \int_{K_T} p(x, T \mid y, 0) \hat{\rho}_T(x) \ud x$. 
\end{enumerate}
Denote this updating scheme by $ \hat{\psi}_0^{\rm{new}} = \mathcal{O}(\hat{\psi}_0)$.  
Note that for any $c \in (0, \infty)$ and $\psi \in \mathcal{L}_{+}^\infty(K_0)$, 
\begin{equation}\label{eq:scaling.O}
    \mathcal{O} (c \psi) = c^{\beta/(1 + \beta)} \mathcal{O} (\psi). 
\end{equation}
In Lemma~\ref{lm:O} below, we prove that $\mathcal{O}$ is a   strict contraction mapping from $\mathcal{L}_{+}^\infty(K_0)$ to $\mathcal{L}_{+}^\infty(K_0)$ with respect to the Hilbert metric. 
To prove the existence of $\rho_0, \rho_T$, it is sufficient to show that  $\mathcal{O}$ has a fixed point $\psi_0 \in \mathcal{L}_{+}^\infty(K_0)$, since we can  set 
\begin{align*}
    \rho_0(y) = \frac{f_0(y)}{\psi_0(y)}, \quad  \rho_T(x) = \left(\frac{f_T(x)}{\int_{K_0} p(x, T\mid y, 0) \rho_0(y) \, \ud y}\right)^{\frac{\beta}{1+\beta}}, 
\end{align*} 
which must satisfy~\eqref{eq:equation1} and~\eqref{eq:equation2} and $\rho_0 \in \mathcal{L}_0^1 (K_0), \rho_T \in \mathcal{L}_0^1 (K_T)$ (see the proof of Lemma~\ref{lm:O} for why $\rho_0, \rho_T$ are integrable). But note that we cannot apply Banach fixed-point theorem to $\mathcal{O}$. 

To find the fixed point of $\mathcal{O}$, we first consider its normalized version $\tilde{\mathcal{O}}$, defined by $\tilde{\mathcal{O}} (\psi) =  \mathcal{O}  (\psi) / \|  \mathcal{O} (\psi ) \|_2$. Let 
\begin{equation} 
E= \{g \in \mathcal{L}_{+}^\infty(K_0) \colon \|g \|_2=1 \}, 
\end{equation}
denote the domain and range of $\tilde{\mathcal{O}}$. 
Since $\mathcal{O}$ is a strict contraction mapping on $ \mathcal{L}_{+}^\infty(K_0)$ with respect to the Hilbert metric (which is invariant under scaling), $\tilde{\mathcal{O}}$ is also a strict contraction mapping and thus continuous (with respect to the Hilbert metric) on $E$.   
If $g \in \mathcal{L}_{+}^\infty(K_0)$ is a fixed point of $\tilde{O}$, then  $\|\mathcal{O}(g)\|_2^{1+\beta} g \in \mathcal{L}_{+}^\infty(K_0)$ is a fixed point of $\mathcal{O}$, since  
\begin{align*}
\mathcal{O}\left(\|\mathcal{O}(g)\|_2^{1+\beta} g\right)=\|\mathcal{O}(g)\|_2^{\beta}  \mathcal{O}(g)
= \|\mathcal{O}(g)\|_2^{1 + \beta}  \tilde{\mathcal{O}}(g) 
= \|\mathcal{O}(g)\|_2^{1 + \beta} g, 
\end{align*}
where the first equality follows from~\eqref{eq:scaling.O}. 
Moreover, since $d_H$ is a metric on the rays, any other fixed point of $\mathcal{O}$ must have the form $c g$ for some constant $c > 0$. But the same argument shows that $c$ must equal  $\|\mathcal{O}(g)\|_2^{1+\beta}$, and thus the fixed point of $\mathcal{O}$ is unique. 
This further yields the uniqueness of $(\rho_0, \rho_T)$. 

So it only remains to prove that $\tilde{O}$ has a fixed point in 
$\mathcal{L}_{+}^\infty(K_0)$.  
The proof for this claim is essentially the same as that in~\citet{chen2016entropic}. 
Pick arbitrarily $\psi^{(0)} \in \mathcal{L}_{+}^\infty(K_0)$ and let $g_0 = \psi^{(0)} / \| \psi^{(0)} \|_2$.
For $k = 1, 2, \dots$,  define 
\begin{equation}
    g_k \coloneqq \frac{\mathcal{O}^k (\psi^{(0)})}{\left\|\mathcal{O}^k (\psi^{(0)})\right\|_2} = \tilde{\mathcal{O}} (g_{k - 1}), 
\end{equation}
where the second equality follows from~\eqref{eq:scaling.O}.  
Each $g_k$ is in $E$ and well-defined as $\mathcal{O}^k (\psi^{(0)}) \in \mathcal{L}_{+}^\infty(K_0) \subset \mathcal{L}_{2}(K_0)$.  
Since $\tilde{\mathcal{O}}$ is  a strict contraction mapping with respect to $d_H$, $\{g_k\}_{k \geq 0}$ is a Cauchy sequence with respect to $d_H$. 
Using the inequality $\| g_k - g_l \|_2 \leq e^{d_H(g_k, g_l)}- 1$ (see~\citet{chen2016entropic}), we find that $\{g_k\}_{k \geq 0}$ is also Cauchy with respect to the $L^2$-norm, and thus there exists $g \in \mathcal{L}^2_0(K_0)$ such that $\lim_{k \rightarrow \infty} \| g_k - g\|_2 = 0$ and $\| g \|_2 = 1$. 
Next, we argue that $\{g_k\}_{ k \geq 0}$ is uniformly bounded from below and above and also uniformly equicontinuous. 
To show the uniform boundedness, we first observe that $\| g_k \|_2 = 1$ implies 
\begin{equation}\label{eq:unif.bound.1}
   \sup_x g_k(x) \geq \frac{1}{\sqrt{ \lambda(K_0) }} \geq \inf_x g_k(x). 
\end{equation}
Further, since $p(x, T \mid y, 0)$ is bounded in $(x, y)$ and recalling the last step in the construction of $\mathcal{O}$, 
we have 
\begin{equation}\label{eq:unif.bound.2}
\frac{\sup_x (\mathcal{O} \psi)(x)}{\inf_x (\mathcal{O} \psi)(x) } \leq \epsilon^{-1} 
\end{equation}
for any $\psi \in  \mathcal{L}_{+}^\infty(K_0)$, where $\epsilon \in (0, 1]$  is some constant independent of $\psi$. Combining~\eqref{eq:unif.bound.1} and~\eqref{eq:unif.bound.2}, we get  
\begin{equation}
\frac{\epsilon}{ \sqrt{\lambda ( K_0) }} \leqslant \inf_x g_k(x) \leqslant \sup_x g_k(x) \leqslant \frac{1}{\epsilon \sqrt{\lambda( K_0) }}\label{eq:equation3}. 
\end{equation}
Note that the uniform boundedness of $\{g_k\}$ also implies $g \in \mathcal{L}^\infty_+ (K_0)$.  
The uniform equicontinuity of $\{g_k\}$ can be proved by using the uniform continuity of the transition density $p$. 
Finally, by Arzel\`{a}–Ascoli Theorem, there is a subsequence $\{g_{k_n}\}$ such that $g_{k_n}$ converges to $g$ uniformly with respect to the $L^2$-norm and $g$ is also uniformly continuous. 
This implies $d_H( g_{k_n}, g) \rightarrow 0$ and thus $d_H(g_k, g) \rightarrow 0$. By the continuity (with respect to $d_H$) of $\tilde{O}$, we can interchange the limit operation with $\tilde{\mathcal{O}}$, thereby establishing $g$ as the fixed point of $\tilde{\mathcal{O}}$.  
\end{proof}

\begin{lemma}\label{lm:O}
For $\psi \in \mathcal{L}_{+}^\infty(K_0)$, define 
\begin{equation}
    (\mathcal{O}\psi)(x) =    \int_{K_T} p(z, T\mid x, 0) \left( \frac{ f_T(z) }{ 
    \int_{K_0} p(z, T\mid y, 0) f_0(y) \psi(y)^{-1} \ud y
    } \right)^{\beta / (1+\beta) } \ud z.
\end{equation}
The operator $\mathcal{O}$ is a strict contraction mapping from $\mathcal{L}_{+}^\infty(K_0)$ to 
$\mathcal{L}_{+}^\infty(K_0)$ with respect to the Hilbert metric. 
\end{lemma}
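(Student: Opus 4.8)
The plan is to decompose the operator $\mathcal{O}$ into a composition of simpler maps and track how each one behaves with respect to the Hilbert metric $d_H$. Writing out the definition, $\mathcal{O}$ factors as
\begin{equation*}
\psi \;\xmapsto{\;\mathcal{A}\;}\; \int_{K_0} p(z,T\mid y,0) f_0(y)\,\psi(y)^{-1}\,\ud y \;\xmapsto{\;\mathcal{B}\;}\; \left(\frac{f_T(z)}{\,\cdot\,}\right)^{\beta/(1+\beta)} \;\xmapsto{\;\mathcal{C}\;}\; \int_{K_T} p(z,T\mid x,0)\,(\cdot)\,\ud z.
\end{equation*}
The key structural fact, due to Birkhoff, is that a positive integral kernel acts as a \emph{non-expansive} map in the Hilbert metric, and is a \emph{strict} contraction with contraction ratio governed by the kernel's projective diameter when that diameter is finite. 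On the compact sets $K_0, K_T$ the kernel $p(x,T\mid y,0)$ is continuous and bounded away from $0$ and $\infty$ by hypothesis (iii), so both integral maps $\mathcal{A}$ and $\mathcal{C}$ have finite projective diameter and hence are contractions.

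First I would handle the pointwise maps. The inversion step $\psi \mapsto \psi^{-1}$ is an \emph{isometry} for $d_H$, since $M(\psi^{-1},\varphi^{-1}) = 1/m(\psi,\varphi)$ and $m(\psi^{-1},\varphi^{-1}) = 1/M(\psi,\varphi)$, so it leaves $d_H$ unchanged; multiplication by the fixed positive function $f_0$ likewise does not affect $d_H$ on rays. The crucial gain comes from the exponent $\beta/(1+\beta)$: I would invoke the elementary fact that for any exponent $r \in [0,1]$, the map $\varphi \mapsto \varphi^{r}$ satisfies $d_H(\varphi^r, \phi^r) \le r\, d_H(\varphi, \phi)$, because raising to the power $r$ multiplies both $\log M$ and $\log m$ by exactly $r$. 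With $r = \beta/(1+\beta) < 1$, this single step is already a strict contraction with ratio $r$. The step $\psi \mapsto f_T/\psi$ combines the isometric inversion with fixed-function multiplication and so is an isometry. Composing, $d_H(\mathcal{O}\psi, \mathcal{O}\varphi) \le \frac{\beta}{1+\beta}\, d_H(\psi,\varphi)$, since the two integral operators are (at least) non-expansive and every pointwise step is either isometric or contracts by factor $r$.

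Along the way I would verify the range condition: that $\mathcal{O}$ maps $\mathcal{L}_+^\infty(K_0)$ into itself and that the resulting $\rho_0,\rho_T$ are integrable. Boundedness above and below of $\mathcal{O}\psi$ follows from the uniform bounds on $p$ over the compact sets, together with $f_0 \in L^1$ and $f_T \in L^1$; the inner integral $\int_{K_0} p f_0 \psi^{-1}$ is bounded above and below by positive constants when $\psi \in \mathcal{L}_+^\infty$, so raising to the power $r$ and integrating the bounded kernel against $f_T^{r}$ keeps the output in $\mathcal{L}_+^\infty(K_0)$. Integrability of $\rho_0 = f_0/\psi_0$ is immediate from $f_0 \in L^1$ and $\psi_0$ bounded below, and $\rho_T$ is bounded since its numerator and denominator are both controlled on the compact support.

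\textbf{The main obstacle} is not the contraction estimate itself but justifying that the \emph{integral} steps $\mathcal{A}$ and $\mathcal{C}$ are genuinely non-expansive (or contracting) on the relevant cone — i.e., correctly applying Birkhoff's theorem on the projective diameter of a positive kernel and confirming the diameter is finite under hypothesis (iii). One must be careful that the strict contraction we ultimately need comes reliably from the exponent factor $r=\beta/(1+\beta)$, so that even if one treats $\mathcal{A}$ and $\mathcal{C}$ as merely non-expansive, the composite contraction ratio $r<1$ is already strict; this is precisely the simplification noted in Remark~\ref{rmk:proof.exist}, and it is why the finite-$\beta$ case is easier than the classical $\beta=\infty$ Schr\"{o}dinger bridge, where the contraction must be extracted entirely from the kernels.
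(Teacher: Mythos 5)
Your proposal is correct and follows essentially the same route as the paper: the paper writes $\mathcal{O} = \mathcal{E}_{T}\circ\mathcal{P}\circ\mathcal{I}\circ\mathcal{E}_{0}\circ\mathcal{I}$, with $\mathcal{I}$ the isometric inversion, $\mathcal{E}_0,\mathcal{E}_T$ the positive integral kernels handled via Birkhoff's theorem, and $\mathcal{P}\colon\varphi\mapsto\varphi^{\beta/(1+\beta)}$ supplying the strict contraction ratio $\beta/(1+\beta)$, exactly as in your decomposition (the paper absorbs the multiplication by $f_0$ into the kernel of $\mathcal{E}_0$ rather than treating it as a separate pointwise step). Your closing observation that the integral steps need only be non-expansive because the power map already contracts strictly is precisely the point the paper makes at the end of its proof and in Remark~\ref{rmk:proof.exist}.
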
 
\begin{proof}
We can express the operator $\mathcal{O}$ by 
%\begin{equation}\label{eq:cO}
    $\mathcal{O} = \mathcal{E}_{T}\circ\mathcal{P}\circ\mathcal{I}\circ\mathcal{E}_{0}\circ\mathcal{I}$, 
%\end{equation}
and define $\mathcal{I}, \mathcal{E}_0, \mathcal{E}_T, \mathcal{P}$ by  \begin{equation}
\begin{aligned}
& \mathcal{I}\colon \mathcal{L}_{+}^\infty(K) \longrightarrow \mathcal{L}_{+}^\infty(K),  \quad \quad \; (\mathcal{I} \varphi)(x) = \varphi(x)^{-1}   \quad \text{for }  K = K_0, K_T, \\
& \mathcal{E}_{0}\colon \mathcal{L}_{+}^\infty(K_0) \longrightarrow \mathcal{L}_{+}^\infty(K_T), \quad   (\mathcal{E}_{0} \varphi)(x)  =   \int_{K_0} p(x, T\mid y, 0) f_0(y) \varphi(y) \ud y, \\
& \mathcal{E}_{T}\colon \mathcal{L}_{+}^\infty(K_T) \longrightarrow \mathcal{L}_{+}^\infty(K_0), \quad   (\mathcal{E}_{T} \varphi)(x) =  \int_{K_T} p(z, T\mid x, 0) f_T(z)^{\beta / (1+\beta) } \varphi(z) \ud z, \\
& \mathcal{P}\colon \mathcal{L}_{+}^\infty(K_T) \longrightarrow \mathcal{L}_{+}^\infty(K_T),  \quad  (\mathcal{P} \varphi)(x) = \varphi(x)^{\beta / 1+\beta}. 
\end{aligned}
\end{equation}  
It is worth explaining how the ranges of these operators are determined. First, if $\varphi \in \mathcal{L}_{+}^\infty(K) $, it is clear that $\mathcal{I} \varphi \in \mathcal{L}_{+}^\infty(K)$ and $\mathcal{P} \varphi \in \mathcal{L}_{+}^\infty(K)$. 
For $\mathcal{E}_0$, since we assume $K_0$ is compact and $p(x, T \mid y, 0)$ is continuous in $(x, y)$, 
for any $\varphi \in \mathcal{L}_{+}^\infty(K_0)$ there exists $\epsilon > 0$ such that 
\begin{align*}
 \epsilon =  \epsilon \int_{K_0}  f_0(y) \ud y \leq (\mathcal{E}_{0} \varphi)(x)  \leq \epsilon^{-1} \int_{K_0}  f_0(y) \ud y = \epsilon^{-1}. 
\end{align*}
The argument for $\mathcal{E}_T$ is similar, and note that by H\"{o}lder's inequality,
\begin{align*}
   \int_{K_T}  f_T(z)^{\beta/(1 + \beta)} \ud z  \leq
   %\left( \int_{K_T}  f_T(z)  \ud z  \right)^{\beta/(1 + \beta)}  \left( \int_{K_T}    \ud z  \right)^{1/(1 + \beta)}  = 
   \lambda(K_T)^{1 / (1 + \beta)} < \infty. 
\end{align*}
Now we prove that $\mathcal{P}$ is a strict contraction. 
Let $\psi_1, \psi_2 \in \mathcal{L}_{+}^\infty(K_T)$.  By the definition given in~\eqref{eq:def.Mm.dH}, $$M\left(\mathcal{P}\left(\psi_1\right), \mathcal{P}\left(\psi_2\right)\right)=M\left(\psi_1, \psi_2\right)^{\beta / 1+\beta}  \text{and} \quad 
m\left(\mathcal{P}\left(\phi_1\right), \mathcal{P}\left(\phi_2\right)\right)=m\left(\phi_1, \phi_2\right)^{\beta / 1+\beta},$$ 
which implies 
\begin{equation}
 d_H\left(\mathcal{P}\left(\phi_1\right), \mathcal{P} \left(\phi_2\right)\right)=\log \left(\frac{M\left(\mathcal{P}\left(\phi_1\right), \mathcal{P}\left(\phi_2\right)\right)}{m\left(\mathcal{P}\left(\phi_1\right), \mathcal{P}\left(\phi_2\right)\right)}\right) = \frac{\beta}{1+\beta} d_H\left(\phi_1, \phi_2\right) <  d_H\left(\phi_1, \phi_2\right), 
\end{equation}
since $\beta < \infty$.  
\cite{chen2016entropic} showed that the operators $\mathcal{E}_{0}, \mathcal{E}_{T}$ are strict contractions using Birkhoff's theorem, and that the operator $\mathcal{I}$ is an isometry (all with respect to the Hilbert metric). Hence, $\mathcal{O}$ is a strict contraction. 
Note that for our problem, since $\mathcal{P}$ is a strict contraction, we actually only need $\mathcal{E}_{0}, \mathcal{E}_{T}$ to be contractions (not necessarily strict). 
\end{proof}

\section{Proof  of Lemma~\ref{lm:two.SB}}\label{sec:proof.two.SB}

We consider a more general setting. Assume that $X_t$ is given by 
\begin{equation}\label{eq:X2}
  X_0 = x_0, \text{ and }  \ud X_t= b(X_t, t) \ud t + \sigma \ud W_t, 
\end{equation}
where $b$ satisfies Assumption~\ref{ass.b}. 
The density function of $\Law(X_T)$ is $p(x, T \mid x_0, 0)$. 
Let $X^{\REF}$ be given by 
\begin{align} 
    \ud X_t^{\REF} =\;& b^{\REF}(X_t^{\REF}, t)   \ud t + \sigma \ud W_t,  \label{eq:Xref2} \\
    \text{ where }   
    b^{\REF}(x, t) = \;& b(x, t) + \sigma^2 \nabla  \log h^{\REF}(x, t), \\ 
    h^{\REF}(x, t) = \;& \E\left[  \frac{  f_{\REF}(X_T)  }{ p(X_T, T \mid x_0, 0)  }  \,\Big|\, X_t = x \right]. 
\end{align}
Theorem~\ref{th:dp1} implies that $\Law(X_T^{\REF}) = \muR$. 
Let $X^{\TGT}$ be given by 
\begin{align}
    \ud X^{\TGT}_t =\;&   b^{\TGT}(X_t^{\TGT}, t)   \ud t + \sigma \ud W_t, \label{eq:Xtgt2} \\ 
\text{ where }  
    b^{\TGT}(x, t) = \;& b^{\REF}(x, t) + \sigma^2 \nabla  \log h^{\TGT}(x, t), \\ 
h^{\TGT}(x, t) = \;& \E\left[  \left( \frac{ f_{\TGT}(X_T^{\REF}) }{ f_{\REF}(X_T^{\REF}) } \right)^{\beta/(1+\beta)} \,\Big|\, X_t^{\REF} = x \right], \label{eq:htgt}
\end{align}
which is the solution to Problem~\ref{problem1} with $X^{\REF}$ being the reference process and $\muT$ being the target distribution. 
We now prove that 
\begin{equation}\label{eq:goal.two.SB}
b^{\TGT}(x, t)    =  b(x, t) + \sigma^2 \nabla \log \E\left[    \frac{  f_{\REF}(X_T)^{ \frac{1}{1+\beta}}  f_{\TGT}(X_T)^{\frac{\beta}{1+\beta}} }{ p(X_T, T \mid x_0, 0)  } \,\Big|\, X_t  = x \right].
\end{equation} 
That is, $X^{\TGT}$ is also the solution to Problem~\ref{problem0} with $X$ being the reference process and $\mu^*$ being the target distribution, where $\mu^*$ has un-normalized density $f_{\REF}^{1/(1+\beta)}  f_{\TGT}^{ \beta/(1+\beta)}$. Once this is proved, Lemma~\ref{lm:two.SB}  follows as a  special case with $b \equiv 0$ and $x_0 = 0$. 

\begin{proof} 
To prove~\eqref{eq:goal.two.SB}, we use part (ii) of Theorem~\ref{th:h}. Define a martingale 
\begin{align*}
   Z^{\REF}_t = \frac{h^{\REF}(X_t, t) }{h^{\REF}(X_0, 0)}. 
\end{align*}
Let $\Q^{\REF}$ be the probability measure given by $\ud \Q^{\REF} = Z^{\REF}_T \ud \P$. By part (ii) of Theorem~\ref{th:h}, the law of $X$ under $\Q^{\REF}$ is the same as the law of $X^{\REF}$ under $\P$. Applying the change of measure to~\eqref{eq:htgt} yields 
\begin{align*}
    h^{\TGT}(x, t) 
    = \;&  \E\left[  \left( \frac{ f_{\TGT}(X_T^{\REF}) }{ f_{\REF}(X_T^{\REF}) } \right)^{\beta/(1+\beta)} \frac{Z_T^{\REF}}{Z_t^{\REF}}    \,\Big|\, X_t = x \right], \\
    = \;& h^{\REF}(X_t, t)^{-1}\E\left[  \left( \frac{ f_{\TGT}(X_T^{\REF}) }{ f_{\REF}(X_T^{\REF}) } \right)^{\beta/(1+\beta)} h^{\REF}(X_T, T)  \,\Big|\, X_t = x \right], \\
    =\;& h^{\REF}(X_t, t)^{-1}\E\left[   \frac{  f_{\REF}(X_T)^{ \frac{1}{1+\beta}}  f_{\TGT}(X_T)^{\frac{\beta}{1+\beta}} }{ p(X_T, T \mid x_0, 0)  }  \,\Big|\, X_t = x \right].
\end{align*}
Since
\begin{align*}
    b^{\TGT}(x, t) &= b^{\REF}(x, t) + \sigma^2 \nabla  \log h^{\TGT}(x, t)  \\
    &= b(x, t) + \sigma^2 \nabla  \log h^{\REF}(x, t) 
    + \sigma^2 \nabla  \log h^{\TGT}(x, t),  
\end{align*}
we obtain~\eqref{eq:goal.two.SB}. 
\end{proof}

\section{MNIST Example} \label{sec:mnist.supp} 
Figure~\ref{fig:DT} visualizes the 50 images in the data set $\DT$, which are obtained by adding Gaussian noise to the original images in MNIST.
Figure~\ref{fig:path2} shows the new images generated by the two-stage Schr\"{o}dinger bridge  algorithm of~\citet{pmlr-v139-wang21l} using only $\DT$ as the input.  

Table~\ref{tabel:IS} shows the inception scores~\citep{salimans2016improved} for our generated images and the images of digit 8 in MNIST. 
The score of our samples for $\beta = 1.5$ is slightly higher than that of the digit 8 in MNIST dataset, suggesting that our generated images of digit 8 exhibit a greater degree of variability than those in MNIST. 
Additionally, when $\beta = 100$, our score aligns closely with that of $\DT$ (i.e., the noisy digit 8 images from MNIST), 
indicating that our method can recover the images in the target data set by using a large $\beta$. 
For small values of $\beta$, the scores of our generated images are higher than that of digit 8 images in MNIST,  primarily because the reference dataset (consisting of the other digits) has greater variability and complexity. However, as shown in Figure~\ref{fig:path1}, when $\beta$ is small, we do not necessarily get images of digit 8. 

We also utilize t-SNE plots to visually characterize the distribution of our generated images. Figure~\ref{fig:tsne-3}  illustrates that our samples come from the geometric mixture distribution interpolating between  the noisy images of digit 8 and the clean images of other digits. Figure~\ref{fig:tsne-4} demonstrates that SSB samples with $\beta = 1.5 $ are positioned closer to the clean images of digit 8 compared to the samples obtained with $\beta = 100$.  

In our code, we use the neural network model of~\citet{song2019generative} for training the score functions and use the neural network model of~\citet{pmlr-v139-wang21l} for training the density ratio function.

\begin{minipage}{0.45\linewidth}
\begin{figure}[H]
    \centering
    \includegraphics[width=0.9\linewidth]{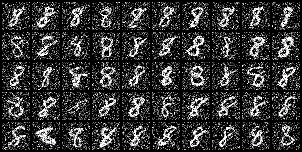} 
    \caption{Samples in $\DT$.}
    \label{fig:DT}
\end{figure}
\end{minipage}\hspace{1cm}
\begin{minipage}{0.52\linewidth}
\begin{figure}[H]
    \centering 
    \includegraphics[width=0.9\linewidth]{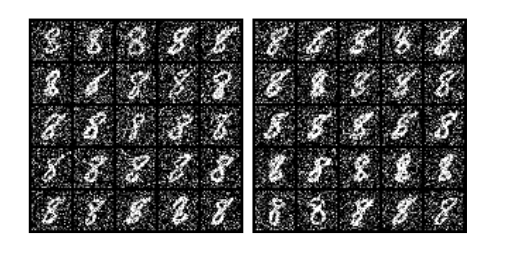}
    \caption{Samples Generated by the Algorithm of~\citet{pmlr-v139-wang21l} Using Only $\DT$.}
    \label{fig:path2}
\end{figure}
\end{minipage}

\begin{minipage}{0.45\linewidth}
\begin{figure}[H]
 \includegraphics[width=0.9\linewidth]{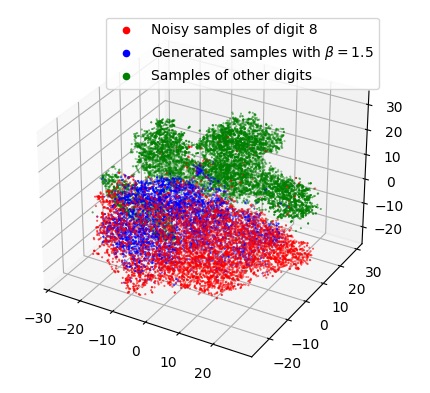}
    \caption{t-SNE Plot Illustrating the Geometric Mixture Distribution with $\beta = 1.5$.}
        \label{fig:tsne-3}
 \end{figure}
 \end{minipage} \hspace{2cm}
 \begin{minipage}{0.45\linewidth}
\begin{figure}[H]
       \includegraphics[width=0.9\linewidth]{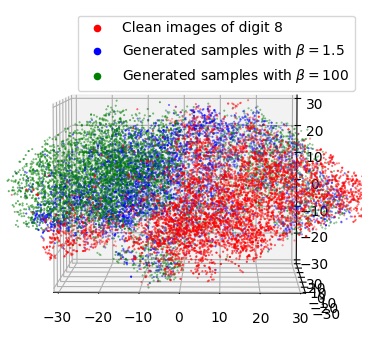}  
    \caption{t-SNE Plot Comparing SSB Samples with Clean Images in MNIST.  } \label{fig:tsne-4}
\end{figure}
\end{minipage} 
 
\begin{table}[H]
    \caption{Inception Scores (mean $\pm$ sd) for SSB Samples and the Images in MNIST.}\label{tabel:IS}
    \begin{center}
    \begin{tabular}{|l|c|}
      \hline
      Datasets (sample size $\approx 5$K) & Inception score   \\
      \hline
            SSB with $\beta =0$ &   6.70 $\pm$ 0.20 \\
            SSB with $\beta =0.25$ &   6.59 $\pm$ 0.15 \\
            SSB with $\beta =0.7$ &  5.12 $\pm$ 0.11\\
      SSB with $\beta =1.5$ &   3.51 $\pm$ 0.08 \\
      SSB with $\beta =4$ &  3.65 $\pm$ 0.04 \\  
      SSB with $\beta =100$ &  2.87 $\pm$ 0.04 \\
      Digit 8 in MNIST (clean) &   3.29 $\pm$ 0.04 \\
      Digit 8 in MNIST (noisy) & 2.96 $\pm$ 0.04\\
      \hline
    \end{tabular}
    \end{center}
\end{table}

\end{document}